\pgfplotsset{compat=1.10}
\newtheorem{lemma}{Lemma}[section]
\newtheorem{definition}{Definition}[section]
\newtheorem{remark}{Remark}[section]
\newtheorem{ass}{Assumption}[section]
\newcommand{\sw}[1]{}
\newcommand{\ed}[1]{}
\newcounter{numrellocal}
\newcommand{\at}[1]{}
\newcommand{\ar}[1]{}
\newcommand{\jwd}[1]{}
\newcommand{\cD}{\mathcal{D}}
\renewcommand{\thenumrellocal}{\arabic{numrellocal}}
\newcounter{numrelglobal}
\newcommand{\numrel}[2]{
  \stepcounter{numrellocal}
  \refstepcounter{numrelglobal}
  \ltx@label{#2}
  \overset{(\thenumrellocal)}{#1}
}
\title{Correcting Underrepresentation and Intersectional Bias for Classification}
\author[1]{Emily Diana}
\author[2]{Alexander Williams Tolbert}
\affil[1]{Toyota Technological Institute at Chicago}
\affil[2]{Emory University}
\begin{document}

\maketitle

\begin{abstract}
We consider the problem of learning from data corrupted by underrepresentation bias, where positive examples are filtered from the data at different, unknown rates for a fixed number of sensitive groups. We show that with a small amount of unbiased data, we can efficiently estimate the group-wise drop-out rates, even in settings where intersectional group membership makes learning each intersectional rate computationally infeasible. Using these estimates, we construct a reweighting scheme that allows us to approximate the loss of any hypothesis on the true distribution, even if we only observe the empirical error on a biased sample. From this, we present an algorithm encapsulating this learning and reweighting process along with a thorough empirical investigation. Finally, we define a bespoke notion of PAC learnability for the underrepresentation and intersectional bias setting and show that our algorithm permits efficient learning for model classes of finite VC dimension.



\end{abstract}

\section{Introduction}

Intersectionality is a concept that was introduced in the 1980s by the legal scholar \citet{crenshaw1989demarginalizing} to describe how multiple forms of oppression, such as racism, sexism, and classism, intersect to create unique experiences of discrimination for individuals who are members of multiple marginalized groups. However, there are different models on which intersectionality can be interpreted. \citet{curry2018killing} identifies at least two types that have dominated the literature: \textit{interactive} and \textit{additive} intersectionality. The \textit{additive} model posits ``a person with two or more intersecting identities experiences the distinctive forms of oppression associated with each of his or her subordinate identities summed together. The more devalued identities a person has, the more cumulative discrimination he or she faces'' \citep[p.~25-26]{curry2018killing}.\footnote{Our formal model is more closely aligned with the \textit{additive} model of intersectionality.} The \textit{interactive} model argues ``each person’s subordinate identities interact synergistically. People experience these identities as one, thus contending with discrimination as a multiply marginalized other'' \citep[p.~25-26]{curry2018killing}. These models aimed to predict that people with multiple subordinate identities would be subjected to more prejudice and discrimination than those with a single subordinate identity, supported by findings on various economic and social indicators \citep[p.~25-26]{curry2018killing}.

Research has shown that black males have been unfairly incarcerated on drug charges at significantly higher rates than white males. This example illustrates intersectional bias based on the intersection of race and gender, as black males experience discrimination that is distinct from both black females and white males \citep{alexander2011new,curry2017man}. These biased outcomes are often the result of historical data that reflect discriminatory patterns or biased decision-making processes. When these biased data are used to train algorithms, they perpetuate and amplify existing inequalities, further disadvantaging individuals who belong to multiple marginalized groups.  This reality confronts us with the philosophical distinction between the world as it appears in our datasets—distorted by bias—and the unobserved world that represents the true state of affairs. The philosophy of science distinguishes between \textit{observables} that we can see and the ground truth that we cannot observe \citep{bird1998philosophy}.

Building on this premise, our approach in this paper is to harness the insight that while we can readily observe certain distributions, they may be distorted by the biases inherent in data-generating processes. In particular, we consider the problem of learning from data corrupted by underrepresentation bias, where positive examples are filtered from the data at different, unknown rates for a fixed number of sensitive groups. Interestingly, there are many situations that arise in which one may actually have access to a small amount of unbiased data in addition to the much larger set of biased data (which we later refer to as a 'two-batch setting'). For example, consider the CUNY open admissions program, which ``would guarantee every high school graduate a seat in a community college, to be phased in from 1971 to 1975'' \cite{cuny}. During the years with a more traditional college admission process for CUNY, before and after this program, many potentially successful applicants from disadvantaged groups may have been filtered out or never applied. Therefore, their success could not have been observed (thus creating a biased dataset). However, a smaller, unbiased data set could be curated from the open admissions period and potentially used to model the biased data-generating mechanism.

Understanding this mechanism of bias and explicitly modeling it becomes a powerful tool, enabling us to use the biased data to approximate and eventually recover the true distribution which we can then use to train machine learning models. We show that with a small amount of unbiased data, we can efficiently estimate the group-wise drop-out rates, even in settings where intersectional group membership makes learning each intersectional rate computationally infeasible. Using these estimates, we construct a reweighting scheme that allows us to approximate the loss of any hypothesis on the true distribution, even if we only observe the empirical error on a biased sample. From this, we present an algorithm encapsulating this learning and reweighting process along with a thorough empirical investigation. Finally, we define a bespoke notion of PAC learnability for the underrepresentation and intersectional bias setting and show that our algorithm permits efficient learning for model classes of finite VC dimension.

\subsection{Related Work}
Significant research has been conducted on machine learning techniques to predict outcomes in the face of corrupted data \citep{angluin1988learning,cesa1999sample,diakonikolas2019distribution}. In addition, many approaches have been developed to correct data imbalances for machine learning applications. Over-sampling methods used to re-balance data include ADASYN \cite{He2008ADASYNAS}, MIXUP \cite{mixup}, and SMOTE \cite{smote} (including several adaptations \cite{Nguyen2009BorderlineOF, Batista2003BalancingTD, 10.1145/1007730.1007735, Douzas_2018, Han2005BorderlineSMOTEAN, Menardi2012TrainingAA}).  
Another line of research \cite{CondensedNearestNeighbor, Wilson1972AsymptoticPO, 4309523, kNN, tomek2} studies cluster-based approaches to under-sampling classes that are overrepresented in the data.  Propensity score re-weighting \cite{propensityScoreReweighting} is also a popular approach used in the causal literature to account for group size differences. Finally, generative AI methods have recently been used for data augmentation and to address data-imbalance concerns, as studied in \cite{10.1007/978-3-030-90963-5_39, 8869910}. Each of these techniques, however, requires knowledge of the target distribution over these different groups, whereas our approach uses the existence of a small sample of data from the true distribution to learn this distribution and then reweight the biased dataset during training to better mimic the true distribution. Another closely related area of study is domain adaptation, which broadly studies distribution mismatches between train and test sets in statistical and machine learning applications. For example, \cite{Lipton2018} provide a re-weighting algorithm in the case of label shift bias. \cite{NEURIPS2019_b4189d9d} also later extended work on representation learning in domain adaptation to study fairness under different types of parity.

In addition to data imbalances, several approaches have been devised to address performance imbalances in machine learning algorithms. Some of the most common approaches involve incorporating demographic constraints into the learning process, which define the standards to which a fair classifier should adhere \citep{chouldechova2018frontiers}. Commonly, these approaches segment communities based on relevant characteristics, and the classifier is expected to exhibit comparable performance across all protected demographic groups \citep{chouldechova2017fair}, \citep{hardt2016equality}, \citep{kamiran2009classifying}. Ensuring equality in predictive outcomes based on group membership can be pursued through fairness metrics or definitions, which encompass various aspects of parity constraints. Several definitions have been proposed in the literature to capture different facets of fairness \citep{chouldechova2017fair}, \citep{dwork2012fairness}, \citep{hardt2016equality}. A substantial amount of research has been dedicated to exploring the interrelationships between these various fairness definitions and identifying conflicts among them \citep{kleinberg2016inherent} \citep{chouldechova2017fair}. Several recent works in the machine learning community have also begun to explicitly study intersectionality in machine learning from an empirical perspective, including \cite{10.1145/3593013.3593979} and \cite{10.1145/3531146.3533101}.  A developing area of research within the domain of fair machine learning, which bears significant relevance to our study, revolves around the concept that observed data does not fully capture the underlying unobserved data. This line of inquiry explores how enforcing fairness can aid machine learning models in mitigating biases \citep{kleinberg2018selection, blum2019recovering}. \cite{NEURIPS2021_8b0bb3ef} also defines a notion of fair-PAC learning with respect to different group fairness notions, and \cite{Sicilia2023LearningTG} study correction techniques for under-representation bias in text generation algorithms.

\subsection{Contributions and Novelty}
In their research surveying the role of intersectionality in quantitative studies, \citet{bauer2021intersectionality} discovered that the engagement with the fundamental principles of intersectionality was frequently superficial. They reported that 26.9\% of the papers did not define intersectionality, 32.0\% did not cite the pioneering authors, and 17.5\% of the papers utilized ``intersectional'' categories that were not explicitly linked to social power. However, linking intersectionality to social power has been critiqued in the general philosophy of science and philosophy of social science literature due to a purported lack of empirical falsifiability and potential ad-hoc or arbitrary hypotheses \cite{bright2016causally, curry2018killing}.  

Our model contributes to this discourse by providing a formal mathematical framework from which we can derive strong theoretical and empirical results. In particular, we clearly define a previously vague problem and problem setting, and we present a refined methodological approach to tackle it with provable guarantees. (This is in contrast to presenting a more technically sophisticated or assumption-light solution to an existing problem.) Our contribution's essence is the formulation of a bias model that eschews assumptions of uniform base rates and non-overlapping groups used in earlier works. This choice involves the trade-off of assumptions. Unlike Blum and Stangl's analysis, which hinges on the uniformity of base rates and the separation of groups for validity, our model forgoes these constraints in favor of an independence assumption. This crucial adjustment permits examining overlapping groups without requiring uniform base rate assumptions.

One benefit of the PAC-style techniques that we rely upon in our analysis is that they are very popular within the theoretical fairness research community, so our analysis and model have the potential to be easily built upon by others. Our research distinctively enhances the PAC learning model by deriving theoretical bounds for data categorized into two specific types of batches: those with bias and those without. This nuanced application goes beyond traditional PAC learning by meticulously analyzing how these different data sources—each with its own characteristics of bias or lack thereof—affect the learning process. Our approach not only addresses the complexity of bias in machine learning datasets but also expands the PAC framework in a subtle yet important way, showcasing a deeper, more detailed exploration of data bias through innovative theoretical contributions. 

Finally, contrary to many of the works cited, which primarily focus on experimental approaches without establishing a comprehensive mathematical framework or clear definitions for intersectional and underrepresentation biases, our research marks a departure from these methodologies. Drawing upon the theoretical foundations laid by Blum and Stangl, which have spurred a considerable volume of related literature, our study endeavors to advance these theoretical underpinnings. We propose a theoretical framework that not only rigorously defines underrepresentation bias within an intersectional context but also systematically addresses these biases. This approach allows our research to stand out by providing both theoretical depth and a methodical framework for examining intersectional and underrepresentation biases (verified experimentally), thereby contributing novel insights to the field.

Our research fundamentally diverges from the cited literature, which only superficially resembles our focus on bias or underrepresentation. Our work uniquely characterizes dropout bias through rigorous mathematical formulation, distinct from label shift or methodologies in the referenced papers, which largely lack mathematical rigor in models, definitions, or theorems. Our reweighting strategy and theoretical contributions, such as deriving bounds within the PAC model for different batch settings and specifically addressing biased and unbiased data, set our research apart. Our approach, especially in contrast to methods like those in generative adversarial networks, underscores a novel, mathematically grounded methodology with theoretical implications not explored in the suggested literature. Our study distinctively advances a separate strand of literature, focusing on a rigorous mathematical treatment of dropout bias, in contrast to the primarily experimental works cited. While those works offer valuable insights, our work is different.

\section{Preliminaries and Model Overview}
\label{sec:prelims}
In this section, we introduce the mathematical framework that underpins our model. We aim to capture the intersectionality of biases within different groups, recognizing that biases can be multifaceted and interconnected. We begin by defining the spaces over which our model operates. Let $\mathcal{X}$ be the feature space, representing the set of all possible $d$-dimensional feature vectors $\textbf{x}$. The label space, $\mathcal{Y}$, consists of binary labels $y \in \{0,1\}$, representing two distinct classes or outcomes. The joint distribution of feature-label pairs $(\textbf{x},y)$ is denoted by the distribution $\mathcal{D}(\textbf{x}, y): \mathcal{X} \times \mathcal{Y} \nonumber$.

Importantly, our data domain consists of $k$ distinct groups $\{G_i\}_{i=1}^{k}$. For simplicity, we can imagine that each group $G_i$ comes with an inclusion function $g_i:\mathcal{X} \rightarrow \{0,1\}$ to indicate whether or not a sample $x$ is in the group $g_i$. Therefore, we can write $G_i = \{x \in \mathcal{X}: g_i(x) = 1\}$ and $x \in G_i$ if $g_i(x) = 1$. These groups can intersect, meaning that a sample $\textbf{x}$ might belong to multiple groups. This leads to $2^k$ possible subsets of $\{G_i\}_{i=1}^{k}$, capturing the concept of intersectionality.\footnote{Our approach to frame the analysis around $k$ intersectional groups and estimating marginal retention parameters while considering the combinatorial effects of multiple group memberships enables us to circumvent the computational complexity of estimating an exponential number of parameters for every possible $2^k$ group combinations. This methodology remains efficient and scalable, particularly for larger k, ensuring practical applicability without compromising on the accuracy of estimates and corrections as the number of intersectional groups increases.} The function $G(\textbf{x})$ is used to indicate the set of groups to which a feature vector $\textbf{x}$ belongs, and we use the notation $\lvert G(\textbf{x}) \rvert$ to indicate the number of groups for which $\textbf{x}$ is a member. Formally, for each $\textbf{x} \in \mathcal{X}$, $G(\textbf{x}) \stackrel{\text{def}}{=} \{ i | \textbf{x} \in G_i \}$. We use $p_i \stackrel{\text{def}}{=} \Pr[y = 1 | \textbf{x} \in G_i]$ to denote the base positive rate, or the inherent likelihood of a positive outcome in group $G_i$, and we use $p_0 \stackrel{\text{def}}{=} \Pr[y = 1]$ to denote the positive rate in the population. We will write $\mathbf{p} \stackrel{\text{def}}{=}(p_0, p_1, p_2, \ldots, p_k)$ to indicate the vector consisting of individual group base positive rates as well as the population base positive rate. Finally, we define \( m_i \) as the number of samples in \( S \) from group \( G_i \)

\subsection{Inclusion of Non-Member Samples and Independence Assumptions}

In addition to the $k$ groups $\{G_i\}_{i=1}^{k}$, we also assume the existence of a set $C$ composed of samples that are not members of any of the $k$ groups. This allows us to make the following assumptions about the independence of group membership:

\begin{ass}[Independence of Group Membership]
\label{ass:indep}
We assume that group membership is mutually independent. That is, for $I \subseteq [k]$:
\[
\Pr[\textbf{x} \in \cap_{i \in I} G_i] = \prod_{i \in I} \Pr[\textbf{x} \in G_i]
\]
\end{ass}

This assumption asserts that the probability of a sample belonging to an intersection of groups is the product of the probabilities of belonging to each individual group. It ensures that the groups are treated independently of each other.

\paragraph{Justification of the Independence Assumption:}
While this assumption may not perfectly capture the complexities of real-world interactions among different group memberships, it is a necessary simplification for several reasons:

\begin{itemize}
    \item \textbf{Model Simplicity:} It simplifies the mathematical model, making it more tractable and manageable. Modeling the interactions between all possible group memberships would significantly increase the complexity and computational demands of the model.
    \item \textbf{Logical in Certain Contexts:} In cases like sex and race, this assumption is logical for analytical purposes. Knowing an individual’s sex does not inherently provide information about their race, and vice versa, making these variables independent in many statistical analyses.
    \item \textbf{Empirical Basis:} The true nature of the relationships between different group memberships is ultimately an empirical question. This assumption serves as an initial hypothesis to be tested and refined based on data.
    \item \textbf{Foundation for Future Research:} This assumption provides a starting point for further exploration. It enables initial analysis and understanding, which can be built upon with more complex models that consider interdependencies.
\end{itemize}

We emphasize that it would be nice to eventually remove or weaken some of these assumptions, and we hope that others can build on this work to do so. Our choice to assume independence of group membership essentially allowed us to write the probability that an individual was filtered out to be proportional to the drop-out probabilities for each group he or she was a member of. Then, we would only have to estimate the drop-out parameter for each group separately, rather than estimating drop-out probabilities for each of the $2^k$ possible combination of group memberships (in the worst case). Making this assumption also allowed us to drop several other assumptions from previous works and calculate rigorous PAC-style bounds (which, while a more traditional technique, does allow us to make meaningful statements about the behavior of our estimates with high probability). Therefore, while recognizing its limitations, we utilize this assumption as a pragmatic approach to begin our exploration of intersectional biases. It is intended as an initial step, inviting future research to empirically test and potentially revise this assumption. Please see the Appendix for additional discussion on this assumption.  

Next, in Assumption~\ref{ass:cond_indep}, we extend the assumption of independence to the conditional case where labels are positive, ensuring that the groups are treated independently even when conditioned on this outcome. 
\begin{ass}[Conditional Independence of Group Membership]
\label{ass:cond_indep}
Given that $y=1$, we assume that group membership is mutually independent. Formally, for $I \subseteq [k]$:
\[
\Pr[\textbf{x} \in \cap_{i \in I} G_i|y=1] = \prod_{i \in I} \Pr[\textbf{x} \in G_i|y=1]
\]
\end{ass}

\paragraph{Justification of Conditional Independence Assumption:}
The assumption of conditional independence for positive outcomes (\(y=1\)) is pivotal for several reasons:

\begin{itemize}
    \item \textbf{Simplification for Probability Calculation:} This assumption allows for the simplified calculation of \(\Pr(y = 1|\textbf{x} \in \bigcap_{i \in I} G_i)\), the probability of a positive outcome given membership in intersecting groups. Without this assumption, the probability would need to factor in the complex interdependencies between groups, which would complicate the model significantly.
 
    \item \textbf{Empirical Testing Basis:} While it is a simplifying assumption, it forms an empirical hypothesis that should be tested with real-world data. 
\end{itemize}

This assumption, by simplifying the interaction model between groups for positive outcomes, provides a manageable approach for initial analysis. It is crucial, however, to acknowledge that this simplification may not capture all the nuances of real-world group dynamics. Future empirical research should investigate this assumption's validity and explore the complexities of group interdependencies, particularly in scenarios of positive outcomes. 

Using Assumptions~\ref{ass:indep} and ~\ref{ass:cond_indep}, we can now derive Lemma~\ref{lem:prod_pos_rate}, which expresses the probability that an example is positive given its group membership. The proof for this lemma, and all other proofs, can be found in the Appendix.
\begin{restatable}{lemma}{prodPosRate}
\label{lem:prod_pos_rate}
The positive rate of samples belonging to a specific intersection of groups can be calculated from the marginal positive rates of those groups and the overall positive rate as, $\forall I \subseteq [k]$:
\[\Pr[y=1|\textbf{x} \in \bigcap_{i \in I} G_i] = \Pr[y=1]^{1-\lvert I \vert } \prod_{i \in I} p_i \]
\end{restatable}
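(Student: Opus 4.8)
The plan is to reduce the statement to two applications of Bayes' rule together with the two independence assumptions. Fix $I \subseteq [k]$ and start from
\[
\Pr[y=1 \mid \textbf{x} \in \textstyle\bigcap_{i \in I} G_i]
= \frac{\Pr[\textbf{x} \in \bigcap_{i \in I} G_i \mid y=1]\,\Pr[y=1]}{\Pr[\textbf{x} \in \bigcap_{i \in I} G_i]}.
\]
I would then substitute Assumption~\ref{ass:cond_indep} into the numerator to write $\Pr[\textbf{x} \in \bigcap_{i \in I} G_i \mid y=1] = \prod_{i \in I}\Pr[\textbf{x} \in G_i \mid y=1]$, and Assumption~\ref{ass:indep} into the denominator to write $\Pr[\textbf{x} \in \bigcap_{i \in I} G_i] = \prod_{i \in I}\Pr[\textbf{x} \in G_i]$. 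This gives
\[
\Pr[y=1 \mid \textbf{x} \in \textstyle\bigcap_{i \in I} G_i]
= \Pr[y=1]\prod_{i \in I}\frac{\Pr[\textbf{x} \in G_i \mid y=1]}{\Pr[\textbf{x} \in G_i]}.
\]

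The second step is to recognize each factor in the product as a likelihood ratio that can be flipped by Bayes' rule once more: for each $i$,
\[
\frac{\Pr[\textbf{x} \in G_i \mid y=1]}{\Pr[\textbf{x} \in G_i]}
= \frac{\Pr[y=1 \mid \textbf{x} \in G_i]}{\Pr[y=1]}
= \frac{p_i}{p_0},
\]
using the definitions $p_i = \Pr[y=1 \mid \textbf{x}\in G_i]$ and $p_0 = \Pr[y=1]$ from Section~\ref{sec:prelims}. Plugging this in and collecting the $p_0$ terms yields
\[
\Pr[y=1 \mid \textbf{x} \in \textstyle\bigcap_{i \in I} G_i]
= p_0 \prod_{i \in I}\frac{p_i}{p_0}
= p_0^{\,1-|I|}\prod_{i \in I} p_i,
\]
which is exactly the claimed identity since $p_0 = \Pr[y=1]$. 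I would also note the edge case $I = \emptyset$, where both sides collapse to $\Pr[y=1]$, so the formula holds trivially there too.

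There is essentially no hard step here: the result is a direct consequence of the two assumptions and Bayes' rule. The only point that needs a word of care is well-definedness of the conditional probabilities and the ratios $p_i/p_0$ — i.e., that $\Pr[\textbf{x}\in G_i] > 0$ for $i \in I$ and $\Pr[y=1] = p_0 > 0$ — which I would state as standing nondegeneracy assumptions (implicitly required already for $p_i$ and the conditional independence statements to make sense). Beyond that, the proof is a two-line computation, and the main ``work'' of the lemma is conceptual: it is what lets the later sections estimate only the $k$ marginal parameters rather than all $2^k$ intersectional ones.
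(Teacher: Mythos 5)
Your proof is correct and follows essentially the same route as the paper's: Bayes' rule on the intersection, Assumptions~\ref{ass:indep} and~\ref{ass:cond_indep} to factor numerator and denominator, and a second application of Bayes' rule to each factor to obtain $p_i/p_0$, with the $p_0$ terms collecting into $\Pr[y=1]^{1-|I|}$. Your added remarks on the $I=\emptyset$ edge case and the implicit nondegeneracy conditions are reasonable but not needed beyond what the paper already assumes.
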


\begin{remark}
It is worth noting that when a sample is a member of only one group, this expression reduces to $\Pr[y=1|\textbf{x} \in G_i]=p_i$ as expected. The presence of the overall positive rate in the expression for Lemma \ref{lem:prod_pos_rate} essentially normalizes each groupwise positive rate, preventing the overall expression from becoming too small.
\end{remark}
\subsection{Bias Parameter and Biased Training Dataset}

We now introduce the concept of bias within the training dataset. Let $\mathcal{S}$ be an unbiased dataset of size $m$ drawn i.i.d. from the distribution $\mathcal{D}$. Now let $S_{\beta} = {(\textbf{x}_1,y_1),\ldots,(\textbf{x}_{m_{\beta}},y_{m_{\beta}})} \subset S$ be a biased training dataset of size $m_{\beta} \leq m$ and $m_{\beta_i}$ be the number of samples in \( S_\beta \) from group \( G_i \). To model underrepresentation bias, we associate each group $G_i$ with a bias parameter $\beta_i > 0$, representing the probability that a positive sample $(\textbf{x},y) \in S$ will be retained in $S_\beta$, and we let $\beta_0$ indicate the overall retention rate of positive samples in $S$.
\begin{equation*}
    \beta_i \stackrel{\text{def}}{=} \Pr[(\textbf{x},y) \in S_\beta | \textbf{x} \in G_i, y=1], \; \beta_0 \stackrel{\text{def}}{=} \Pr[(\textbf{x},y) \in S_\beta | y=1]
\end{equation*}

We will write $\mathbf{\beta} = (\beta_0, \beta_1,...\beta_k)$ to indicate the vector consisting of both the population retention rate and individual group retention rates. Importantly, \textit{negative samples are always retained}, so $\Pr[(\textbf{x},y) \in S_\beta | y=0] = 1$. Note that the bias parameter $\beta_i$ quantifies the extent to which positive samples from group $G_i$ are retained in the biased training dataset $S_\beta$. Therefore, the quantity $1-\beta_i$ can be interpreted as the rate at which positive samples from group $G_i$ are filtered out when moving from $S$ to $S_\beta$. We now define the base positive rate observed for each group in the biased sample, which will allow us to ultimately derive an expression that we can use to efficiently estimate $\frac{1}{\beta}$.

\begin{definition}[Biased Base Positive Rate for Group $i$]
\label{def:biased_base_rate}
The biased base positive rate for group \( G_i \) in the biased dataset \( S_\beta \) is given by $p_{\beta_i} \stackrel{\text{def}}{=}  \Pr[y=1 | (\textbf{x},y) \in S_\beta, \textbf{x} \in G_i]$.
\end{definition}

Next, we define the biased base positive rate for the entire population, $p_{\beta_0}$. We will use $\mathbf{p_{\beta}} \stackrel{\text{def}}{=}(p_{\beta_0}, p_{\beta_1}, \ldots, p_{\beta_k})$ to indicate the vector of biased positive rates.
\begin{definition}[Biased Base Positive Rate for Population]
\label{def:biased_base_pos_rate}
The biased base positive rate for the population is given by $p_{\beta_0} \stackrel{\text{def}}{=}  \Pr[y=1 | (\textbf{x},y) \in S_\beta]$.
\end{definition}

With these terms defined, we can now express the inverse retention rate for each group solely in terms of the biased and unbiased positive rates.
\begin{restatable}{lemma}{inverseBeta}
\label{lem:inverseBeta}
The inverse of each bias parameter can be calculated solely using the unbiased positive rate and biased positive rate for the respective group as follows:
\begin{align*}
\beta_i^{-1} = p_i \left( 1 - p_{\beta_i} \right) p_{\beta_i}^{-1}\left( 1 - p_i \right)^{-1}
\end{align*}
\end{restatable}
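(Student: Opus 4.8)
The plan is to express the biased base positive rate $p_{\beta_i}$ directly in terms of the retention parameter $\beta_i$ and the unbiased rate $p_i$ using Bayes' rule, and then invert the resulting scalar relation algebraically. First I would write, by the definition of conditional probability,
\[
p_{\beta_i} = \Pr[y=1 \mid (\mathbf{x},y)\in S_\beta,\ \mathbf{x}\in G_i] = \frac{\Pr[(\mathbf{x},y)\in S_\beta,\ y=1 \mid \mathbf{x}\in G_i]}{\Pr[(\mathbf{x},y)\in S_\beta \mid \mathbf{x}\in G_i]}.
\]
For the numerator I would factor $\Pr[(\mathbf{x},y)\in S_\beta,\ y=1 \mid \mathbf{x}\in G_i] = \Pr[(\mathbf{x},y)\in S_\beta \mid y=1,\ \mathbf{x}\in G_i]\,\Pr[y=1\mid \mathbf{x}\in G_i] = \beta_i\, p_i$, straight from the definitions of $\beta_i$ and $p_i$.

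Next I would expand the denominator by the law of total probability over $y \in \{0,1\}$, here invoking the modeling hypothesis that negative samples are always retained, i.e. $\Pr[(\mathbf{x},y)\in S_\beta \mid y=0,\ \mathbf{x}\in G_i] = 1$. This yields $\Pr[(\mathbf{x},y)\in S_\beta \mid \mathbf{x}\in G_i] = \beta_i p_i + (1-p_i)$, and therefore
\[
p_{\beta_i} = \frac{\beta_i p_i}{\beta_i p_i + (1-p_i)}.
\]

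Finally I would solve this single equation for $\beta_i$: cross-multiplying gives $p_{\beta_i}\,(1-p_i) = \beta_i\, p_i\,(1-p_{\beta_i})$, so $\beta_i = p_{\beta_i}(1-p_i)\,p_i^{-1}(1-p_{\beta_i})^{-1}$, and taking reciprocals produces exactly the claimed identity $\beta_i^{-1} = p_i\,(1-p_{\beta_i})\,p_{\beta_i}^{-1}\,(1-p_i)^{-1}$. I do not expect a genuine obstacle here; the computation is elementary. The only points requiring care are organizational: keeping the event $\mathbf{x}\in G_i$ fixed throughout every conditioning, using the "negatives always retained" assumption in its group-conditioned form, and implicitly assuming the natural non-degeneracy conditions $p_i, p_{\beta_i} \in (0,1)$ and $\beta_i > 0$ so that all the reciprocals appearing in the statement are well defined.
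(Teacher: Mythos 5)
Your proposal is correct and follows essentially the same route as the paper: both identify $\Pr[y=1,(\mathbf{x},y)\in S_\beta\mid \mathbf{x}\in G_i]=\beta_i p_i$, expand the retention probability $\Pr[(\mathbf{x},y)\in S_\beta\mid \mathbf{x}\in G_i]=\beta_i p_i+(1-p_i)$ using the ``negatives always retained'' assumption, and solve the resulting equation $p_{\beta_i}(\beta_i p_i+1-p_i)=\beta_i p_i$ for $\beta_i^{-1}$. Your presentation, starting directly from $p_{\beta_i}$ as a conditional-probability ratio, is just a slightly cleaner rearrangement of the paper's argument.
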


\begin{ass}[Conditional Independence of Group Membership in Biased Sample]
\label{ass:cond_indep_beta}
Given that $y=1$ and $(\textbf{x},y) \in S_\beta$, we assume that group membership is mutually independent. Formally, for $I \subseteq [k]$:
\begin{align*}
&\Pr[\textbf{x} \in \cap_{i \in I} G_i|(\textbf{x},y) \in S_\beta, y=1] = \prod_{i \in I} \Pr[\textbf{x} \in G_i|(\textbf{x},y) \in S_\beta, y=1]
\end{align*}
\end{ass}

Using Assumptions~\ref{ass:cond_indep} and ~\ref{ass:cond_indep_beta}, we can derive Lemma~\ref{lem:prod_beta}, which provides an expression for the probability that a positive example is included in $S_\beta$ given its group membership.

\begin{restatable}{lemma}{prodBeta}
\label{lem:prod_beta}
The probability that a positive example is included in $S_\beta$ given its group membership can be calculated solely from the bias parameters as follows, where $I \subseteq [k]$:
\begin{align*}
&\Pr[(\textbf{x},y) \in S_\beta|\textbf{x} \in \cap_{i \in I} G_i, y=1] = \beta_0^{1-\lvert I\rvert} \prod_{i \in I} \beta_i
\end{align*}
\end{restatable}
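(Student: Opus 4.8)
The plan is to mirror the proof of Lemma~\ref{lem:prod_pos_rate}: apply Bayes' rule to move the group-membership event into the conditioning, then use the two conditional-independence assumptions to factorize the resulting numerator and denominator over $i \in I$. Fix $I \subseteq [k]$. The cases $|I| = 0$ and $|I| = 1$ are immediate (for $|I| = 0$ the empty intersection is all of $\mathcal{X}$ and the claim reads $\beta_0 = \beta_0$; for $|I| = 1$ it reads $\Pr[(\textbf{x},y) \in S_\beta \mid \textbf{x} \in G_i, y=1] = \beta_i$, which is exactly the definition of $\beta_i$), so assume $|I| \geq 2$.

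First I would write
\[
\Pr[(\textbf{x},y) \in S_\beta \mid \textbf{x} \in \bigcap_{i \in I} G_i,\, y=1] = \frac{\Pr[\textbf{x} \in \bigcap_{i \in I} G_i \mid (\textbf{x},y) \in S_\beta,\, y=1]\,\cdot\,\Pr[(\textbf{x},y) \in S_\beta \mid y=1]}{\Pr[\textbf{x} \in \bigcap_{i \in I} G_i \mid y=1]},
\]
where the middle factor in the numerator equals $\beta_0$ by definition. Then I would factor the first numerator factor via Assumption~\ref{ass:cond_indep_beta} into $\prod_{i \in I}\Pr[\textbf{x} \in G_i \mid (\textbf{x},y) \in S_\beta,\, y=1]$ and the denominator via Assumption~\ref{ass:cond_indep} into $\prod_{i \in I}\Pr[\textbf{x} \in G_i \mid y=1]$. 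For each single index $i$, a second application of Bayes' rule gives
\[
\frac{\Pr[\textbf{x} \in G_i \mid (\textbf{x},y) \in S_\beta,\, y=1]}{\Pr[\textbf{x} \in G_i \mid y=1]} = \frac{\Pr[(\textbf{x},y) \in S_\beta \mid \textbf{x} \in G_i,\, y=1]}{\Pr[(\textbf{x},y) \in S_\beta \mid y=1]} = \frac{\beta_i}{\beta_0}.
\]
Substituting these ratios back yields $\beta_0 \cdot \prod_{i \in I} (\beta_i / \beta_0) = \beta_0^{\,1-|I|}\prod_{i\in I}\beta_i$, which is the claimed identity.

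I do not anticipate a substantive obstacle here: the whole argument is a pair of Bayes manipulations sandwiching the two independence assumptions. The only points needing care are bookkeeping ones — confirming that each conditioning event has positive probability so that all conditional probabilities are well defined (this follows from $\beta_i > 0$ for all $i$, hence $\beta_0 > 0$, together with each group carrying positive mass among positive examples), and verifying that the two independence assumptions are applied to exactly the right conditioning events (Assumption~\ref{ass:cond_indep} to the unbiased conditioning on $y=1$, and Assumption~\ref{ass:cond_indep_beta} to the conditioning on both $y=1$ and $(\textbf{x},y)\in S_\beta$).
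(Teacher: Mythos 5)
Your proposal is correct and follows essentially the same route as the paper's proof: one Bayes manipulation to move the intersection into the conditioning, factorization via Assumptions~\ref{ass:cond_indep} and~\ref{ass:cond_indep_beta}, and a second per-index Bayes step yielding the ratios $\beta_i/\beta_0$. The only differences are cosmetic — your explicit handling of the $|I|\le 1$ cases and the positivity remarks are extra care the paper omits.
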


\begin{remark}
Notice that if a sample $(\textbf{x},y)$ is only in one group, $G_i$, this simplifies to $\Pr[(\textbf{x},y) \in S_\beta|\textbf{x} \in \cap_{i \in I} G_i, y=1] = \beta_i$, as desired. The term $\beta_0$ plays a similar role to the overall positive rate in Lemma~\ref{lem:prod_pos_rate}, essentially normalizing the drop-out rates to keep the overall expression from becoming too small.
\end{remark}

\subsection{The Biased Distribution}

We now define $\cD_{\beta}(\textbf{x}, y): \mathcal{X} \times \mathcal{Y}$ as the distribution induced on \(S_{\beta}\) by the filtering process. This distribution is influenced by the bias parameters \(\beta\), reflecting the underrepresentation of certain groups in the biased sample. To define $\cD_\beta$, we write the accompanying probability density function, $p_{\cD_\beta}$, as follows:

\begin{align*}
p_{\cD_{\beta}}(\textbf{x},y) &\stackrel{\text{def}}{=} \Pr[(\textbf{x},y)=(\textbf{X},Y)|(\textbf{x},y) \in S_{\beta}] \\& = \frac{\Pr[(\textbf{x},y) \in S_{\beta}|(\textbf{x},y)=(\textbf{X},Y)]\Pr[(\textbf{x},y)=(\textbf{X},Y)]}{\sum_{(\textbf{X},Y)}\Pr[(\textbf{x},y) \in S_{\beta}|(\textbf{x},y)=(\textbf{X},Y)]\Pr[(\textbf{x},y)=(\textbf{X},Y)]}\\&=\frac{\Pr[(\textbf{x},y) \in S_{\beta}|(\textbf{x},y)=(\textbf{X},Y)]p_\cD(\textbf{x},y)}{\sum_{(\textbf{X},Y)}\Pr[(\textbf{x},y) \in S_{\beta}|(\textbf{x},y)=(\textbf{X},Y)]p_\cD(\textbf{X},Y)}\\
&=\frac{\Pr[(\textbf{x},y) \in S_{\beta}|(\textbf{x},y)=(\textbf{X},Y)]p_\cD(\textbf{x},y)}{\mathbb{E}_\cD [\Pr[(\textbf{x},y) \in S_{\beta}|(\textbf{x},y)=(\textbf{X},Y)]]}
\end{align*}
For ease of notation in relating the original distribution \(\cD\) and the biased distribution \(\cD_\beta\), we introduce a \textit{reweighting function} \(w(\textbf{x},y)\), representing the inverse of the probability that a given sample is retained:

\begin{align*}
w(\textbf{x},y) &\stackrel{\text{def}}{=}  \frac{1}{\Pr[(\textbf{x},y) \in S_{\beta}|(\textbf{x},y)=(\textbf{X},Y)]}\\& = \mathbb{I}(y=0) + \frac{\mathbb{I}(y=1)}{\sum_{I \subseteq [k]}\mathbb{I}(G(\textbf{x})=I)\Pr[(\textbf{x},y) \in S_\beta|\textbf{x} \in \cap_{i \in I} G_i, y=1]} \\& =\mathbb{I}(y=0) + \mathbb{I}(y=1)\beta_0^{\lvert G(\textbf{x})\rvert - 1} \prod_{i \in G(\textbf{x})} \frac{1}{\beta_i}
\end{align*}

This allows us to relate $\cD$ and $\cD_\beta$ as follows:

\begin{restatable}{theorem}{thmDistributions}\label{thm:biased_pmf}
The joint probability mass function of the biased distribution \(\cD_\beta\) is related to \(\cD\) as follows:
\begin{equation*}
p_{\cD_{\beta}}(\textbf{x},y) = \frac{p_\cD(\textbf{x},y) \mathbb{E}_{\cD_\beta}[w(\textbf{x},y)]}{w(\textbf{x},y)}.
\end{equation*}
\end{restatable}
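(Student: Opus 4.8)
The plan is to build directly on the Bayes-rule computation already displayed in the excerpt, namely
\[
p_{\cD_{\beta}}(\textbf{x},y) = \frac{\Pr[(\textbf{x},y) \in S_{\beta}\mid(\textbf{x},y)=(\textbf{X},Y)]\, p_\cD(\textbf{x},y)}{\mathbb{E}_\cD \big[\Pr[(\textbf{x},y) \in S_{\beta}\mid(\textbf{x},y)=(\textbf{X},Y)]\big]}.
\]
By the very definition of the reweighting function, $\Pr[(\textbf{x},y)\in S_\beta\mid(\textbf{x},y)=(\textbf{X},Y)] = 1/w(\textbf{x},y)$, and the closed form from Lemma~\ref{lem:prod_beta} shows this retention probability is strictly positive (each $\beta_i>0$, and there are at most $k$ groups), so $w$ is finite and bounded and every expectation appearing below is well-defined. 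Substituting, I would first rewrite the target density as
\[
p_{\cD_{\beta}}(\textbf{x},y) = \frac{p_\cD(\textbf{x},y)}{w(\textbf{x},y)\,\mathbb{E}_\cD[1/w(\textbf{X},Y)]}.
\]

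The crux is then to identify the normalizing constant $\mathbb{E}_\cD[1/w]$ with $1/\mathbb{E}_{\cD_\beta}[w]$. To see this, compute $\mathbb{E}_{\cD_\beta}[w]$ directly from the expression just obtained:
\[
\mathbb{E}_{\cD_\beta}[w(\textbf{X},Y)] = \sum_{(\textbf{x},y)} p_{\cD_\beta}(\textbf{x},y)\, w(\textbf{x},y) = \sum_{(\textbf{x},y)} \frac{p_\cD(\textbf{x},y)}{w(\textbf{x},y)\,\mathbb{E}_\cD[1/w]}\, w(\textbf{x},y) = \frac{1}{\mathbb{E}_\cD[1/w]}\sum_{(\textbf{x},y)} p_\cD(\textbf{x},y).
\]
The last sum is $1$ since $p_\cD$ is a probability mass function (in the continuous case, replace sums by integrals throughout, with the same conclusion). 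Hence $\mathbb{E}_{\cD_\beta}[w] = 1/\mathbb{E}_\cD[1/w]$, equivalently $\mathbb{E}_\cD[1/w] = 1/\mathbb{E}_{\cD_\beta}[w]$.

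Plugging this back into the displayed formula for $p_{\cD_\beta}$ gives exactly
\[
p_{\cD_{\beta}}(\textbf{x},y) = \frac{p_\cD(\textbf{x},y)\,\mathbb{E}_{\cD_\beta}[w(\textbf{x},y)]}{w(\textbf{x},y)},
\]
which is the claim. The computations are routine; the only point that deserves care is the \emph{self-referential} character of the statement — $\mathbb{E}_{\cD_\beta}[w]$ occurs on the right-hand side — so the theorem should be read as a consistency identity between the two densities, and the cancellation argument above is precisely what makes that identity legitimate rather than circular. One could alternatively prove the equality "from the right-hand side," substituting $\mathbb{E}_{\cD_\beta}[w]=1/\mathbb{E}_\cD[1/w]$ and recovering the Bayes expression; I would present the forward direction since it reuses the derivation already in the text and isolates the single nontrivial observation, namely the cancellation of $w$ against $1/w$ inside $\mathbb{E}_{\cD_\beta}[\cdot]$.
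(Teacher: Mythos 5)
Your proposal is correct and follows essentially the same route as the paper's proof: both reduce the claim to the identity $\mathbb{E}_\cD[1/w]=1/\mathbb{E}_{\cD_\beta}[w]$ and then substitute it into the Bayes-rule expression $p_{\cD_\beta}=p_\cD/(w\,\mathbb{E}_\cD[1/w])$. The only cosmetic difference is that the paper derives the normalization identity from $1=\sum_{(\textbf{x},y)}p_\cD(\textbf{x},y)$ while you derive it from $\mathbb{E}_{\cD_\beta}[w]=\sum_{(\textbf{x},y)}p_{\cD_\beta}(\textbf{x},y)\,w(\textbf{x},y)$ — the same cancellation read in the opposite direction.
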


The reweighting factor, $w$, serves as a crucial tool that allows us to approximate the loss of any hypothesis on the ground truth distribution, $\cD$, even when we only have access to the empirical error computed from the biased sample. In the context of our model, understanding the relationship between the original distribution \(\cD\) and the biased distribution \(\cD_\beta\) is crucial for analyzing how biases in the training data affect the learning process. We do so by establishing a mathematical connection between these distributions, mediated by a reweighting function \(w(\textbf{x},y)\), which reflects the biases in the training data.\footnote{Due to space limitations, the proof for Theorem~\ref{thm:biased_pmf} is included in the Appendix.}

\subsection{Learning from Biased Data}
In the previous subsections, we have defined the original and biased distributions, and we have derived the relationship between them. Now, we turn our attention to the learning problem, where we aim to train a model on the biased data while approximating the true error on the original distribution. This requires us to define various loss functions and reweighting mechanisms that take into account the biases in the data.

\begin{definition}[True Loss]
The true loss of hypothesis \( h: \mathcal{X} \rightarrow \{0,1\} \) is defined as:
$ L_{\cD}(h) \stackrel{\text{def}}{=}  P_{(\textbf{x},y) \sim \cD} [h(\textbf{x}) \neq y]$.
\end{definition}

\begin{remark}
This measures how likely \( h \) is to make an error when labeled points are randomly drawn according to \( \mathcal{D} \). For the empirical risk on the unbiased sample \( S \), we have 
$L_{S}(h) \stackrel{\text{def}}{=}  1/m \sum_{i=1}^{m} \mathbb{I}(h(\textbf{x}_i) \neq y_i)$.
\end{remark}

We also consider the biased sample \( S_{\beta} \subset S \), where \( S_{\beta} = ((\textbf{x}_1, y_1),\dots, (\textbf{x}_{m_\beta}, y_{m_\beta}))\). We define the biased empirical risk to be the average loss over \( S_{\beta} \). Now we will define the reweighted biased empirical risk (RBER), which will be our approximation of the empirical risk on \( S \). We begin with a reweighting that assumes perfect knowledge of $\beta$.

\begin{definition}[RBER with True \( \beta\)]
The RBER for \( S_\beta \) reweighted by the true inverse of \( \beta \) is:
\begin{equation*}
L_{S_{\beta} \beta^{-1}}(h) \stackrel{\text{def}}{=}  \mathbb{E}_{(\textbf{x},y)\sim S_{\beta}} \left[w(\textbf{x},y)\mathbb{I}(h(\textbf{x}) \neq y)\right] = \left(1/m_\beta \right) \sum_{i=1}^{m_\beta} w(\textbf{x}_i, y_i) \mathbb{I}(h(\textbf{x}_i) \neq y_i).
\end{equation*}
\end{definition}

By the very nature of our problem, however, we do not know the true bias parameters -- instead, we must estimate them. We give a precise formulation for the estimates of $\frac{1}{\beta}$ and $\beta_0$ in Algorithm~\ref{algo:biased_learning} and refer to the estimates here as $\widehat{\frac{1}{\beta}}$ and $\widehat{\beta_0}$. These estimates allow us to calculate the reweighted biased empirical risk using only the observed data.

\begin{definition}[RBER with Estimated \( \beta \)]
The RBER for \( S_\beta \) reweighted by our estimate of \( \beta^{-1} \) is:
\begin{equation*}
L_{S_{\beta} \widehat{\beta^{-1}}}(h) \stackrel{\text{def}}{=}  \mathbb{E}_{(\textbf{x},y)\sim S_{\beta}} \left[\hat{w}(\textbf{x},y) \mathbb{I}(h(\textbf{x}) \neq y)\right] = \frac{1}{m_{\beta}} \sum_{i=1}^{m_{\beta}} \hat{w}(\textbf{x}_i, y_i) \mathbb{I}(h(\textbf{x}_i) \neq y_i),
\end{equation*}
where 
$\hat{w}(\textbf{x},y) \stackrel{\text{def}}{=}  \mathbb{I}(y=0) + \mathbb{I}(y=1) \widehat{\beta_0}^{\lvert G(\textbf{x}) \rvert - 1}\prod_{i \in G(\textbf{x})} \widehat{\frac{1}{\beta_i}}
$
\end{definition}

This section has formalized the learning problem in the presence of biases, defining the true error, empirical risk, and reweighted biased empirical risks. These definitions lay the groundwork for developing algorithms that can learn effectively from biased data, compensating for the biases through reweighting mechanisms.

\subsection{Summary}
Using the tools laid thus far, we are able to develop an algorithm operationalizing this learning and reweighting process, culminating in a powerful tool for mitigating intersectional bias in machine learning. By incorporating the bias parameters, we emphasize the intersectionality of biases, accounting for the collective impact of multiple group memberships on the biases in the dataset. The construction of $S_{\beta}$ enables us to explicitly consider and analyze the intersectional biases in machine learning models. Importantly, this allows us to train models on a large amount of biased data with only a small amount of unbiased data needed, accounting for intersectionality with limited resources. 

\section{Algorithm Overview}

Our algorithm, described in detail as Algorithm~\ref{algo:biased_learning}, aims to mitigate the biases present in the dataset. We start with an unbiased training set $S$ and a biased training set $S_{\beta}$. The biased training set $S_{\beta}$ is a function of the product of $\beta_0$ and the inverse of the group-specific $\beta$'s, which we estimate by the product of the $\widehat{\beta^{-1}}$'s to capture the intersectional biases. Note that by learning $\beta_0$ for the population and each $\beta_i^{-1}$ individually, we can estimate the product accurately and efficiently without having to estimate a bias parameter for each unique intersection of groups. We can then apply the intersectional bias learning algorithm to obtain a hypothesis $h$ that minimizes the risk of the learned model while considering the biases.\footnote{See the Appendix for a diagram of this process.} The algorithm proceeds as follows:

\begin{enumerate}
    \item We first determine the sample sizes required for training a model. These sizes are chosen to balance the complexity of the hypothesis class $\mathcal{H}$ and the desired confidence level in the learned model -- they depend on the logarithm of the VC dimension of the hypothesis class size, target error $\epsilon$, and confidence parameter $\delta$. 
    \item We then collect two training sets, $S$ and $S_\beta$. The unbiased training set $S$ is gathered by randomly selecting $m$ examples according to the unbiased distribution $\cD$, while the biased training set $S_\beta$ is formed by selecting $m_{\beta}$ examples from the biased distribution $\cD_{\beta}$, both with an appropriate number of samples per group. 
    \item Next, we estimate the overall positive rate $p_0$ and the positive rate $p_i$ for each group $G_i$ according to the underlying distribution $\cD$. We continue by estimating the rate $p_{\beta_i}$ of positive examples from group $G_i$ appearing in the \textit{biased} dataset $S_{\beta}$ as well as the overall retention rate $p_{\beta_0}$.
    \item Using the estimates for $\mathbf{p}$ and $\mathbf{p_{\beta}}$, we can then estimate $\beta_i^{-1}$ for each group $G_i$ and $\beta_0$ for the population.  
    \item Finally, we apply the empirical risk minimization algorithm to train a model using just the biased training set $S_\beta$. The algorithm minimizes the risk associated with the learned model while considering the biases captured by the intersectional bias parameters. The algorithm's output is the hypothesis $h$, representing the learned model.
\end{enumerate}

By incorporating intersectional biases in the learning process, our algorithm aims to develop high-performing models that account for the complexities of bias arising from multiple group memberships. It allows us to learn models addressing the specific challenges of intersectionality, thereby promoting equity in machine-learning applications. 
 
\begin{algorithm}
\caption{Intersectional Bias Learning Algorithm}
\label{algo:biased_learning}
\textbf{Input:} Unbiased training set $S$ of size $m$ with $m_i$ samples in group $G_i$ for $i=1,..,k$, Biased training set $S_{\beta}$ of size $m_{\beta}$ with $m_{\beta_i}$ samples in group $G_i$ for $i=1,...,k$
\begin{enumerate}
\item Estimate $\widehat{p}_i = \frac{1}{m_i}\sum_{i=1}^{m_i} \mathbb{I}(y_i = 1)$ for each group $i$ from $S$ and $\widehat{p_0} = \frac{1}{m}\sum_{i=1}^{m} \mathbb{I}(y_i = 1)$ for the population.
\item Estimate $\widehat{p_{\beta_i}} = \frac{1}{m_{\beta_i}}\sum_{i=1}^{m_{\beta_i}} \mathbb{I}(y_i = 1)$ for group $i$ from $S_{\beta}$ and $\widehat{p_{\beta_0}}= \frac{1}{m_{\beta}}\sum_{i=1}^{m_{\beta}}\mathbb{I}(y_i = 1)$ for the population.
\item Let $\widehat{\frac{1}{\beta_i}} = \frac{\widehat{p_i}}{\widehat{p_{\beta_i}}}\frac{\left(1 - \widehat{p_{\beta_i}}\right)}{\left(1 - \widehat{p_i}\right)}$ be the estimated bias for group $i$ and $\widehat{\beta_0}=\frac{\widehat{p_{\beta_0}}}{\widehat{p}}$ be the estimated population retention rate.
\item Use ERM to return a hypothesis $h$ that minimizes the empirical risk on $\frac{m_{\beta}}{\sum_{i=1}^{m_{\beta}} \hat{w}(\textbf{x}_i, y_i)}L_{S_\beta \widehat{\beta^{-1}}}(h)$
\end{enumerate}
\end{algorithm}

\subsection{Theorem Overview}
The main theorem states that given a hypothesis class $\mathcal{H}$ and an unknown distribution $\cD$ over feature space $\mathcal{X} \times \{0,1\}$, if we have an unbiased sample $S$ of size $m$ and a biased sample $S_{\beta}$ of size $m_{\beta}$, with the marginal probability of a positive example in group $i$ of $S_{\beta}$ being $p_{\beta_i}$, then running Algorithm \ref{algo:biased_learning} with appropriate sample sizes ensures that, with high probability, the algorithm outputs a hypothesis $h$ that has a low error on the distribution $\cD$.

\begin{restatable}{theorem}{mainTheorem}
\label{theorem:mainTheorem}
Let $\delta > 0$, $0 < \epsilon < \frac{1}{9}$, $\mathcal{H}$ be a hypothesis class with VC-dimension $|\mathcal{H}|$, and let $\cD$ be an unknown distribution over $\mathcal{X} \times \{0,1\}$, where $\mathcal{X}$ is a feature space. Let $S$ be an unbiased sample of $m$ examples drawn i.i.d. from $\cD$, and let $S_{\beta}$ be a biased sample of $m_{\beta}$ examples drawn i.i.d. from $\cD_{\beta}$. If Algorithm \ref{algo:biased_learning} is run with sample sizes 
\begin{align*}
    &m_\beta \geq \frac{11^2 \left(\beta_0^{k-2} \min_i {\beta_i}^{1-k} - 1\right)^4}{{2\epsilon^2}}\ln{\frac{4|H|(k+1)}{\delta}},\; m_{\beta_i} \geq \frac{3 \cdot 11^2 \left(\beta_0^{k-2} \min_i {\beta_i}^{1-k} - 1\right)^2}{p_{\beta_i}\epsilon^2} \ln \dfrac{2(2k+2)}{\delta} \\& m_i \geq \frac{3 \cdot 11^2 \left(\beta_0^{k-2} \min_i {\beta_i}^{1-k} - 1\right)^2}{p_i \epsilon^2 }\ln\frac{2(2k+2)}{\delta}
\end{align*}
for all groups $G_i$, then with probability $1-\delta$,
\[
\lvert \frac{m_\beta}{\sum_{i=1}^{m_{\beta}} \hat{w}(\textbf{x}_i, y_i)}L_{S_\beta \widehat{\beta^{-1}}}(h) - L_\cD(h)\rvert \leq \epsilon
\]
 
\end{restatable}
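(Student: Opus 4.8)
The plan is to read the target inequality through Theorem~\ref{thm:biased_pmf}. Rearranging $p_{\cD_{\beta}}(\textbf{x},y) = p_{\cD}(\textbf{x},y)\,\mathbb{E}_{\cD_{\beta}}[w(\textbf{x},y)]/w(\textbf{x},y)$ and summing it against an arbitrary bounded function $f$ gives $\mathbb{E}_{\cD_{\beta}}[w(\textbf{x},y)f(\textbf{x},y)] = \mathbb{E}_{\cD_{\beta}}[w(\textbf{x},y)]\cdot\mathbb{E}_{\cD}[f(\textbf{x},y)]$; taking $f(\textbf{x},y)=\mathbb{I}(h(\textbf{x})\neq y)$ and $f\equiv 1$ and dividing yields $L_{\cD}(h) = \mathbb{E}_{\cD_{\beta}}[w(\textbf{x},y)\mathbb{I}(h(\textbf{x})\neq y)]\big/\mathbb{E}_{\cD_{\beta}}[w(\textbf{x},y)]$. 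Since $\frac{m_{\beta}}{\sum_i \hat{w}(\textbf{x}_i,y_i)}L_{S_{\beta}\widehat{\beta^{-1}}}(h)$ equals $\big(\sum_i \hat{w}(\textbf{x}_i,y_i)\mathbb{I}(h(\textbf{x}_i)\neq y_i)\big)\big/\big(\sum_i \hat{w}(\textbf{x}_i,y_i)\big)$, the quantity to bound is a self-normalized importance-weighting estimator of $L_{\cD}(h)$ in which the oracle weights $w$ are replaced by the plug-in weights $\hat{w}$. I would split it by the triangle inequality into term~(A), the gap between the $\hat{w}$- and $w$-weighted self-normalized averages, and term~(B), the deviation of the $w$-weighted self-normalized average from its limit $L_{\cD}(h)$, and show each is at most $\epsilon/2$ on an event of probability at least $1-\delta$. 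The decomposition is convenient because term~(B) involves only the fixed function $w$ while term~(A) is controlled deterministically once the base-rate estimates are accurate, so $\hat{w}$ never needs to be independent of the sample on which ERM runs.

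For term~(B), write it as $\lvert U_n/V_n - \mu_U/\mu_V\rvert$ with $U_n=\tfrac1{m_{\beta}}\sum_i w(\textbf{x}_i,y_i)\mathbb{I}(h(\textbf{x}_i)\neq y_i)$ and $V_n=\tfrac1{m_{\beta}}\sum_i w(\textbf{x}_i,y_i)$, so $\mu_U/\mu_V=L_{\cD}(h)$ by the identity above. The elementary estimate $\lvert U_n/V_n - \mu_U/\mu_V\rvert\le(\lvert U_n-\mu_U\rvert+\lvert V_n-\mu_V\rvert)/V_n$, together with $w\ge1$ (hence $V_n\ge1$) and $\mu_U\le\mu_V$, reduces matters to concentrating $U_n$ and $V_n$. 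Their summands lie in $[0,W]$, where $W:=\beta_0^{k-2}\min_i\beta_i^{1-k}$ upper-bounds $w(\textbf{x},y)=\beta_0^{\lvert G(\textbf{x})\rvert-1}\prod_{i\in G(\textbf{x})}\beta_i^{-1}$ over all group memberships, so Hoeffding's inequality gives deviations of order $(W-1)\sqrt{\ln(1/\delta')/m_{\beta}}$. Because $h$ is the ERM output and hence data-dependent, this concentration must hold uniformly over $\cH$, costing a union bound over $\cH$ (finite, or handled via Sauer--Shelah for a VC class) together with the $O(k)$ events controlling the base rates drawn from $S_{\beta}$; forcing the bound below $\epsilon/2$ yields the displayed lower bound on $m_{\beta}$ and the factor $\ln\frac{4\lvert H\rvert(k+1)}{\delta}$.

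For term~(A), let $\Delta:=\max_{(\textbf{x},y)}\lvert\hat{w}(\textbf{x},y)-w(\textbf{x},y)\rvert$ over the support of $\cD_{\beta}$; the same ratio estimate, now using $\sum_i\hat{w}(\textbf{x}_i,y_i)\ge m_{\beta}(1-\Delta)$, shows term~(A) $=O(\Delta)$, so it suffices to make $\Delta$ small, which I would do in three layers. First, a multiplicative Chernoff bound controls the empirical base rates: $\lvert\hat{p}_i-p_i\rvert\le\gamma_i$ once $m_i\gtrsim p_i\gamma_i^{-2}\ln(1/\delta')$, and $\lvert\hat{p}_{\beta_i}-p_{\beta_i}\rvert\le\gamma_{\beta_i}$ once $m_{\beta_i}\gtrsim p_{\beta_i}\gamma_{\beta_i}^{-2}\ln(1/\delta')$ (and likewise for the population rates) — this is where the $1/p_i$ and $1/p_{\beta_i}$ factors come from. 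Second, since by Lemma~\ref{lem:inverseBeta} the estimator $\widehat{1/\beta_i}=\hat{p}_i(1-\hat{p}_{\beta_i})\hat{p}_{\beta_i}^{-1}(1-\hat{p}_i)^{-1}$ is a smooth function of the base rates with denominators bounded away from $0$ and $1$, a mean-value bound turns the additive base-rate errors into a relative error $\eta_i=O(\gamma_i/p_i+\gamma_{\beta_i}/p_{\beta_i})$ on $1/\beta_i$, and similarly for $\widehat{\beta_0}$. Third, since $\hat{w}/w=(\widehat{\beta_0}/\beta_0)^{\lvert G(\textbf{x})\rvert-1}\prod_{i\in G(\textbf{x})}(\widehat{1/\beta_i})(1/\beta_i)^{-1}$ is a product of at most $k$ factors each within $1\pm O(\eta)$, one gets $\Delta=O(Wk\max_i\eta_i)$; requiring this to be at most $\epsilon/2$ and unwinding the three layers produces the displayed bounds on $m_i$ and $m_{\beta_i}$ in terms of $(W-1)$, $p_i$, $p_{\beta_i}$, $\epsilon$, and $\ln\frac{2(2k+2)}{\delta}$, the $2k+2$ absorbing a union bound over the roughly $2(k+1)$ base-rate estimates. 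Allocating the failure probabilities so they sum to $\delta$, on the good event $\lvert\text{(A)}\rvert+\lvert\text{(B)}\rvert\le\epsilon$, which is the claim.

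The conceptual skeleton — the change-of-measure identity from Theorem~\ref{thm:biased_pmf}, the split into plug-in error plus self-normalized sampling error, Hoeffding and Chernoff, and a union bound over $\cH$ — is routine; the main obstacle is the quantitative bookkeeping, above all in term~(A). The base-rate estimation error is amplified both by the product over up to $k$ group memberships and by the weight scale $W$, which is exponentially large in $k$, and keeping both amplifications simultaneously under control while the self-normalization is in play is delicate. The precise dependence on $W$ in the final sample sizes — quartic, $(W-1)^4$, in the bound on $m_{\beta}$ and quadratic, $(W-1)^2$, in those on $m_i$ and $m_{\beta_i}$ — is exactly what is decided by how one chains the numerator and denominator deviations (a cruder product-rule treatment of the normalization factor against the weighted risk, which is bounded only by $W$, accounts for the extra power in the $m_{\beta}$ bound), and reproducing these closed forms together with the constant $11$ (from the particular partition of $\epsilon$ among the constituent errors) is where essentially all the effort lies.
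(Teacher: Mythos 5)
Your proposal is correct in substance and follows essentially the same route as the paper's proof: the change-of-measure identity of Theorem~\ref{thm:biased_pmf} (Lemma~\ref{lemma:C1}), a triangle-inequality split into the plug-in-weight error and the concentration of the true-weight normalized loss, multiplicative Chernoff bounds on $p_i$, $p_{\beta_i}$ propagated through the product form of $\hat{w}$ (Lemmas~\ref{lemma:estimating_p}--\ref{lemma:beta_0_inverse}), Hoeffding with a union bound over $\mathcal{H}$ and the $O(k)$ rate estimates (Lemmas~\ref{lemma:A2}, \ref{lemma:A3}, \ref{lemma:C2}), and a final rescaling of $\epsilon$ by $1/\bigl(11\,\beta_0^{k-2}\min_i \beta_i^{1-k}\bigr)$ that produces the stated sample sizes. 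The only real difference is organizational: you normalize by the empirical sum of the true weights and use a two-term split, while the paper routes the intermediate quantity through $1/\mathbb{E}_{\cD_\beta}[w(\textbf{x},y)]$ and splits into three terms combined in Lemma~\ref{lemma:D1} -- the probabilistic ingredients and bookkeeping are the same.
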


\subsection{Implications for Agnostic PAC Learning}

Now that we have proved that Algorithm~\ref{algo:biased_learning} can produce a model $h$ such that the reweighted loss of $h$ on the biased distribution is within an $\epsilon$ factor of the loss on the true distribution, it remains to show that this implies that the model class $\mathcal{H}$ is agnostic PAC learnable by Algorithm~\ref{algo:biased_learning} when training primarily on the biased distribution. We begin with a formal statement of the classic definition of agnostic PAC learning:

\begin{definition}[Agnostic PAC Learning \cite{shalev-shwartz_ben-david_2014}]
A hypothesis class \(\mathcal{H}\) is agnostic PAC learnable if there exists a function $m_{\mathcal{H}}:\left(0,1\right)^2 \rightarrow \mathbb{N}$ and a learning algorithm with the following property: For every $\epsilon, \delta \in (0,1)$ and for every distribution \(\mathcal{D}\) over $\mathcal{X} \times \mathcal{Y}$, when running the learning algorithm on $m\geq m_{\mathcal{H}}\left( \epsilon, \delta\right)$ i.i.d. examples generated by $\mathcal{D}$, the algorithm returns a hypothesis $h$ such that, with probability of at least $1-\delta$ (over the choice of the $m$ training examples)
\[L_{\mathcal{D}}(h) \leq \min_{h' \in \mathcal{H}} L_{\mathcal{D}}\left(h'\right) + \epsilon
\]
\end{definition}
 
Notice that this definition only includes one data-generating distribution. Therefore, to formally discuss agnostic PAC learning in the presence of underrepresentation bias, we define a new notion of PAC learnability that takes in both a biased and unbiased distribution and requires sufficient samples from each:
 
\begin{definition}[Agnostic PAC Learning with Underrepresentation and Intersectional Bias]
A hypothesis class \(\mathcal{H}\) is agnostic PAC learnable with underrepresentation and intersectional bias if there exists a function $m_{\mathcal{H}}:\left(0,1\right)^2 \rightarrow \mathbb{N}$, a function $m_{\beta,\mathcal{H}}:\left(0,1\right)^2 \rightarrow \mathbb{N}$ and a learning algorithm with the following property: For every $\epsilon, \delta \in (0,1)$ and for every unbiased distribution \(\mathcal{D}\) over $\mathcal{X} \times \mathcal{Y}$ and biased distribution \(\mathcal{D_{\beta}}\) over $\mathcal{X} \times \mathcal{Y}$ (defined as in Section~\ref{sec:prelims}), when running the learning algorithm on $m\geq m_{\mathcal{H}}\left( \epsilon, \delta\right)$ i.i.d. examples generated by $\mathcal{D}$ and $m_\beta \geq m_{\beta, \mathcal{H}}\left( \epsilon, \delta\right)$ i.i.d. examples generated by $\mathcal{D_\beta}$, the algorithm returns a hypothesis $h$ such that, with probability of at least $1-\delta$ over the choice of the $m$ and $m_\beta$ training examples
\[\frac{m_\beta}{\sum_{i=1}^{m_{\beta}} \hat{w}(\textbf{x}_i, y_i)}L_{S_\beta \widehat{\beta^{-1}}}(h) \leq \min_{h' \in \mathcal{H}} L_{\mathcal{D}}\left(h'\right) + \epsilon
\]
\end{definition}

\begin{restatable}{theorem}{secondTheorem}
\label{theorem:secondTheorem}
If the conditions of Theorem~\ref{theorem:mainTheorem} are satisfied, the class $\mathcal{H}$ is agnostically PAC learnable with underrepresentation and intersectional bias.
 
\end{restatable}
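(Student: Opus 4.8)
The plan is to run the textbook reduction from uniform convergence to agnostic PAC learnability, the only wrinkle being that the ``empirical'' object here is the normalized reweighted biased risk $\widehat L(h'):=\tfrac{m_\beta}{\sum_{i=1}^{m_\beta}\hat w(\mathbf{x}_i,y_i)}\,L_{S_\beta\widehat{\beta^{-1}}}(h')$ rather than an ordinary empirical risk. The learning algorithm witnessing the definition is Algorithm~\ref{algo:biased_learning} itself, and the two sample-complexity functions $m_{\mathcal{H}},m_{\beta,\mathcal{H}}$ are to be read off from Theorem~\ref{theorem:mainTheorem}.

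First I would set $m_{\beta,\mathcal{H}}(\epsilon,\delta)$ equal to the lower bound on $m_\beta$ from Theorem~\ref{theorem:mainTheorem} with $\epsilon$ replaced by $\min\{\epsilon,\tfrac1{10}\}$, which forces the constraint $\epsilon<\tfrac19$ of that theorem to hold while only strengthening the guarantee. I would then choose $m_{\mathcal{H}}(\epsilon,\delta)$ large enough that splitting $m\ge m_{\mathcal{H}}(\epsilon,\delta)$ i.i.d.\ draws from $\cD$ by group yields at least $m_i$ samples in each $G_i$ (the per-group bound of Theorem~\ref{theorem:mainTheorem}) with probability $\ge 1-\delta/4$ --- a multiplicative Chernoff estimate using the fixed probabilities $\Pr_{\cD}[\mathbf{x}\in G_i]$ --- and likewise enlarge $m_{\beta,\mathcal{H}}$ so the per-group biased counts $m_{\beta_i}$ are met with probability $\ge1-\delta/4$. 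All dependence on $k$, on $\beta$, and on the base rates $p_i,p_{\beta_i}$ is permitted, since these are fixed by the distributions $\cD,\cD_\beta$ that the PAC definition quantifies over; rescaling the $\delta$'s by a constant absorbs the extra failure events.

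Next I would invoke Theorem~\ref{theorem:mainTheorem}. Its proof proceeds via a uniform-convergence argument over $\mathcal{H}$ (reflected by the presence of $|\mathcal{H}|$ inside the logarithm of the sample sizes), so on an event of probability $\ge1-\delta$ one has, simultaneously for all $h'\in\mathcal{H}$,
\[
\bigl|\,\widehat L(h')-L_{\cD}(h')\,\bigr|\le\epsilon .
\]
(If one prefers to use Theorem~\ref{theorem:mainTheorem} exactly as stated, it is enough to note that the lemmas controlling $\widehat{1/\beta_i}$ and $\widehat{\beta_0}$ are hypothesis-free, while the remaining concentration step, instantiated at the single fixed hypothesis $h^\star\in\argmin_{h'\in\mathcal{H}}L_{\cD}(h')$, gives $|\widehat L(h^\star)-L_{\cD}(h^\star)|\le\epsilon$, which is all that the argument below uses.)

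Finally, the ERM step closes the loop: since the normalizing factor $\tfrac{m_\beta}{\sum_i\hat w(\mathbf{x}_i,y_i)}$ does not depend on the hypothesis, the output $h$ of Step~4 of Algorithm~\ref{algo:biased_learning} minimizes $\widehat L$ over $\mathcal{H}$, so $\widehat L(h)\le\widehat L(h^\star)$, and combining with the bound above at $h^\star$,
\[
\widehat L(h)\le\widehat L(h^\star)\le L_{\cD}(h^\star)+\epsilon=\min_{h'\in\mathcal{H}}L_{\cD}(h')+\epsilon ,
\]
which is exactly the inequality demanded by the definition of agnostic PAC learning with underrepresentation and intersectional bias. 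I expect the obstacle to be organizational rather than mathematical: one must make sure the controlling inequality is available at the \emph{distribution-dependent} minimizer $h^\star$ (hence the uniform, or at least fixed-hypothesis, form of Theorem~\ref{theorem:mainTheorem} is what is needed, not merely the bound at the algorithm's output), and one must carry out the small bookkeeping that converts the per-group sample sizes $m_i,m_{\beta_i}$ into honest functions of $(\epsilon,\delta)$, which requires a lower bound on the group-membership probabilities together with the Chernoff estimate above.
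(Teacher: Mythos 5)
Your proposal is correct and follows essentially the same route as the paper's proof: invoke the concentration guarantee of Theorem~\ref{theorem:mainTheorem} so that the normalized reweighted biased risk tracks $L_{\cD}$ (in particular at the distribution-dependent minimizer $h^\star$), then use the optimality of the ERM output $h$ on that reweighted objective to conclude $\widehat{L}(h)\le \widehat{L}(h^\star)\le \min_{h'\in\mathcal{H}}L_{\cD}(h')+\epsilon$. The paper packages this last step as a two-case contradiction argument and glosses over the bookkeeping you supply (forcing $\epsilon<\tfrac19$ and converting the per-group sample sizes $m_i, m_{\beta_i}$ into honest functions of $(\epsilon,\delta)$), but the substance is identical.
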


To summarize, recall that Algorithm~\ref{algo:biased_learning} employs the Empirical Risk Minimization (ERM) principle on the reweighted sample \( S_\beta \) to find \( h \), the hypothesis that minimizes the empirical error. Thus, \( h \) is optimal on the reweighted sample. Given its optimality on the reweighted sample and the proximity of its true error on \( \mathcal{D} \) to its reweighted empirical error on \( S_\beta \), it follows that \( h \)'s true error on \( \mathcal{D} \) is close to the best possible error of any hypothesis in \( \mathcal{H} \) on \( \mathcal{D} \). Therefore, the intersectional bias learning algorithm satisfies the Agnostic PAC learning definition, as the learned hypothesis \( h \) adheres to the error bound relative to the best hypothesis in \( \mathcal{H} \) on \( \mathcal{D} \).

\section{Experiments}
In this section, we perform a thorough empirical investigation of our method. Through this investigation, we aim to address several distinct, though interrelated, objectives. 
\begin{enumerate}
\item First, we investigate the validity of the independence assumption that we made on group membership. We measure the correlation between sensitive attributes of interest and conduct $\chi^2$ tests to determine the degree of confidence we have in potentially rejecting the null that these attributes are pairwise independent. We do this both for the overall group membership and group membership conditional on positive labels. Importantly, we realize that this does not test for mutual independence, which is the true assumption. However, pairwise independence is necessary for mutual independence and easier to test for, so we utilize it as a useful proxy. We keep these results in mind during the next two parts of our investigation to see if there is a noticeable degradation in Algorithm~\ref{algo:biased_learning}'s performance in domains with higher dependence between the groups.

\item Next, we validate the effectiveness of our proposed reweighting scheme. In particular, we demonstrate the model's ability to approximate the loss of hypotheses accurately on the true distribution using a biased sample, and therefore, according to Theorem~\ref{theorem:secondTheorem}, find a model close to that produced by training on the full unbiased data set. This showcases how the model addresses underrepresentation and intersectional biases in various scenarios. In this section, we compare the efficacy of our approach to the results obtained from using other popular approaches in the literature for adjusting group representation in data.

\item Finally, we separate our analysis to study how well our reweighting approach fares on individual groups in the data. Although our theory does not provide guarantees of any improvement in downstream fairness when using our method, we would hope to see an improvement in underrepresented groups, at least in most cases.
\end{enumerate}
\subsection{Data Description}

In this section, we describe the data sets used for evaluation, including the nature and source of the data sets and the characteristics of intersectional biases present. We focus on the following three data sets for our experiments, which we selected because they are standard fairness benchmark datasets with flexible options for sensitive feature selection. The first is the Adult data set\cite{misc_adult_2}, which predicts whether an individual's income exceeds \$50K/yr based on census data. The second is the COMPAS data set \cite{compas}, which consists of arrest data from Broward County, Florida, originally compiled by ProPublica. Finally, we utilize the American Community Survey (ACS) Employment data set \cite{DBLP:journals/corr/abs-2108-04884}, which is used to predict individual employment status.

\begin{table}[ht]
\footnotesize
\centering
\begin{tabular}{|l|c|c|l|l|l|}
\hline
Dataset   & \multicolumn{1}{l|}{Samples} & \multicolumn{1}{l|}{Features} & Group Types & Label                                \\  
\hline

Adult & 48842 & 14 & Sex, Race & Income > \$50K/year \\
\hline
COMPAS & 4904 & 9 & Sex, Race & Two year recidivism \\
\hline
ACS Employment & 196104& 12 & Sex, Race & Binary employment status\\
\hline
\end{tabular}
\label{tab:data}
\end{table}

In the Adult data set, the possible racial attributes are \textit{White, Pacific Islander, Eskimo, Other, and Black}, and the possible values for sex are \textit{Male} and \textit{Female}. Because sex is encoded as a binary variable, those two attributes are fully dependent. Therefore, we remove the attribute \textit{Male} from our study and only consider the dropout bias for \textit{Female}. Note that this aligns with our earlier assumption about having samples that do not belong to any of the defined groups. We do the same for the race attribute, removing \textit{White} from our list of groups suffering from underrepresentation bias. We follow a similar procedure for the other two datasets. For the COMPAS data set, our final sensitive feature list is \textit{Black, Hispanic, Other, and Female}, and for the ACS Employment data set, our categories are \textit{Black, American Indian, Alaska Native, Other Native, Asian, Native Hawaiian, Female}.\footnote{We chose to use the combination of ``race'' and ``sex'' as protected attributes in all experiments in order to have more consistent comparisons. In addition, these attributes are ones typically examined in terms of statistical group fairness metrics for these datasets which, while not the subject of this paper, is a related area of interest. However, we equally well could have chosen any other attributes or functions of attributes to use as the group labels. However, it is important to note that some choices of groups might be more appropriate to the modeling framework than others in terms of the degree to which the independence assumptions are violated. For example, a choice of sensitive attributes that would not be good choices based on the independence assumption would be ``race'' and ``zip code.''}
\subsection{Baselines}

We now provide a brief description of comparative models for baseline measurements. First, we compare the results obtained with Algorithm~\ref{algo:biased_learning} to those obtained by training a model exclusively on unbiased data or exclusively on biased data. Then, we compare to results obtained by employing an over-sampling technique and an under-sampling technique, respectively. For the over-sampler, we selected the Synthetic Minority Oversampling Technique (SMOTE) \cite{smote}, which functions by generating new, synthetic samples from minority groups based on combinations of neighboring samples. Our approach involves categorizing each combination of race and sex into distinct product groups, followed by upsampling to equalize the size of these groups. \footnote{We use the publicly available implementation for SMOTE which is available in the \textit{imbalanced-learn} package at \url{https://imbalanced-learn.org/stable/references/generated/imblearn.over_sampling.SMOTE.html}, and the Random Under Sampler is available at \url{https://imbalanced-learn.org/stable/references/generated/imblearn.under_sampling.RandomUnderSampler.html}} For the under-sampler, we select the Random Under Sampler, available in the \textit{imbalanced-learn} package, which randomly downsamples majority classes so that all classes have the same number of samples. For all methods, we sample without replacement. We assume that there is no access to the unbiased data when using both SMOTE and the Random Under Sampler, which precludes the estimation of drop-out rates and thereby provides a very naive bias correction baseline (modeling a logical approach someone might take in practice without our framework).\footnote{One exception is if, when using the Random Under Sampler, we emerge with an aggregate data set smaller than the biased dataset. In this case, we must re-sample with replacement to augment the size.} Because the focus of this work is on the data curation and not downstream model class selection, we use logistic regression as the model class for the classification model trained on these data.
\subsection{Experimental Setup}
\subsubsection{Data Partitioning and Preparation}
Each data set is divided into training and testing sets with a ratio of 80\% to 20\%. This division is applied across all experiments to maintain consistency. For our reweighting approach only, a further 20\% of the training data is isolated to serve as an unbiased dataset from which the reweighting function is estimated. For each data set, rows with missing values were removed, and categorical features were one-hot encoded.

\subsubsection{Reweighting Algorithm Implementation}

    We generate a random vector $\mathbf{\beta}$ to represent the underrepresentation bias in each group. The biased dataset is then created by selectively filtering positive examples from each group in the training data, based on the $\mathbf{\beta}$ vector. This process models the scenario where certain groups' positive examples are underrepresented. Using both the biased dataset and the unbiased holdout set, we then estimate the bias parameters. Finally, the reweighted loss is (approximately) minimized on the biased dataset. 

\subsubsection{Additional Model Training Configurations}

For the approach we label \textit{Downsampled}, a model is trained on the unbiased data, downsampled to match the size of the biased dataset. This model serves as a comparator for evaluating the effectiveness of the reweighting algorithm. For the approach that we label \textit{Biased}, the model is trained solely on the biased dataset without any reweighting. Additional models are trained using SMOTE, to upsamples the positive class of each group to equalize group representations, and the random under-sampler (which we refer to as \textit{Under}), to downsample the positive classes to achieve uniform group sizes. For SMOTE and under-sampling, after equalizing the positive group sizes, we then resample the aggregate data so that the size is that of the biased dataset. 
\subsubsection{Performance Evaluation}
 
    The performance of each model is evaluated on the test set. This evaluation focuses on assessing the model's generalization capabilities and its effectiveness in addressing underrepresentation bias, as postulated in our theoretical framework. The experiments are carried out in a Python3 Jupyter Notebook environment on a MacBook Pro equipped with an Apple MC Pro chip and 36 GB of memory. To ensure the robustness of our results, each experimental procedure is repeated with 100 different random seeds. 

\subsubsection{Evaluation Metrics}

To evaluate our approach, we assess several performance metrics. We compare each rebalancing strategy in terms of the \textit{population accuracy} obtained by the learned classifier, as well as the \textit{group-specific accuracies}.\footnote{We also compare \textit{precision}, \textit{recall}, and \textit{F1} scores of the model, but relegate this to the Appendix due to space constraints.} As discussed earlier, our theoretical bounds on error gaps only apply directly to the overall loss -- however, we might expect them to extend to the other error metrics. We investigate this further empirically. We also use several statistical tests to assess the degree of correlation between the sensitive attributes we study and the confidence with which we may be able to reject the null hypothesis that the attributes are pairwise independent.  
\section{Results}
\subsection{Independence Assumption}
We begin by visualizing the correlation matrices -- shown as heat maps in the top row of Figure~\ref{fig:heatmaps} -- for the two categories of sensitive variables chosen for each dataset: sex and race. While this is of course only a proxy for statistical dependence, it does suggest that many attributes have fairly weak pairwise relationships. Importantly, there is also little difference in the correlations observed between sensitive attributes over the entire dataset and sensitive attributes when the dataset is restricted to positive samples. Next, we use the $\chi^2$ test of independence to test for pairwise dependencies between the attributes, and we display heat maps of the $p$-values for each pairwise test in the bottom row of Figure~\ref{fig:heatmaps}. We find the results to be more mixed, with the COMPAS dataset displaying the highest proportion of attribute pairs with significant $p$-values. Interestingly, all data sets seemed to show a slight decrease in the number of attributes with significant $p$-values for the $\chi^2$ test when calculated just over positive samples. With these observations in mind, we continue to analyze the overall performance of models trained on biased data, unbiased data, and reweighted or resampled data.

\begin{figure}[h]
\begin{subfigure}{0.3\textwidth}
\begin{subfigure}{0.49\textwidth}
\begin{subfigure}{0.99\textwidth}
\includegraphics[width=1.25\textwidth]{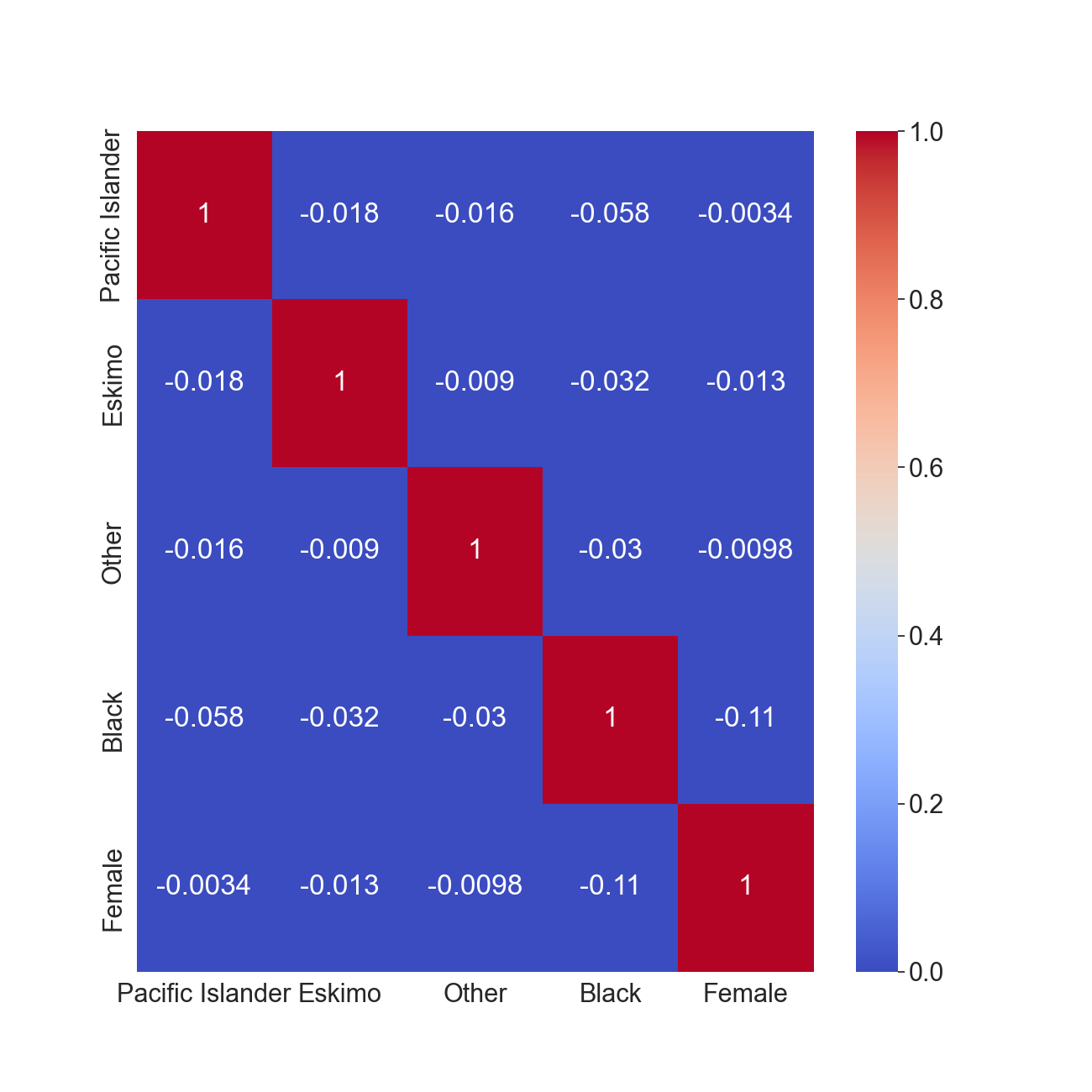}
\end{subfigure}
\vfill
\begin{subfigure}{0.99\textwidth}
\includegraphics[width=1.25\textwidth]{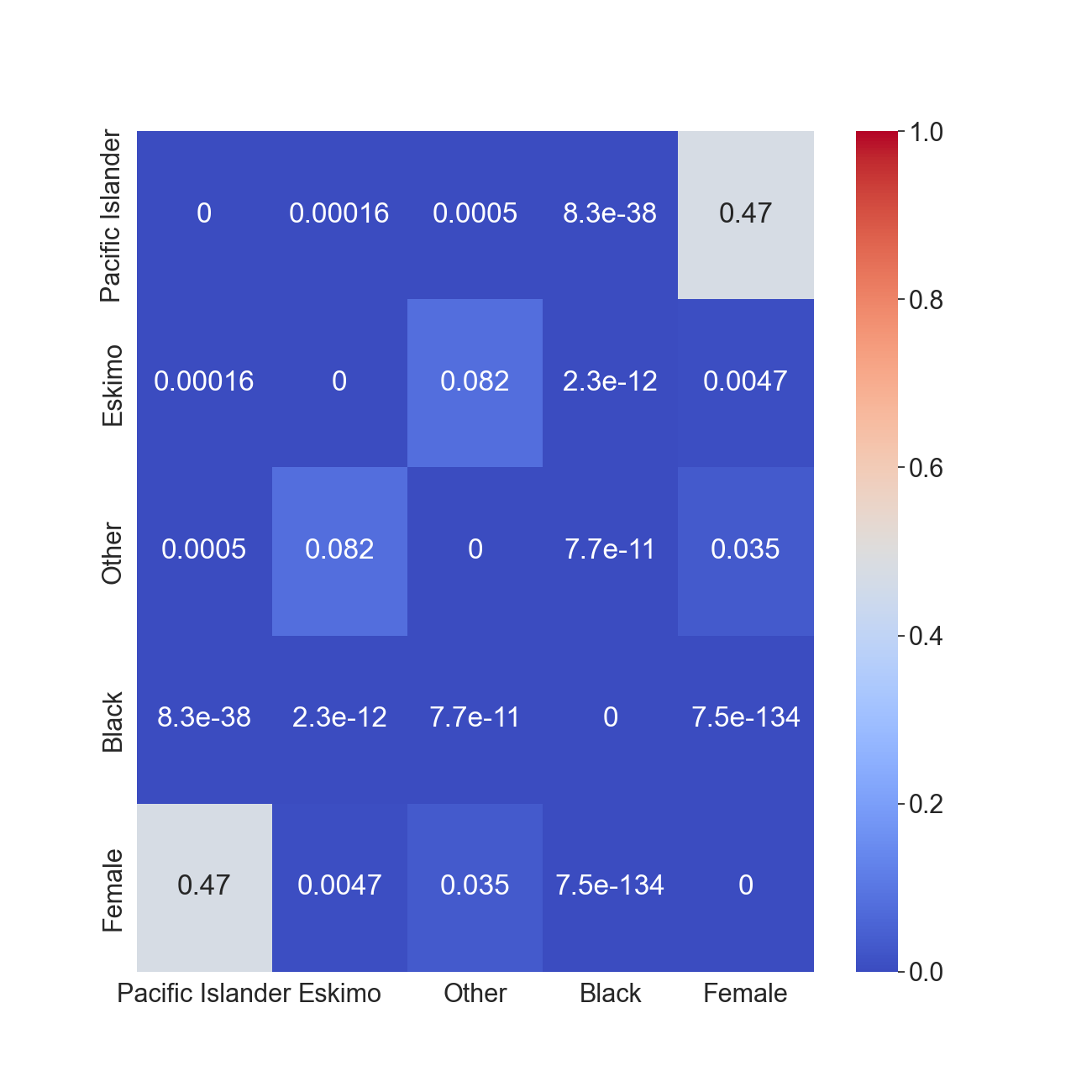}
\end{subfigure}
\caption*{Marginal}
\end{subfigure}
\hfill
\begin{subfigure}{0.49\textwidth}
\begin{subfigure}{0.99\textwidth}
\includegraphics[width=1.25\textwidth]{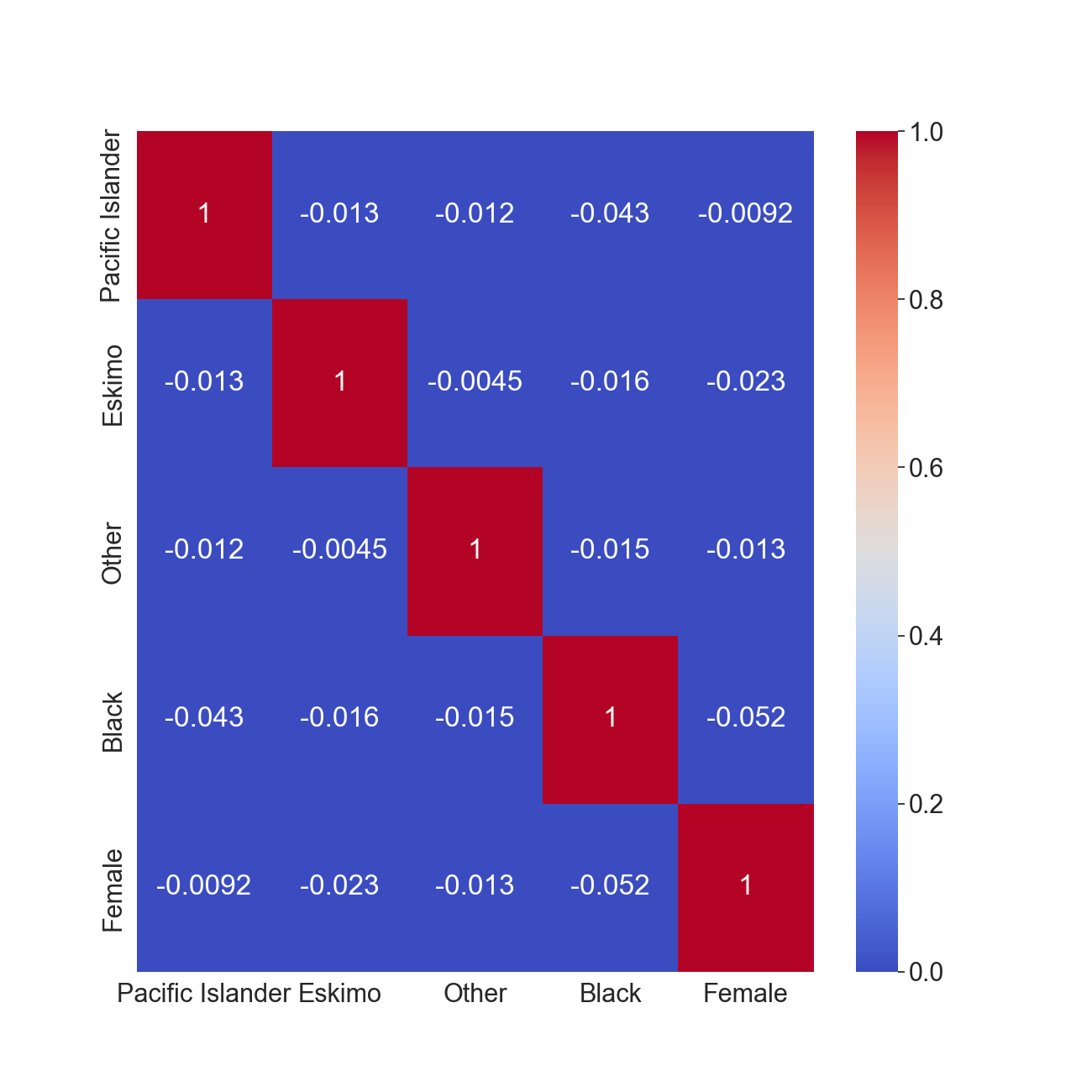}
\end{subfigure}
\vfill
\begin{subfigure}{0.99\textwidth}
\includegraphics[width=1.25\textwidth]{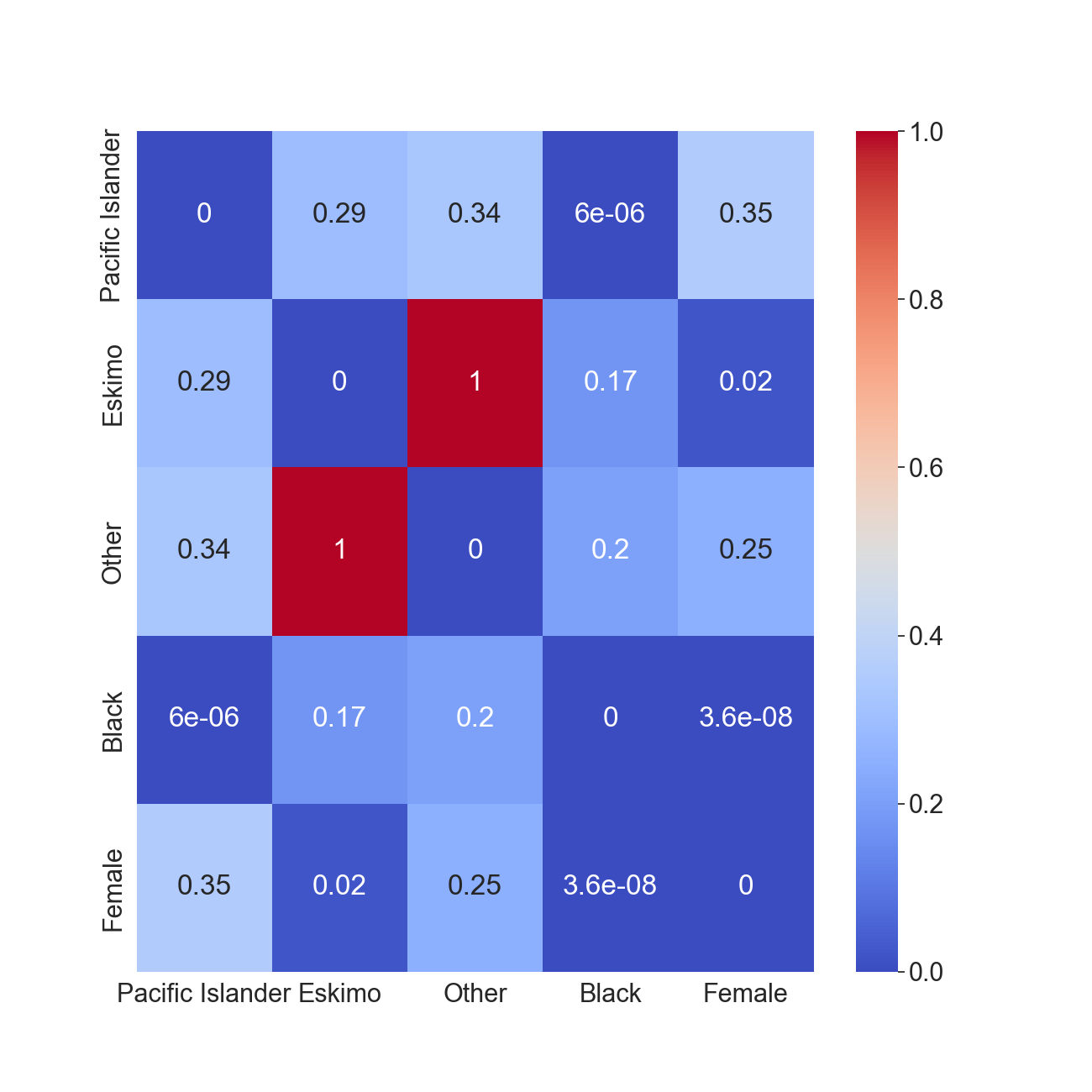}
\end{subfigure}
\caption*{Conditional}
\end{subfigure}
\caption*{Adult}
\end{subfigure}
\begin{subfigure}{0.3\textwidth}
\begin{subfigure}{0.49\textwidth}
\begin{subfigure}{0.99\textwidth}
\includegraphics[width=1.25\textwidth]{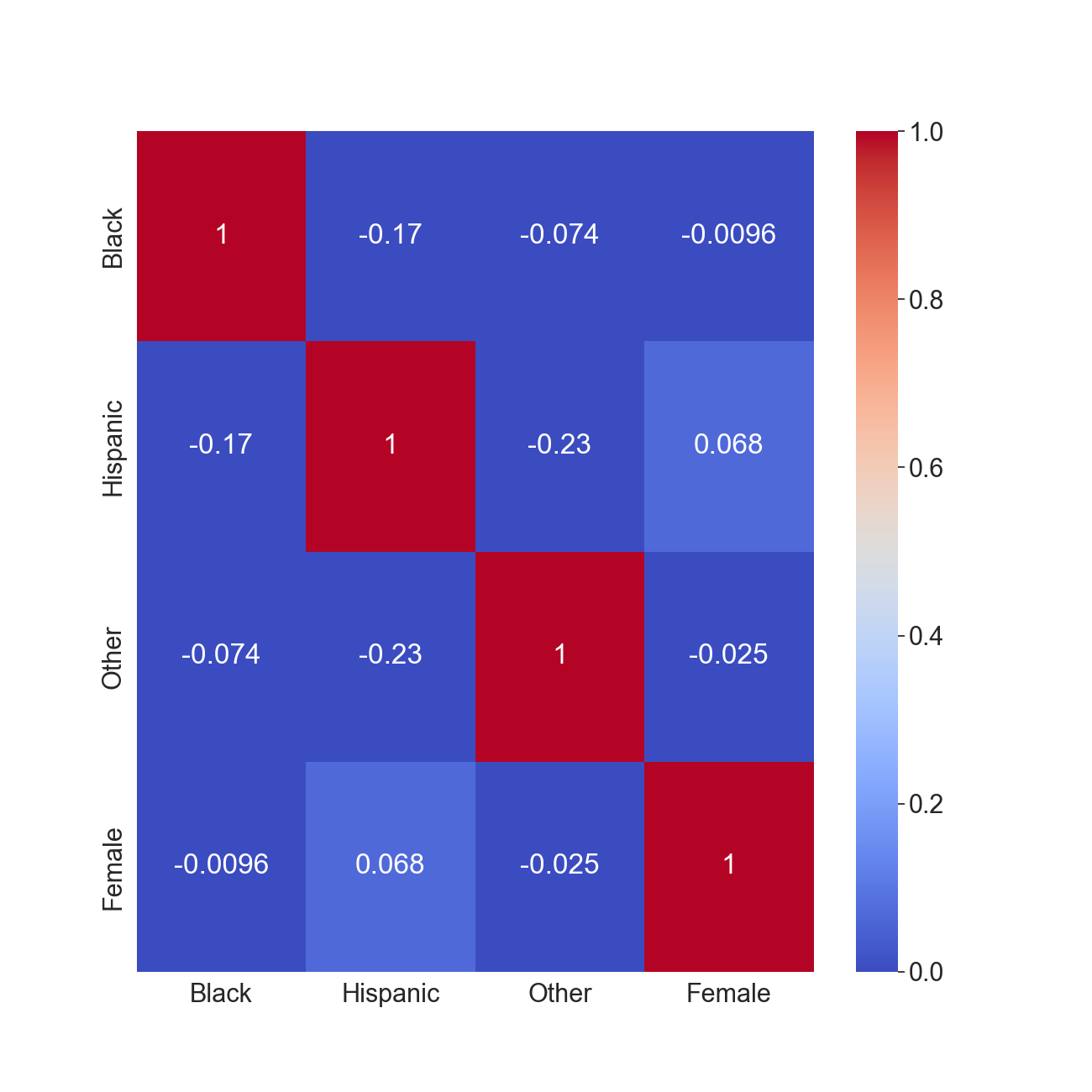}
\end{subfigure}
\vfill
\begin{subfigure}{0.99\textwidth}
\includegraphics[width=1.25\textwidth]{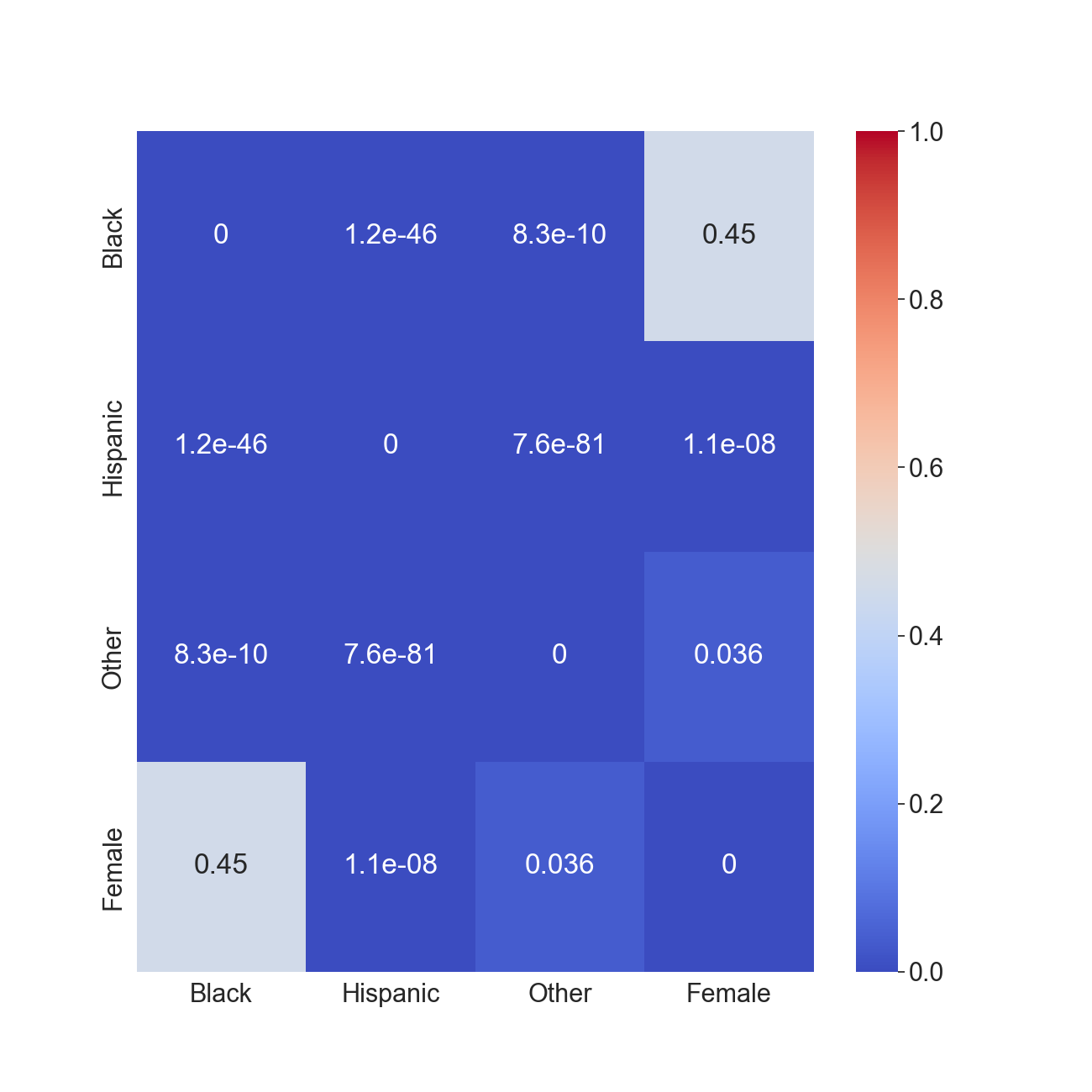}
\end{subfigure}
\caption*{Marginal}
\end{subfigure}
\hfill
\begin{subfigure}{0.49\textwidth}
\begin{subfigure}{0.99\textwidth}
\includegraphics[width=1.25\textwidth]{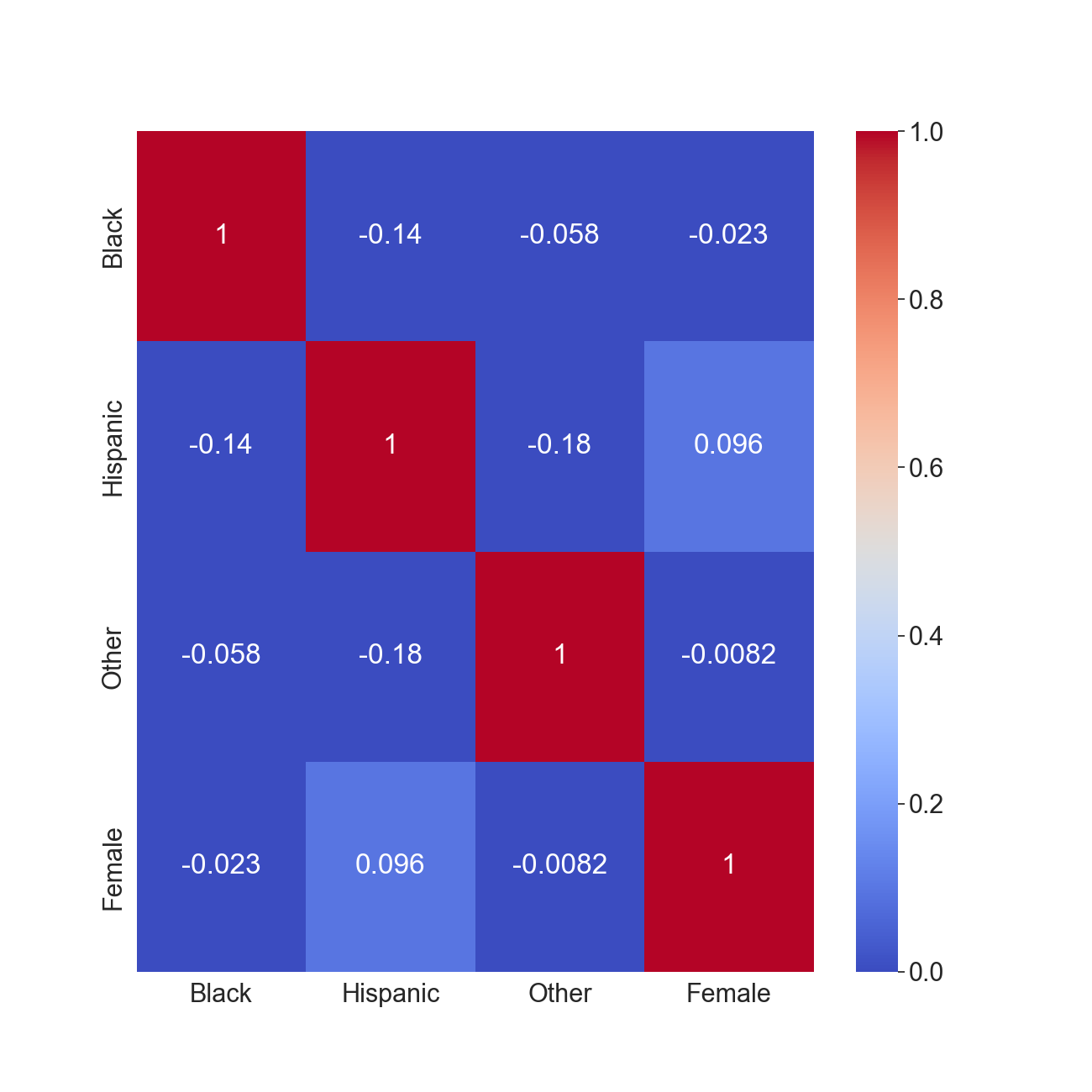}
\end{subfigure}
\vfill
\begin{subfigure}{0.99\textwidth}
\includegraphics[width=1.25\textwidth]{Figures/heatmap_compas_all_chi2.png}
\end{subfigure}
\caption*{Conditional}
\end{subfigure}
\caption*{COMPAS}
\end{subfigure}
\begin{subfigure}{0.3\textwidth}
\begin{subfigure}{0.49\textwidth}
\begin{subfigure}{0.99\textwidth}
\includegraphics[width=1.25\textwidth]{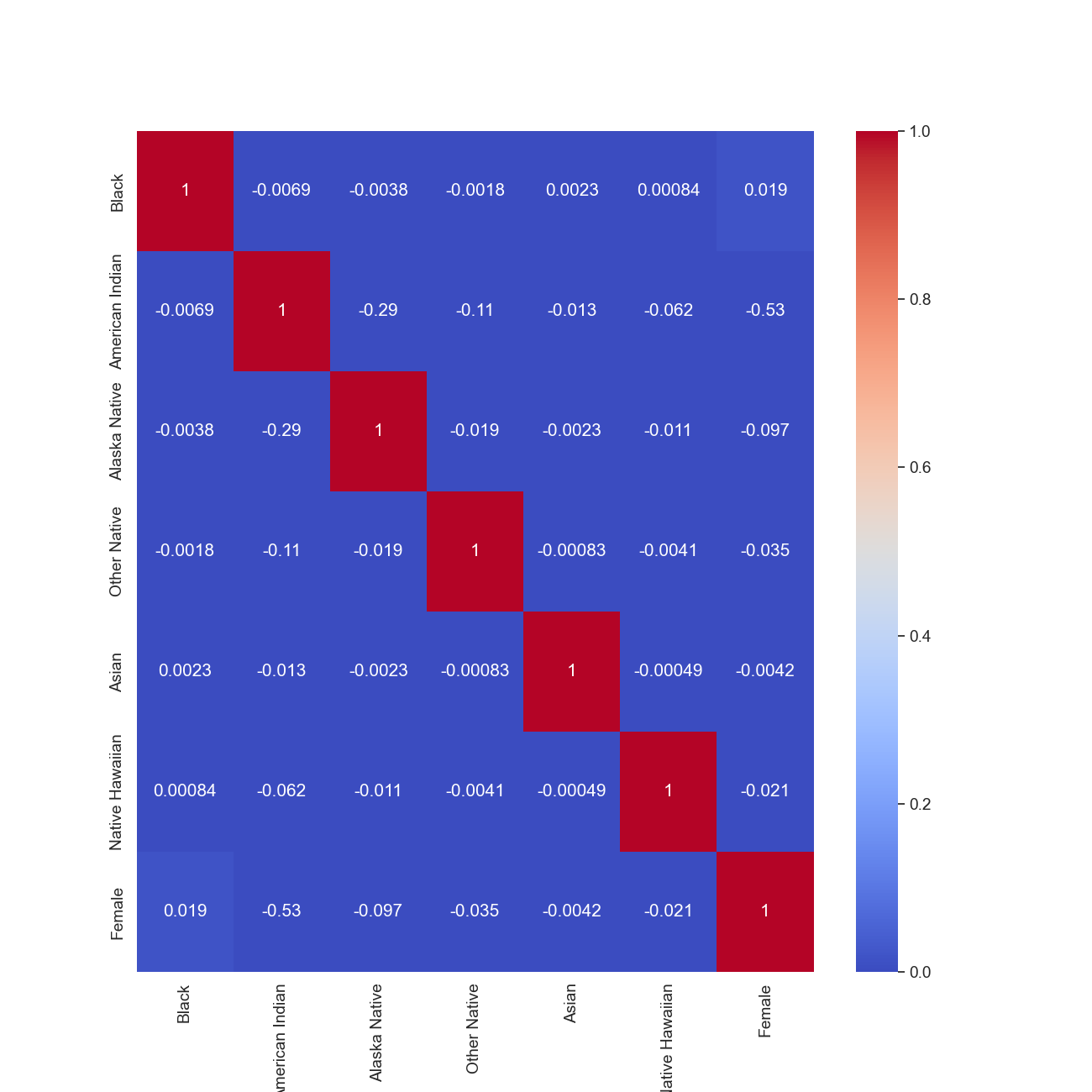}
\end{subfigure}
\vfill
\begin{subfigure}{0.99\textwidth}
\includegraphics[width=1.25\textwidth]{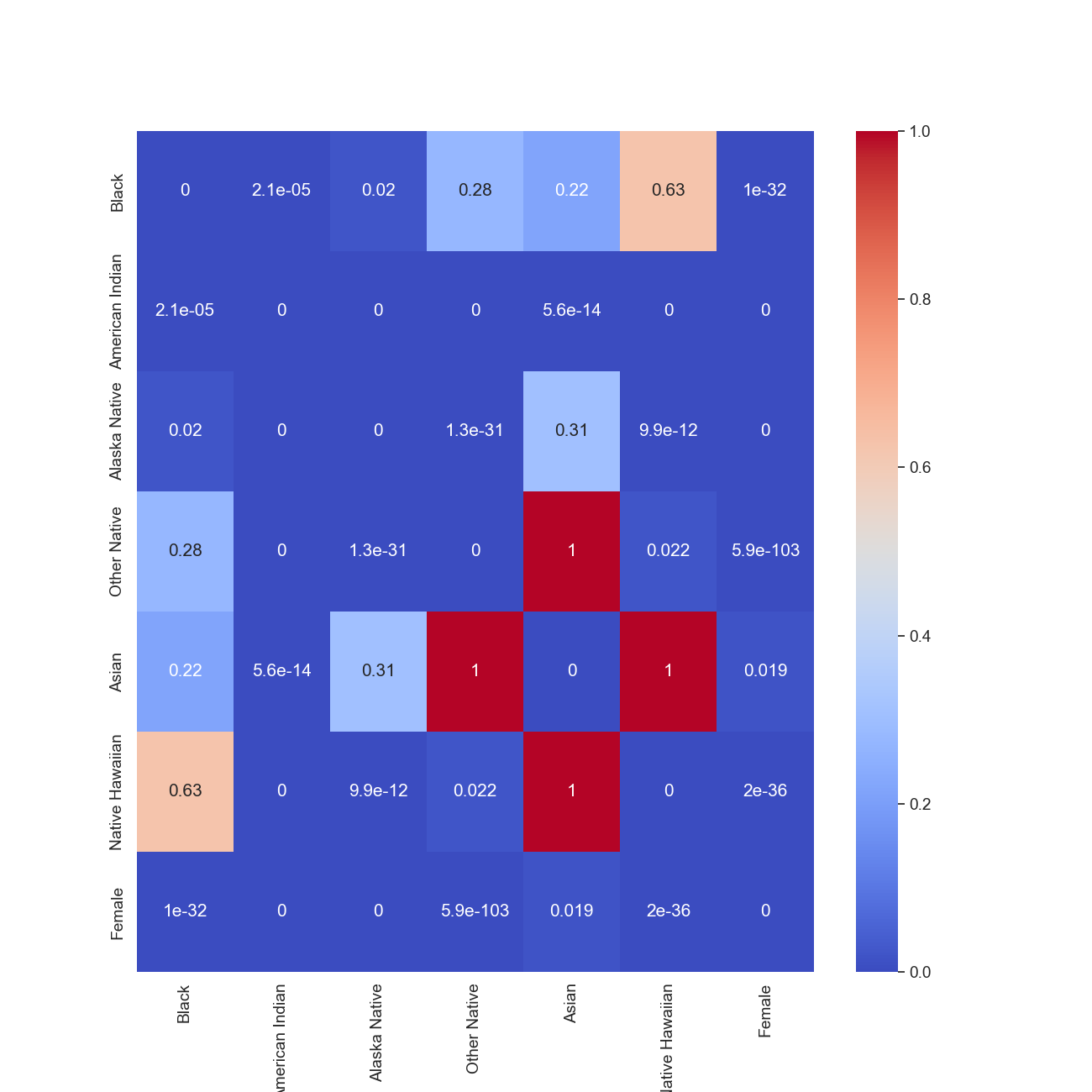}
\end{subfigure}
\caption*{Marginal}
\end{subfigure}
\hfill
\begin{subfigure}{0.49\textwidth}
\begin{subfigure}{0.99\textwidth}
\includegraphics[width=1.25\textwidth]{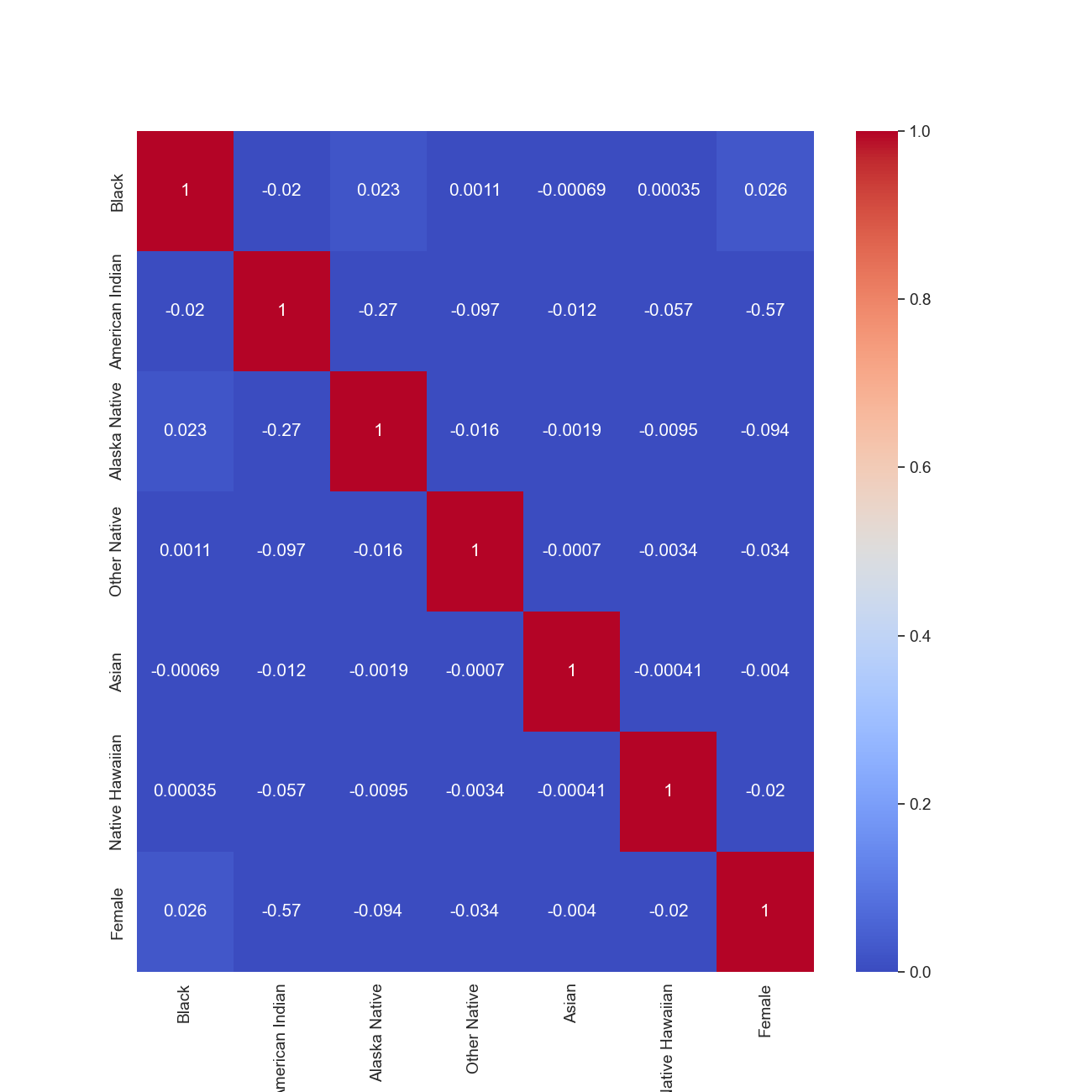}
\end{subfigure}
\vfill
\begin{subfigure}{0.99\textwidth}
\includegraphics[width=1.25\textwidth]{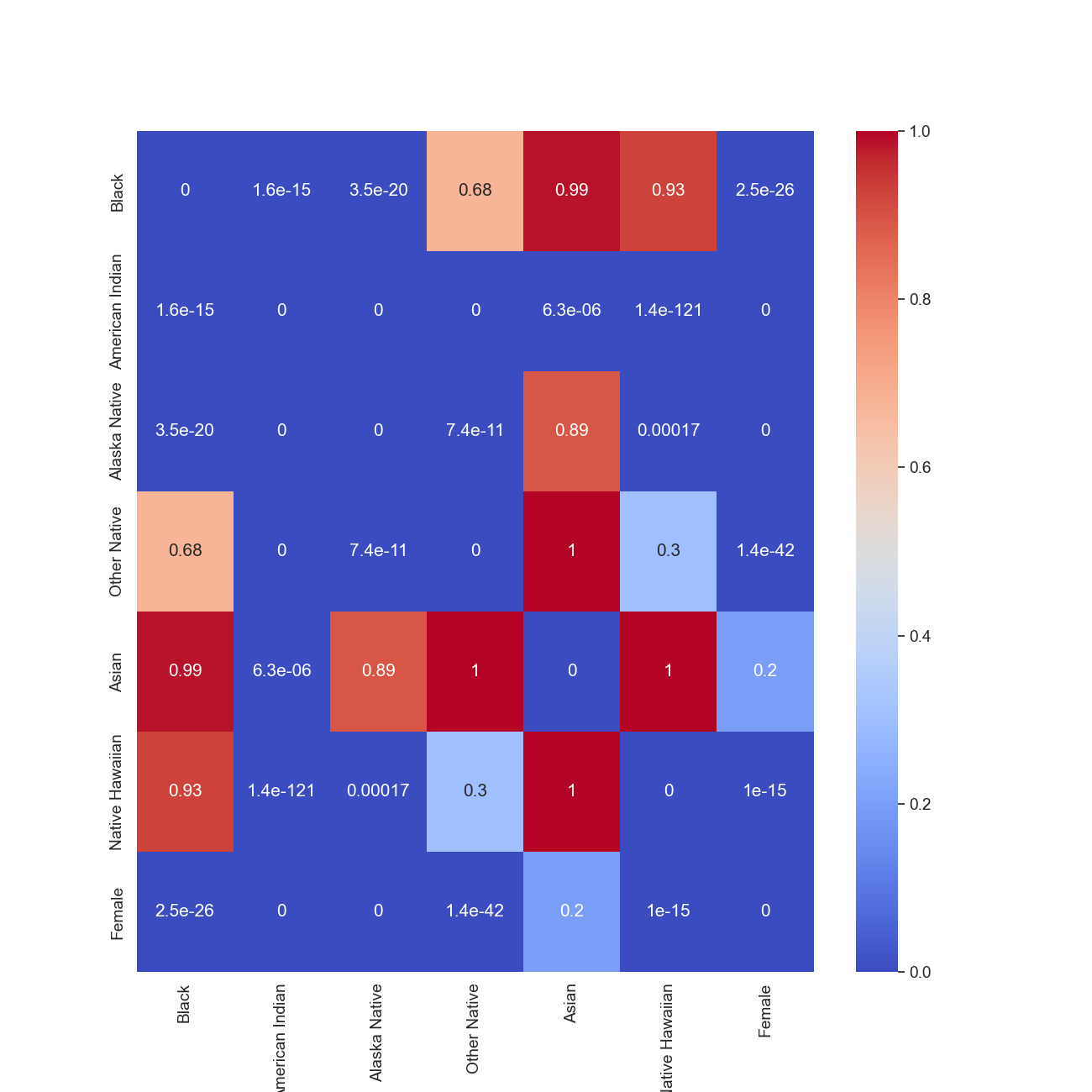}
\end{subfigure}
\caption*{Conditional}
\end{subfigure}
\caption*{ACS Employment}
\end{subfigure}
\caption{Heatmaps of Pairwise Correlation (Top Row) and $p$-values for $\chi^2$ Test (Bottom Row)}
\label{fig:heatmaps}
\end{figure}

\subsection{Evaluation of Population Accuracy Across Models}
In the quest to address and mitigate biases in predictive modeling, we conduct a comparative analysis of population accuracy across several models. Figure~\ref{fig:population} displays the distribution of accuracy for five different pre-processing and training methodologies: Unbiased, Reweighted, Biased, SMOTE, and RandomUnderSampler (Under), as applied to three distinct data sets: Adult, COMPAS, and ACS Employment. 
\label{fig:population}
\begin{figure}[h]
\begin{subfigure}{0.33\textwidth}
\includegraphics[width=\textwidth]{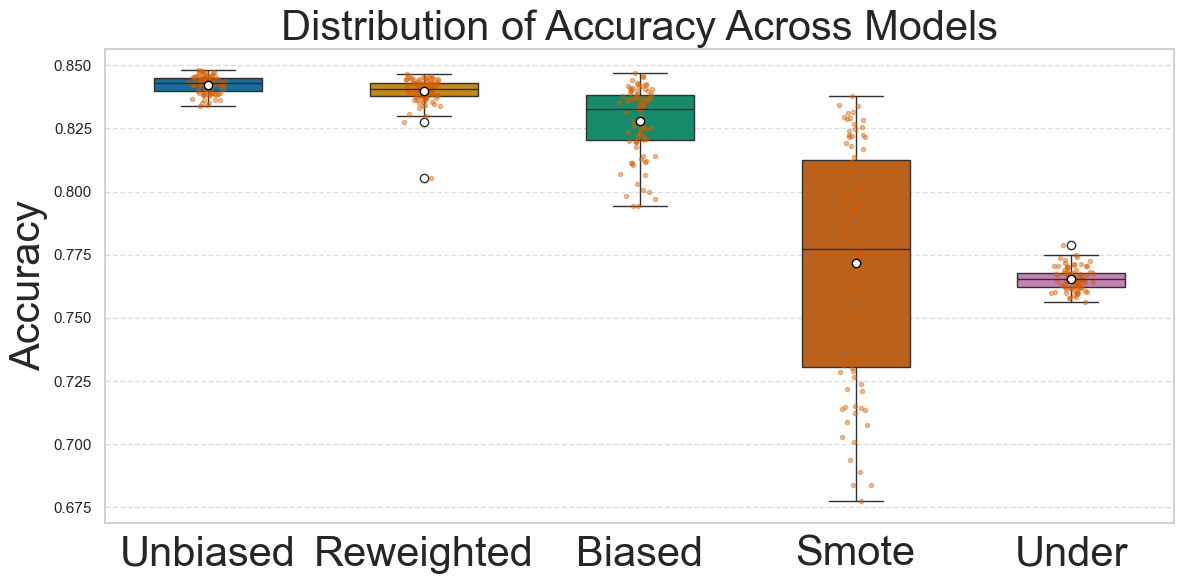}
\caption{Adult}
\end{subfigure}
\begin{subfigure}{0.33\textwidth}
\includegraphics[width=\textwidth]{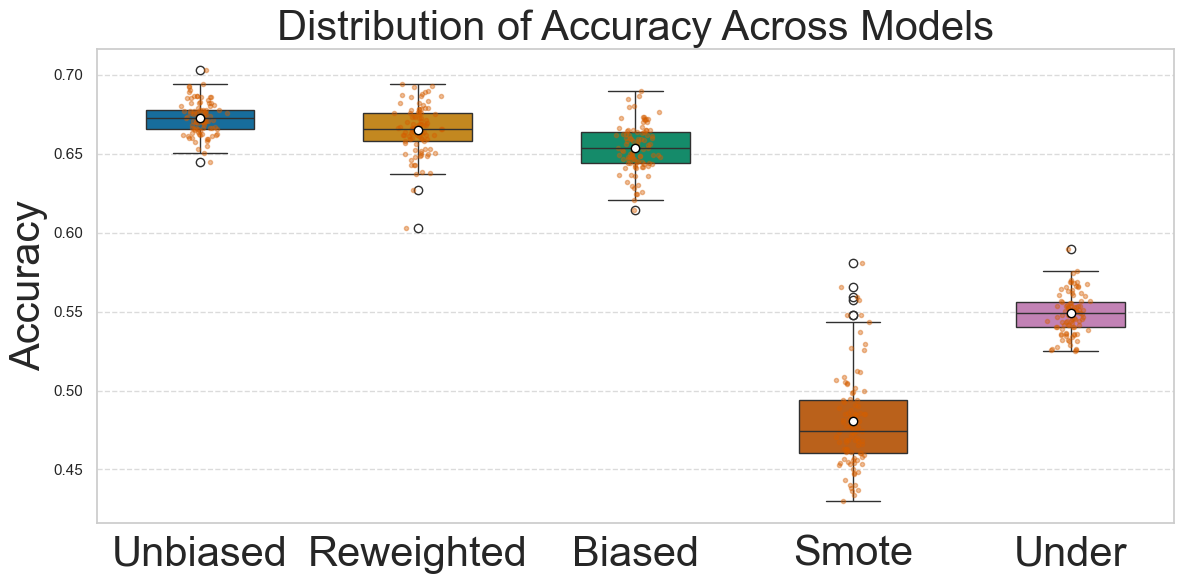}
\caption{COMPAS}
\end{subfigure}
\begin{subfigure}{0.33\textwidth}
\includegraphics[width=\textwidth]{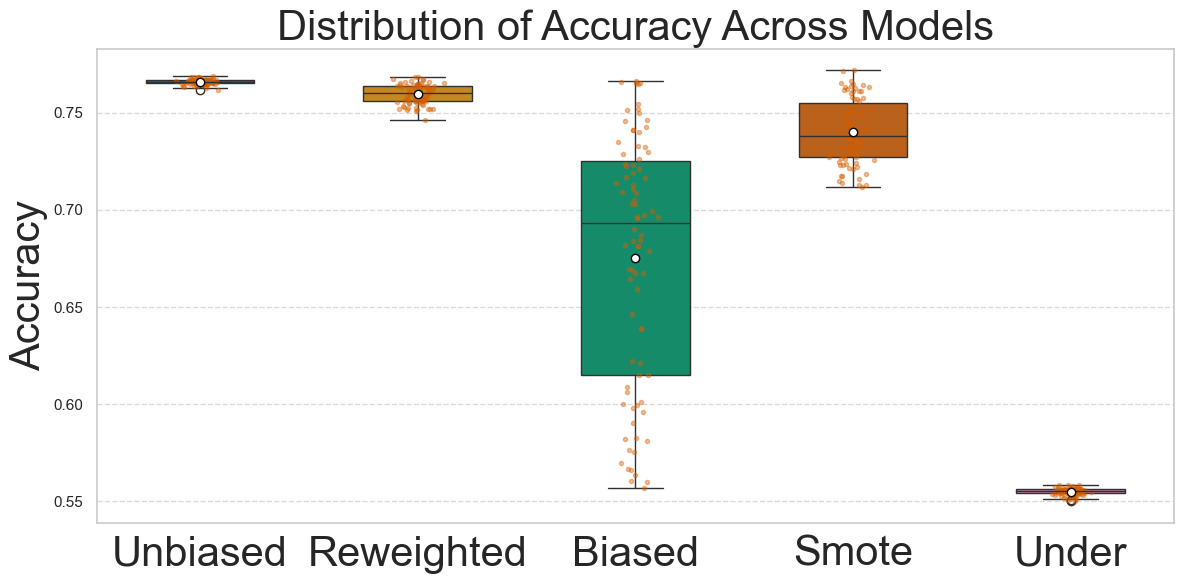}
\caption{ACS Employment}
\end{subfigure}
\caption{Population Accuracy Observed over 100 Seeds for Each Model}
\end{figure}

For the Adult dataset (a), the Reweighted model displays a central tendency towards higher accuracy, closely mirroring the Unbiased model. This suggests that the reweighting technique is effective in approximating the ideal scenario where the model is trained on an unbiased dataset. The Biased model demonstrates a noticeably lower median accuracy with a wider interquartile range, indicating less reliable performance across runs. SMOTE and the Under-sampling technique exhibit median accuracies that are inferior to the Reweighted model, with SMOTE showing significant variance as evidenced by a long tail of outliers. 

The COMPAS dataset (b) presents a different pattern. The Unbiased model outperforms other methodologies with the highest median accuracy and the smallest interquartile range, highlighting its stable and superior predictive ability. The Reweighted model, while competitive, falls short of the Unbiased model's performance, as indicated by a slightly lower median accuracy and a broader interquartile range. Both the Biased and SMOTE models exhibit reduced performance with wider variability, and the Under-sampling model, in particular, shows a pronounced spread in accuracy scores, suggesting inconsistent model behavior across different iterations.\footnote{Due to space constraints, we include the group accuracy plots for COMPAs in the Appendix.}

For the ACS Employment data set (c), the Reweighted model's performance is commendable, achieving a median accuracy comparable to that of the Unbiased model, which indicates the effectiveness of the reweighting approach in this context. The Biased model's performance is markedly lower, reinforcing the impact of biases on model accuracy. Notably, the SMOTE method appears to improve upon the Biased model but does not achieve parity with the Reweighted or Unbiased models. Under-sampling shows the least favorable performance with the lowest median accuracy and greatest variability, suggesting that this approach may not be suitable for this particular dataset. The variability in performance of the biased model on the ACS datasets, as opposed to others, is a noteworthy observation. We conjecture that this may stem from the ACS dataset containing a larger number of sensitive groups. The filtering process, applied to generate the biased dataset, likely introduces additional uncertainty due to the increased complexity of handling multiple sensitive groups. We believe this would be an interesting avenue for further investigation.

\subsection{Analysis of Group Accuracies}
\subsubsection{Adult Data Set}
Figure~\ref{fig:groups_adult} displays the distribution of group accuracies across different data preprocessing methods within the Adult data set. The accuracies are reported for five distinct demographic groups: \textit{Pacific Islander, American Indian and Eskimo, Other Race, Black,} and \textit{Female}. Across all groups, the Reweighted model consistently demonstrates accuracies that closely match those of the Unbiased model, indicating that this approach can effectively mitigate biases in the dataset. The Biased model's lower performance across all groups confirms the presence of bias, while the SMOTE and Under-sampling methods show varying degrees of effectiveness, with neither consistently reaching the performance level of the reweighted or unbiased models. This analysis suggests that reweighting serves as a potent tool for correcting underrepresentation and intersectional bias, thus fostering fairness in classification. 

\begin{figure}[h]
\begin{subfigure}{0.33\textwidth}
\includegraphics[width=\textwidth]{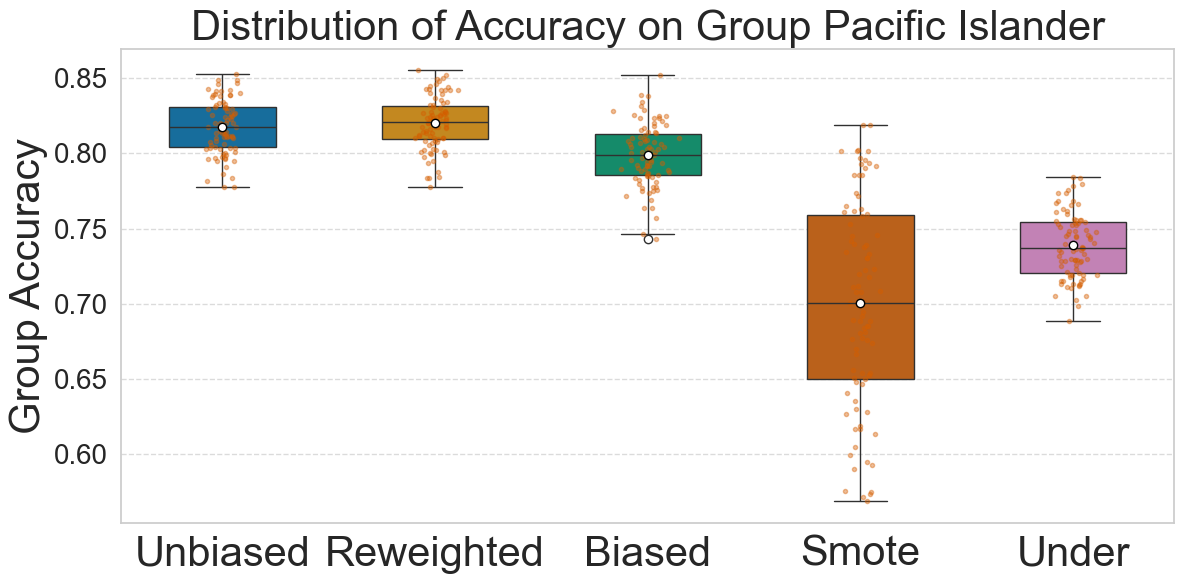}
\caption{Pacific Islander}
\end{subfigure}
\begin{subfigure}{0.33\textwidth}
\includegraphics[width=\textwidth]{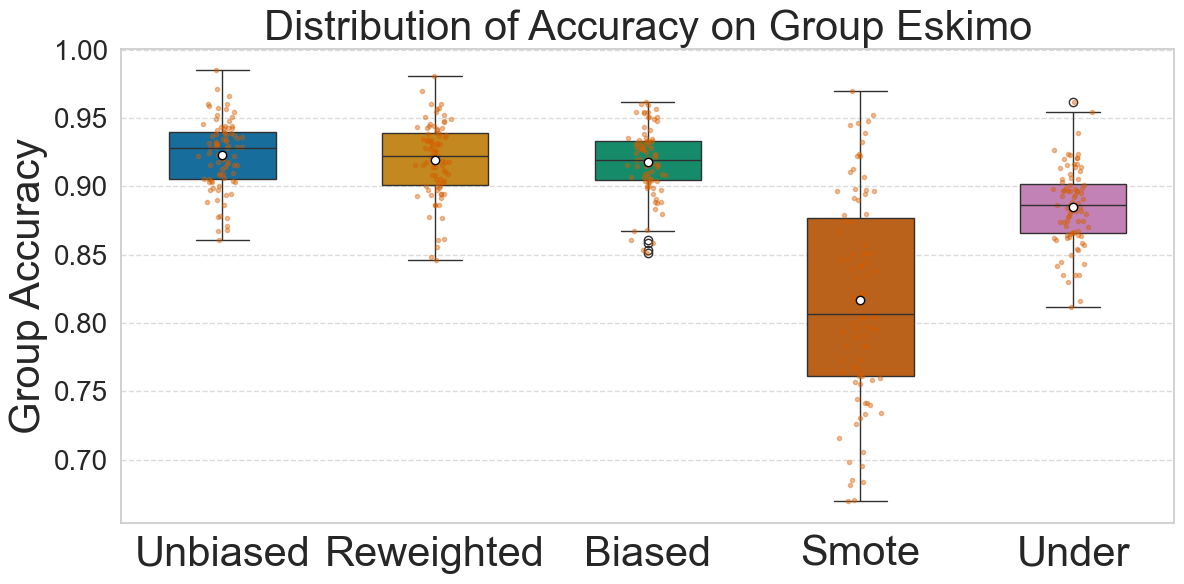}
\caption{American Indian \& Eskimo}
\end{subfigure}
\begin{subfigure}{0.33\textwidth}
\includegraphics[width=\textwidth]{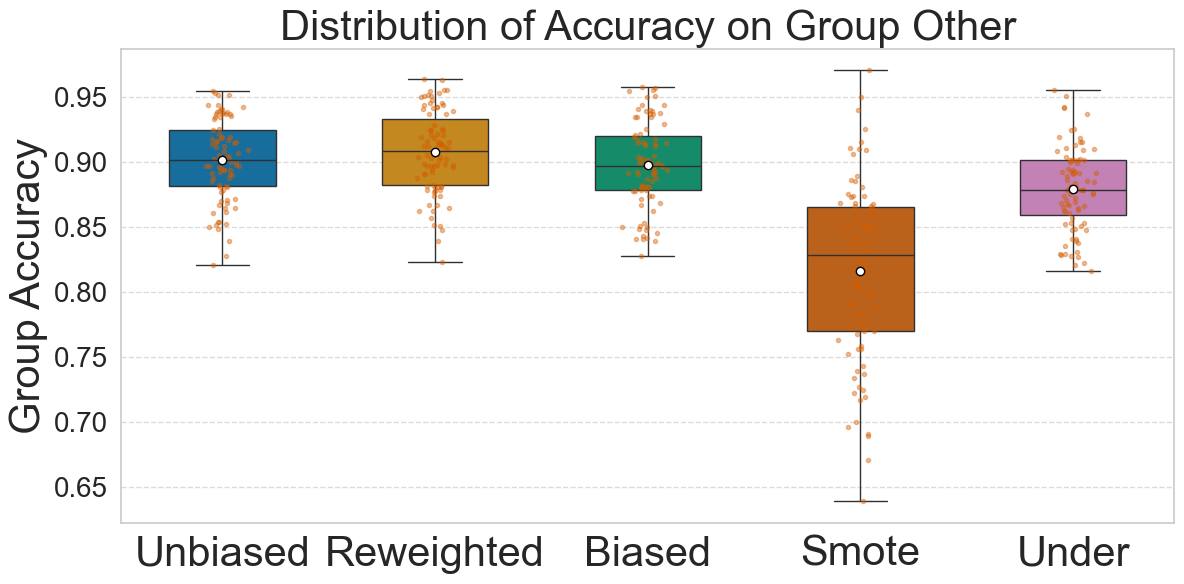}
\caption{Other Race}
\end{subfigure}
\begin{subfigure}{0.33\textwidth}
\includegraphics[width=\textwidth]{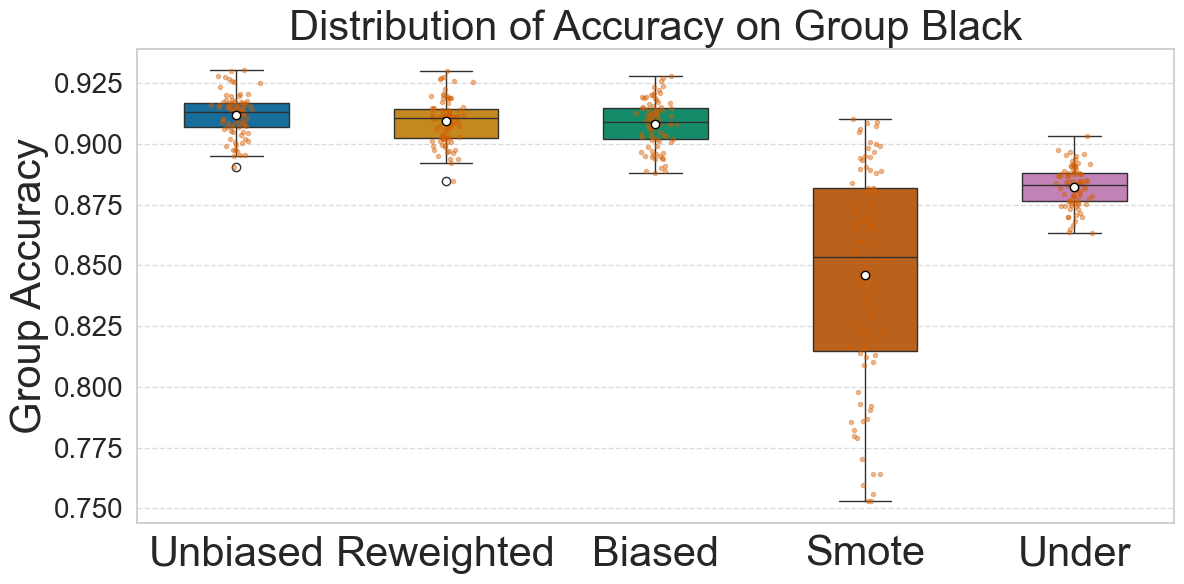}
\caption{Black}
\end{subfigure}
\begin{subfigure}{0.33\textwidth}
\includegraphics[width=\textwidth]{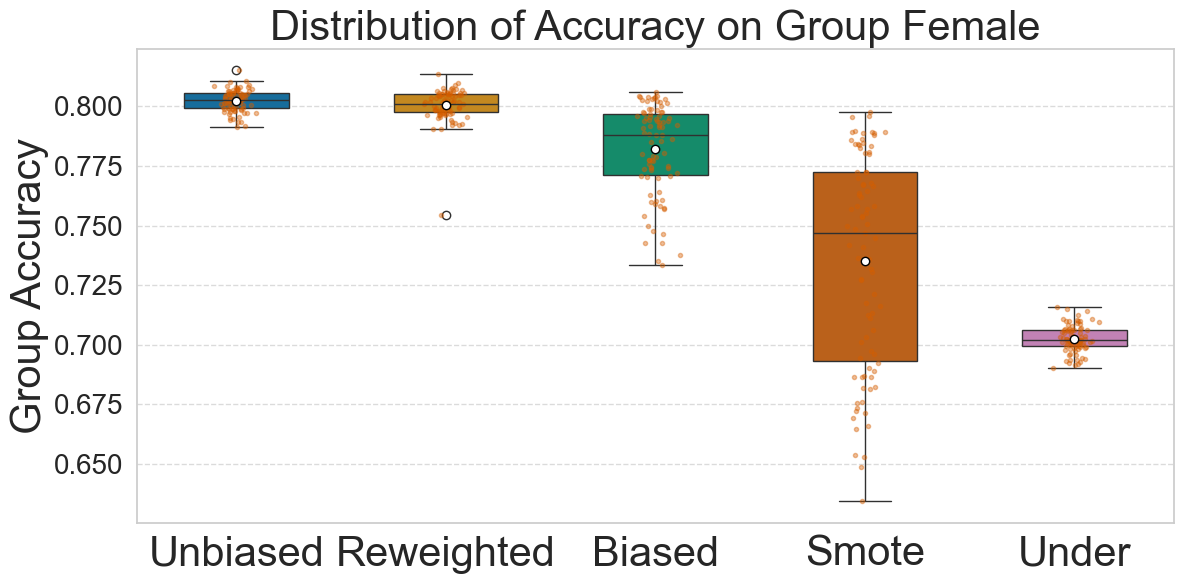}
\caption{Female}
\end{subfigure}
\caption{Group Accuracies on Adult Data Set}
\end{figure}
\label{fig:groups_adult}

\subsubsection{ACS Employment Data Set}
Figure~\ref{fig:groups_acs} delves into the performance of various data preparation methods across different demographic groups within the ACS Employment data set. It compares the accuracies of candidate models across seven demographic groups: \textit{Black, American Indian, Alaska Native, Other Native, Asian, Native Hawaiian}, and \textit{Female}. Across all demographic groups in the ACS Employment data set, the Reweighted model consistently achieves accuracies close to the Unbiased model, supporting its use as a robust bias mitigation strategy. The Biased model consistently shows lower accuracies, emphasizing the presence of bias in the original dataset. SMOTE and Under-sampling techniques exhibit variability and generally fail to provide the stability and accuracy of the Reweighted approach. This analysis underlines the importance of implementing effective data preprocessing methods to ensure fair and accurate machine learning outcomes across diverse demographic groups. 

\begin{figure}[h]
\begin{subfigure}{0.33\textwidth}
\includegraphics[width=\textwidth]{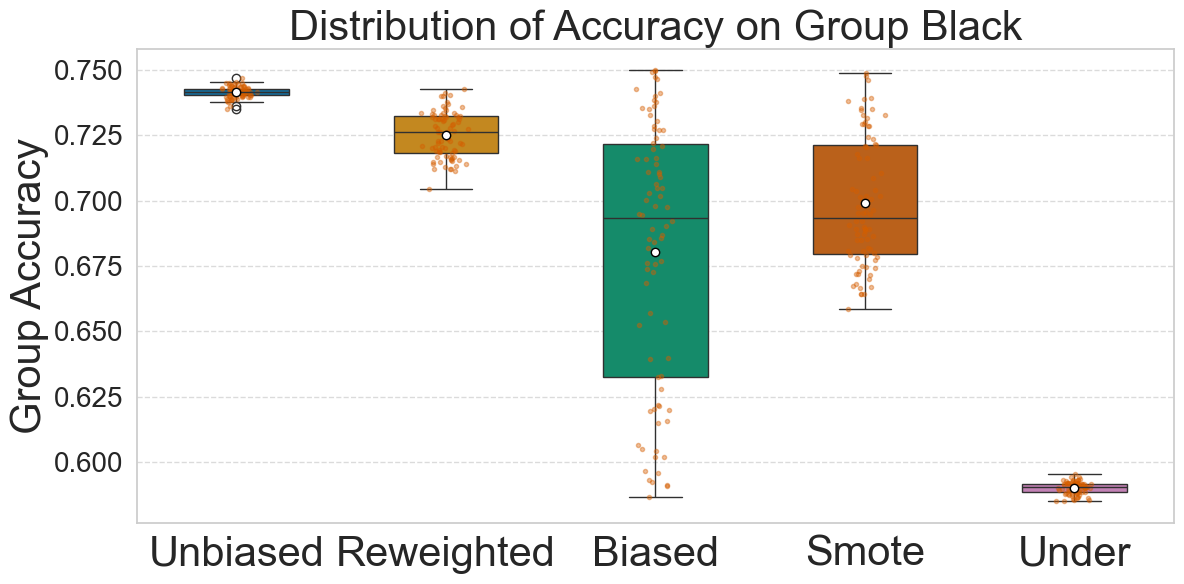}
\caption{Black}
\end{subfigure}
\begin{subfigure}{0.33\textwidth}
\includegraphics[width=\textwidth]{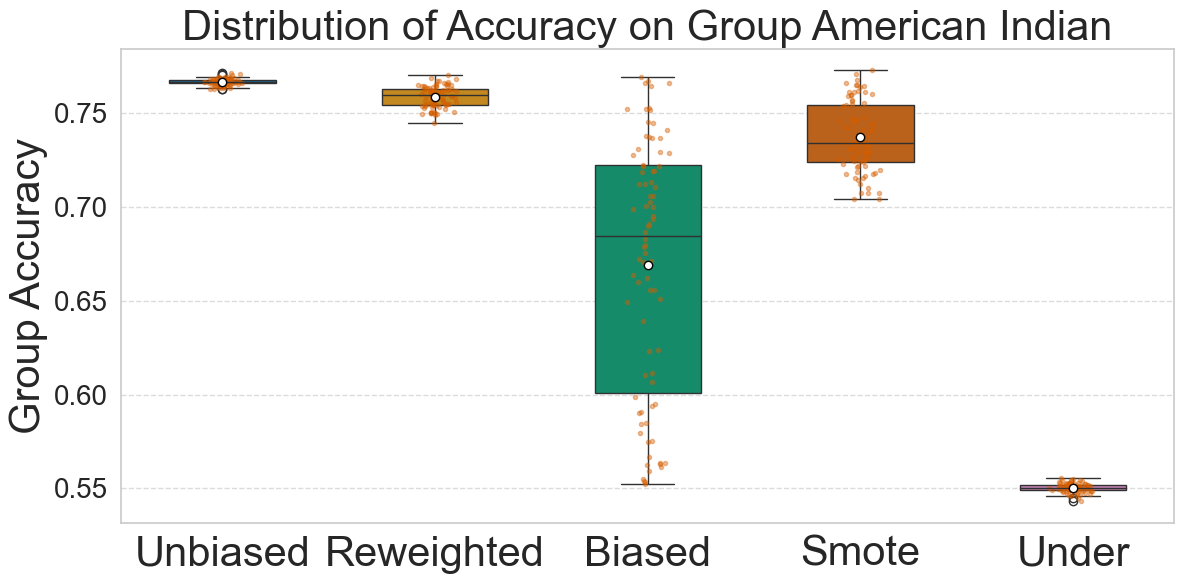}
\caption{Amererican Indian}
\end{subfigure}
\begin{subfigure}{0.33\textwidth}
\includegraphics[width=\textwidth]{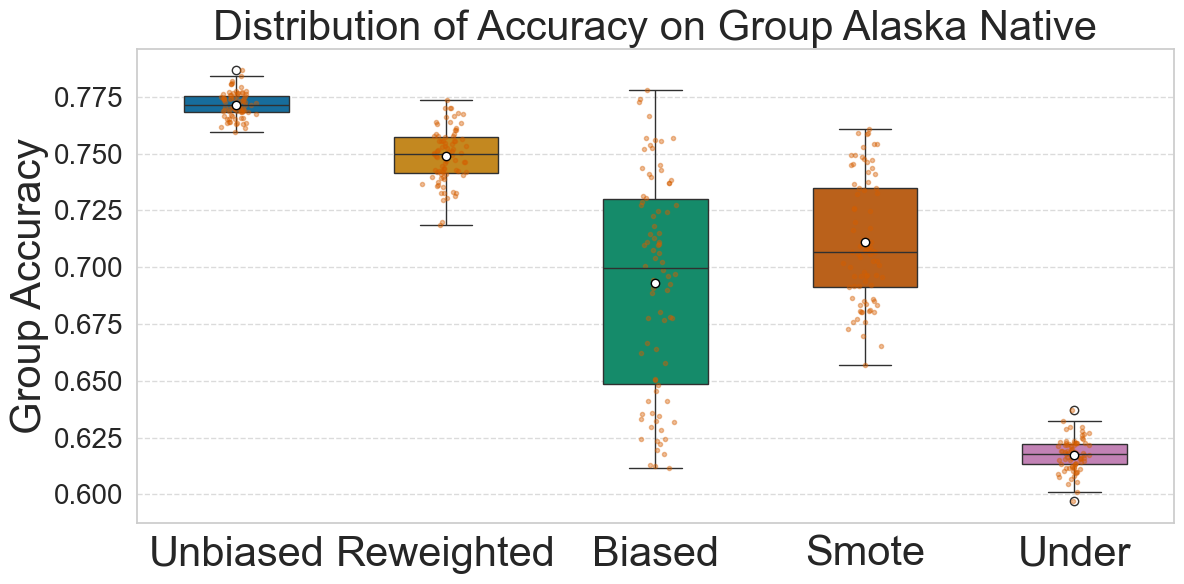}
\caption{Alaska Native}
\end{subfigure}
\begin{subfigure}{0.24\textwidth}
\includegraphics[width=\textwidth]{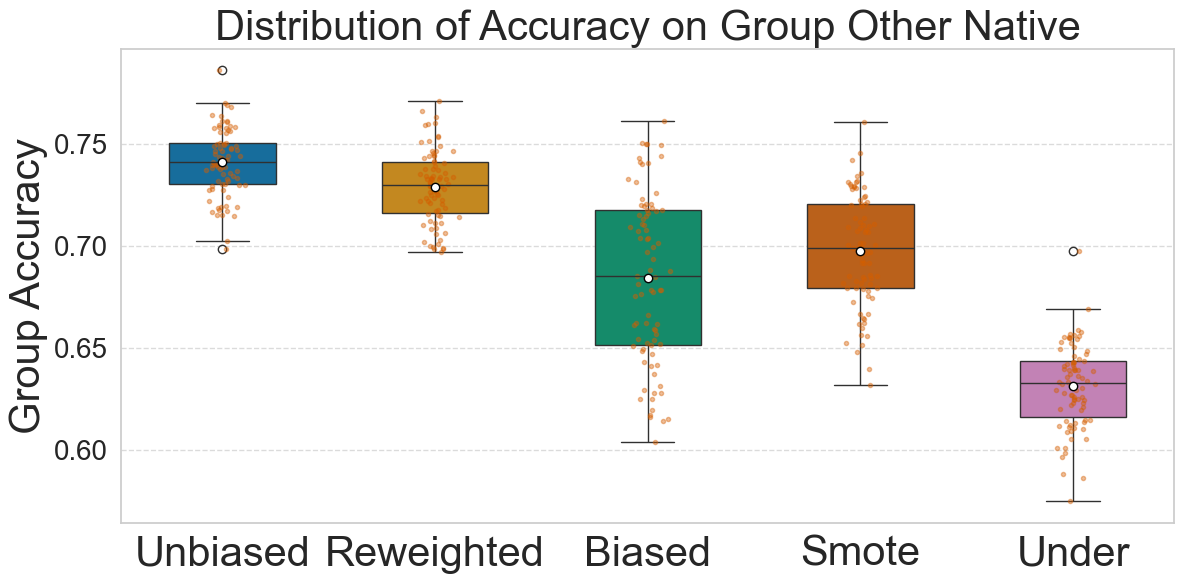}
\caption{Other Native}
\end{subfigure}
\begin{subfigure}{0.24\textwidth}
\includegraphics[width=\textwidth]{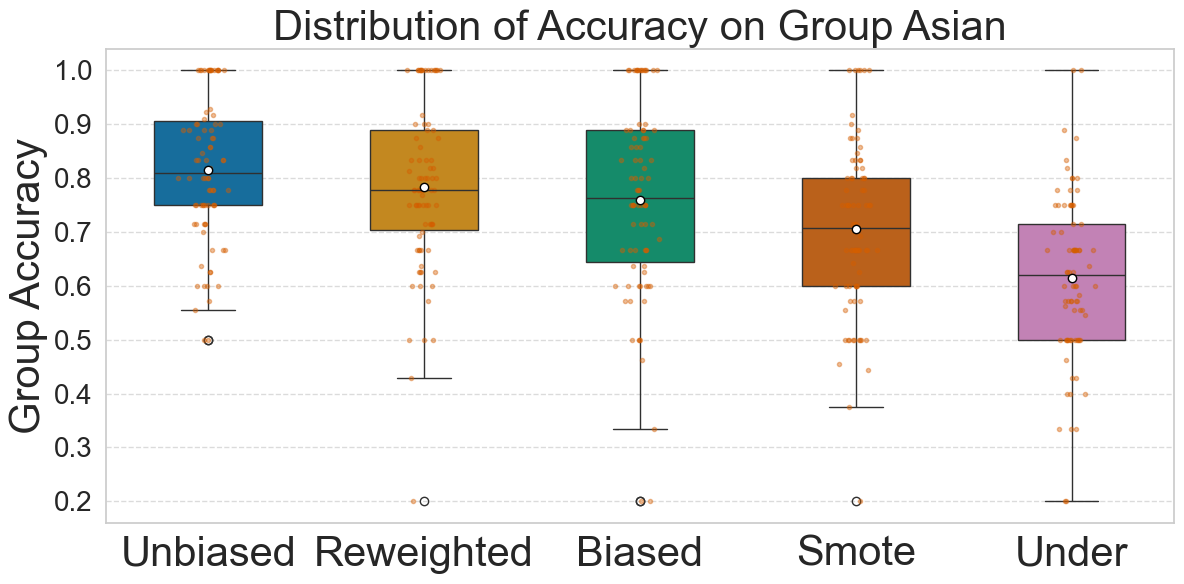}
\caption{Asian}
\end{subfigure}
\begin{subfigure}{0.24\textwidth}
\includegraphics[width=\textwidth]{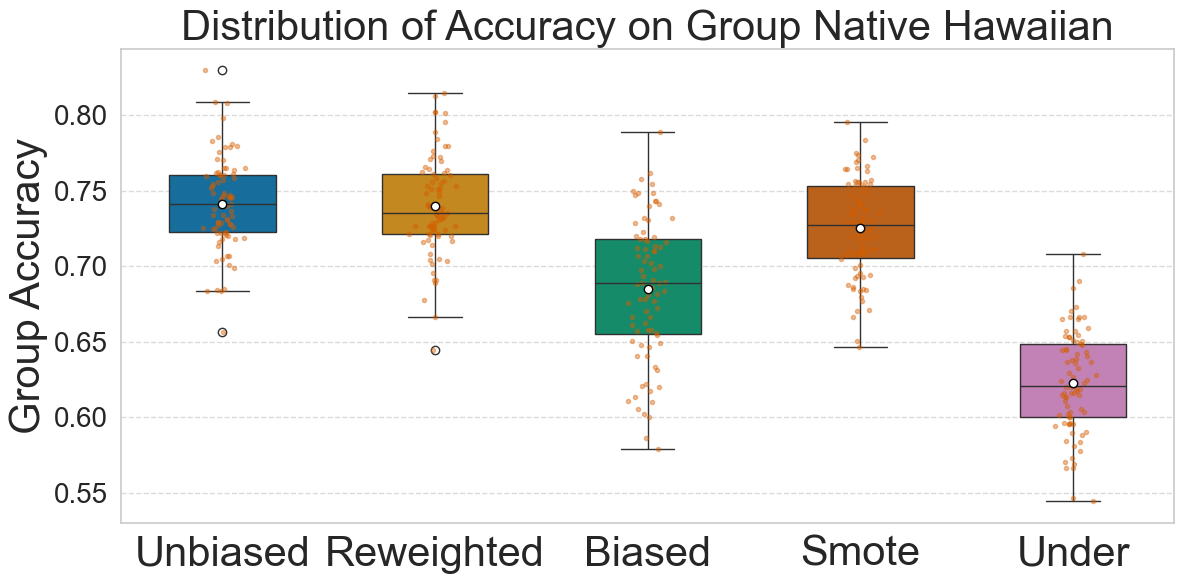}
\caption{Native Hawaiian}
\end{subfigure}
\begin{subfigure}{0.24\textwidth}
\includegraphics[width=\textwidth]{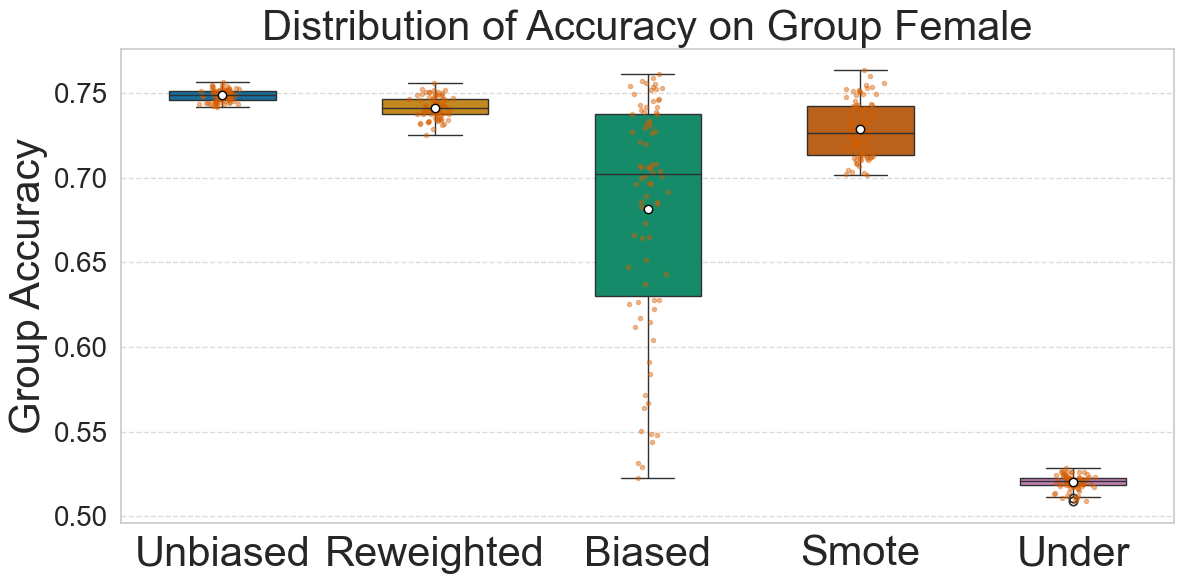}
\caption{Female}
\end{subfigure}
\caption{Group Accuracies on ACS Employment Data Set}
\label{fig:groups_acs}
\end{figure}


\section{Conclusion}
Our model and accompanying algorithm provide a flexible and generalizable method of estimating bias across multiple intersectional categories without any prior assumptions about the degree of bias in each group. The model also has the strength that once the bias parameters are estimated, they can be used on future draws of biased data to find a hypothesis approximately minimizing prediction risk with respect to the true distribution with high probability. The experiments demonstrate that with only a small amount of unbiased data, we can use Algorithm~\ref{algo:biased_learning} to consistently produce a more accurate model than would have been produced by training on the biased data or rebalanced the data without an understanding of the drop-out rates. We also find empirically that our reweighting approach typically obtains group accuracy very close to that of a model trained on the unbiased dataset. Limitations include the assumption that group membership is independent, which may oversimplify the complexities of social power structures and how they intersect in real-world contexts. In response, we demonstrated contexts in which this is a reasonable assumption in practice and found that, even when this assumption was not consistent with our data, our method still performed admirably. Finally, we only consider underrepresentation bias but our model in principle can be extended to characterize other types of biases.

 \bibliographystyle{plainnat}
\bibliography{bibliography}
\appendix
\appendix
\section{Appendix}
\subsection{Additional Discussion on Independence Assumptions}
In our initial exploration of modeling intersectionality and bias, we posited that only the bias parameters ($\beta$) were independent, not group membership ($G$). The core challenge we faced was determining how to effectively bound the error on the estimates of $(\hat{\beta}_i)$, especially since each individual might belong to a diverse array of groups, thereby preventing straightforward estimation from samples due to unique bias multipliers for each person. Traditional approaches, such as those employed to derive confidence intervals around linear regression coefficients, prompted us to investigate if analogous techniques could be applicable in our context. The analogy drawn likened $\log (\beta_i)$ to linear regression coefficients, framing our task as an estimation problem. (To potentially utilize linear regression methodologies directly, we considered restructuring our problem as a conventional linear regression model, where examples are set up as $(x,y)$ pairs such that $\mathbb{E}[y]=\langle \theta,x \rangle$ for each $x$ for some vector $\theta$. This formulation would necessitate transforming our model, likely treating the $x$ as an indicator vector of group membership for individuals, with $\theta$ representing the vector of $\log (\beta_i)$, though the precise nature of the $y$ values remained unclear.) The crux of the approach hinged on the expectation that $\mathbb{E}[y]$ could be expressed as a linear combination of the $\log(\beta_i)$ coefficients, akin to $\mathbb{E}[y]=\langle \theta, x\rangle$ with $\theta$ embodying the log-transformed bias parameters.

However, transitioning this conceptual framework into a practical estimation technique encountered significant challenges. Firstly, the independence required between each $(x,y)$ pair in classical linear regression analyses was not inherently present in our setting, where $y$'s were derived from a singular dataset, introducing dependencies that complicated the application of off-the-shelf regression techniques. Additionally, the problem space was characterized by heteroskedasticity, implying that the variance of our outcome variable ($y$) was not constant across observations but varied in accordance with group membership and the associated bias multipliers. The ``standard error" bounds typically assume that the outcome variable $y$ can be expressed as a linear combination of the predictors $x$, augmented by an additive noise term $N$, where this noise is distributed independently of $x$. This is formalized as $y=\langle \theta,x \rangle+N$, with $N$ representing the noise distribution that's assumed to be independent from the predictors $x$. However, this assumption does not hold in our context, since the "noise" we encounter arises from sampling errors within each group, compounded by the fact that these groups vary in size. This deviation from standard assumptions necessitates a tailored approach to estimating and bounding errors in our model.

Given these challenges, we shifted our model to assume independence not only in the bias parameters $(\beta)$ but also in group membership $(G)$. Under this assumption, the independence of group membership enables us to accurately estimate ratios using Chernoff bounds for both the numerator and denominator, which in turn allows for precise estimation of $\hat{p}_i$ and $\hat{\beta}_i$ with a solid level of confidence. Specifically, by utilizing the unbiased training set $S$ and the biased training set $S_\beta$, we can calculate $\hat{p}_i$ for each group $i$ as the proportion of positive outcomes within that group in $S$, and similarly $\hat{p}_{\beta_i}$ as the proportion of positive outcomes within group $i$ in $S_\beta$. These proportions give us a direct method to estimate the inherent bias $\hat{\beta}$ for each group by comparing their ratios in the unbiased and biased scenarios. This methodological pivot towards leveraging statistical properties of independence and employing Chernoff bounds for error estimation facilitated a tractable solution to estimate our parameters of interest within the constraints of intersectionality modeling. 
\subsection{Missing Proofs}

\prodPosRate*
\begin{proof}
By Bayes' rule, 
\begin{align*}
    &\Pr[y=1 | \textbf{x} \in \cap_{i \in I} G_i] = \frac{\Pr[ \textbf{x} \in \cap_{i \in I} G_i | y = 1]}{\Pr[\textbf{x} \in \cap_{i \in I} G_i]} \Pr[y=1]
\end{align*}

Applying Assumption~\ref{ass:indep}, it follows that 
\begin{align*}
    &\Pr[y=1 | \textbf{x} \in \cap_{i \in I} G_i] = \frac{\Pr[\textbf{x} \in \cap_{i \in I} G_i | y = 1]}{\prod_{i \in I} \Pr[\textbf{x} \in G_i]}Pr[y=1] 
\end{align*}

Applying Assumption~\ref{ass:cond_indep} yields 
\begin{align*}
    &\Pr[y=1 | \textbf{x} \in \cap_{i \in I} G_i] = \Pr[y=1]\frac{\prod_{i \in I} \Pr[x \in G_i | y=1]}{\prod_{i \in I} \Pr(\textbf{x} \in G_i)} 
\end{align*}

Then, using Bayes' rule once again: 
\begin{align*}
&\Pr[y=1 | \textbf{x} \in \cap_{i \in I} G_i] = \Pr[y=1]\frac{\prod_{i \in I} \frac{\Pr[y=1 | \textbf{x} \in G_i] \Pr[\textbf{x} \in G_i]}{\Pr[y=1]}}{\prod_{i \in I} \Pr[\textbf{x} \in G_i]}
\end{align*}
and thus by simplifying the terms, the lemma is proved: \[\Pr[y=1|\textbf{x} \in \bigcap_{i \in I} G_i] = \Pr[y=1]^{1-\lvert I\vert } \prod_{i \in I} p_i \]
\end{proof}

\inverseBeta*

\begin{proof}
We begin by considering the quantity \( p_i \cdot \beta_i \), the probability that an element of group $G_i$ is both positive and included in the biased dataset \( S_\beta \). Since the bias parameter \( \beta_i \) represents the probability that a positive sample from group \( G_i \) is retained in \( S_\beta \), the probability of a positive outcome within group \( G_i \) in \( S_\beta \) is the product of the original base positive rate \( p_i \) and the bias parameter \( \beta_i \):

\begin{align*}
    p_i \cdot \beta_i &= \Pr[y = 1 | \textbf{x} \in G_i] \Pr[(\textbf{x},y) \in S_\beta| \textbf{x} \in G_i, y=1] = \Pr[y=1, (\textbf{x},y) \in S_\beta| \textbf{x} \in G_i]
\end{align*}
 
Expanding further, we see that 

    \begin{align*}
    p_i \cdot \beta_i &= \Pr[y=1 | (\textbf{x},y) \in S_\beta, \textbf{x} \in G_i] \left(\Pr[(\textbf{x},y) \in S_\beta, y=1 | \textbf{x} \in G_i] + \Pr[(\textbf{x},y) \in S_\beta, y=0 | \textbf{x} \in G_i]\right)\\
    &= \Pr[y=1 | (\textbf{x},y) \in S_\beta, \textbf{x} \in G_i] \left(\Pr[(\textbf{x},y) \in S_\beta| y=1, \textbf{x} \in G_i]\Pr[y=1|\textbf{x} \in G_i] + \Pr[(\textbf{x},y) \in S_\beta| y=0, \textbf{x} \in G_i]\Pr[y=0|\textbf{x} \in G_i]\right)\\
    &= \Pr[y=1 | (\textbf{x},y) \in S_\beta, \textbf{x} \in G_i]\cdot \left(p_i \cdot \beta_i + 1 - p_i\right)\\
\end{align*}

Substituting $p_{\beta_i} =  \Pr[y=1 | (\textbf{x},y) \in S_\beta, \textbf{x} \in G_i]$, we have:
\begin{align*}
 &p_i \cdot \beta_i = p_{\beta_i} \left(p_i \cdot \beta_i + 1 - p_i \right) = p_{\beta_i} \cdot p_i \cdot \beta_i + p_{\beta_i}(1 - p_i) \\
 \implies &p_i \cdot \beta_i - p_{\beta_i} \cdot p_i \cdot \beta_i = p_{\beta_i}(1 - p_i) \\
 \implies &\frac{1}{\beta_i} = \frac{p_i \left( 1 - p_{\beta_i} \right)}{p_{\beta_i}(1 - p_i)}
\end{align*}
\end{proof}

\prodBeta*

\begin{proof}
We may follow the structure of the proof to Lemma~\ref{lem:prod_pos_rate}:
\begin{align*}
\Pr[(\textbf{x},y) \in S_\beta|\textbf{x} \in \cap_{i \in I} G_i, y=1] &= \frac{\Pr[(\textbf{x},y) \in S_\beta,\textbf{x} \in \cap_{i \in I} G_i, y=1]}{\Pr[\textbf{x} \in \cap_{i \in I} G_i, y=1]} \\
&= \frac{\Pr[\textbf{x} \in \cap_{i \in I} G_i | (\textbf{x},y) \in S_\beta, y=1] \Pr[(\textbf{x},y) \in S_\beta, y=1]}{\Pr[\textbf{x} \in \cap_{i \in I} G_i, y=1]}\\
&= \frac{\Pr[\textbf{x} \in \cap_{i \in I} G_i | (\textbf{x},y) \in S_\beta, y=1] \Pr[(\textbf{x},y) \in S_\beta | y=1]}{\Pr[\textbf{x} \in \cap_{i \in I} G_i | y=1]}\\
&=\frac{\prod_{i \in I} \Pr[\textbf{x} \in G_i|(\textbf{x},y) \in S_\beta, y=1] \Pr[(\textbf{x},y) \in S_\beta | y=1]}{\prod_{i \in I} \Pr[x \in G_i | y=1]} \\
&= \Pr[(\textbf{x},y) \in S_\beta | y=1] \prod_{i \in I} \frac{\frac{\Pr[(\textbf{x},y) \in S_\beta | \textbf{x} \in G_i, y=1] \Pr[\textbf{x} \in G_i| y=1]}{\Pr[(\textbf{x},y) \in S_\beta| y=1]}}{\Pr[x \in G_i | y=1]} \\
&=\beta_0^{1-\lvert I\rvert} \prod_{i \in I} \beta_i
\end{align*}
\end{proof}

\subsection{Proof of Theorem~\ref{thm:biased_pmf}}

\thmDistributions*

\begin{proof}

First, we prove that $\mathbb{E}_\cD[\frac{1}{w(\textbf{x},y)}]=\frac{1}{\mathbb{E}_{\cD_\beta}[w(\textbf{x},y)]}$. To do so, we simply use the fact that the total probability must sum to one:
\begin{align*}
1 &= \sum_{(\textbf{x},y)}p_\cD(\textbf{x},y) \\
&= \sum_{(\textbf{x},y)} \mathbb{E}_\cD [\frac{1}{w(\textbf{x},y)}]w(\textbf{x},y)p_{\cD_\beta}{(\textbf{x},y)} \\
&= \mathbb{E}_\cD [\frac{1}{w(\textbf{x},y)}] \sum_{(\textbf{x},y)} w(\textbf{x},y)p_{\cD_\beta}{(\textbf{x},y)} \\
&= \mathbb{E}_\cD [\frac{1}{w(\textbf{x},y)}] \mathbb{E}_{\cD_\beta} w(\textbf{x},y) 
\end{align*} 
which implies $\mathbb{E}_\cD[\frac{1}{w(\textbf{x},y)}]=\frac{1}{\mathbb{E}_{\cD_\beta}[w(\textbf{x},y)]}$. Using this relationship and the definition of the re-weighting function, $w(\textbf{x},y)$, we can immediately rewrite the definition of $p_{\cD_\beta}(\textbf{x},y)$ as follows:

\begin{align*}
p_{\cD_{\beta}}(\textbf{x},y) &=\frac{\Pr[(\textbf{x},y) \in S_{\beta}|(\textbf{x},y)=(\textbf{X},Y)]p_\cD(\textbf{x},y)}{\mathbb{E}_\cD [\Pr[(\textbf{x},y) \in S_{\beta}|(\textbf{x},y)=(\textbf{X},Y)]]} \\&
=\frac{\frac{p_\cD(\textbf{x},y)}{w(\textbf{x},y)}}{\mathbb{E}_\cD [\frac{1}{w(\textbf{x},y)}]} \\&
=\frac{p_\cD(\textbf{x},y)\mathbb{E}_\cD [\frac{1}{w(\textbf{x},y)}]}{w(\textbf{x},y)}
\end{align*}

\end{proof}

\subsection{Discussion and Proof Sketch of Theorem \ref{theorem:mainTheorem}}
\begin{figure}[h]
\centering
\resizebox{0.75\textwidth}{!}{
\tikzset{every picture/.style={line width=0.75pt}} 

\begin{tikzpicture}[x=0.75pt,y=0.75pt,yscale=-1,xscale=1]

\draw  [fill={rgb, 255:red, 74; green, 144; blue, 226 }  ,fill opacity=0.7 ][line width=0.75]  (61.58,83.21) .. controls (61.58,56.03) and (83.61,34) .. (110.79,34) .. controls (137.97,34) and (160,56.03) .. (160,83.21) .. controls (160,110.39) and (137.97,132.42) .. (110.79,132.42) .. controls (83.61,132.42) and (61.58,110.39) .. (61.58,83.21) -- cycle ;
\draw   (97,64.21) -- (116,64.21)(106.5,54) -- (106.5,74.42) ;
\draw   (124,73.21) -- (143,73.21)(133.5,63) -- (133.5,83.42) ;
\draw   (74,83.21) -- (93,83.21)(83.5,73) -- (83.5,93.42) ;
\draw [color={rgb, 255:red, 208; green, 2; blue, 27 }  ,draw opacity=1 ]   (186,93) -- (341,93.42) ;
\draw [shift={(343,93.42)}, rotate = 180.15] [color={rgb, 255:red, 208; green, 2; blue, 27 }  ,draw opacity=1 ][line width=0.75]    (10.93,-3.29) .. controls (6.95,-1.4) and (3.31,-0.3) .. (0,0) .. controls (3.31,0.3) and (6.95,1.4) .. (10.93,3.29)   ;
\draw   (119,109.21) -- (138,109.21)(128.5,99) -- (128.5,119.42) ;
\draw   (133,90.21) -- (152,90.21)(142.5,80) -- (142.5,100.42) ;
\draw   (95,97.21) -- (114,97.21)(104.5,87) -- (104.5,107.42) ;
\draw  [fill={rgb, 255:red, 208; green, 2; blue, 27 }  ,fill opacity=0.55 ] (380.58,67.42) .. controls (380.58,31.52) and (409.68,2.42) .. (445.58,2.42) .. controls (481.48,2.42) and (510.58,31.52) .. (510.58,67.42) .. controls (510.58,103.32) and (481.48,132.42) .. (445.58,132.42) .. controls (409.68,132.42) and (380.58,103.32) .. (380.58,67.42) -- cycle ;
\draw   (412,73.21) -- (431,73.21)(421.5,63) -- (421.5,83.42) ;
\draw   (439,42.21) -- (458,42.21)(448.5,32) -- (448.5,52.42) ;
\draw   (414,31.21) -- (433,31.21)(423.5,21) -- (423.5,41.42) ;
\draw   (447,73.21) -- (466,73.21)(456.5,63) -- (456.5,83.42) ;
\draw [color={rgb, 255:red, 144; green, 19; blue, 254 }  ,draw opacity=1 ]   (445,388.42) -- (259,387.42) -- (203,387.42) ;
\draw [shift={(201,387.42)}, rotate = 360] [color={rgb, 255:red, 144; green, 19; blue, 254 }  ,draw opacity=1 ][line width=0.75]    (10.93,-3.29) .. controls (6.95,-1.4) and (3.31,-0.3) .. (0,0) .. controls (3.31,0.3) and (6.95,1.4) .. (10.93,3.29)   ;
\draw  [fill={rgb, 255:red, 74; green, 144; blue, 226 }  ,fill opacity=0.77 ] (135.47,264.01) -- (165.16,236.48) -- (113.01,180.22) -- (172.39,125.18) -- (224.54,181.44) -- (254.23,153.92) -- (229.62,246.47) -- cycle ;
\draw  [color={rgb, 255:red, 0; green, 0; blue, 0 }  ,draw opacity=1 ][fill={rgb, 255:red, 144; green, 19; blue, 254 }  ,fill opacity=1 ] (260.01,239.14) -- (260.39,265.26) -- (407.75,263.1) -- (408.51,315.33) -- (261.15,317.49) -- (261.54,343.61) -- (162.53,292.82) -- cycle ;
\draw  [fill={rgb, 255:red, 208; green, 2; blue, 27 }  ,fill opacity=0.55 ] (286.61,146.99) -- (314.99,174.13) -- (374.4,111.98) -- (431.18,166.26) -- (371.77,228.4) -- (400.16,255.54) -- (303.78,242.7) -- cycle ;
\draw  [fill={rgb, 255:red, 74; green, 144; blue, 226 }  ,fill opacity=0.08 ][line width=0.75]  (6.58,306.21) .. controls (6.58,266.88) and (38.46,235) .. (77.79,235) .. controls (117.12,235) and (149,266.88) .. (149,306.21) .. controls (149,345.54) and (117.12,377.42) .. (77.79,377.42) .. controls (38.46,377.42) and (6.58,345.54) .. (6.58,306.21) -- cycle ;
\draw (155,31) node [anchor=north west][inner sep=0.75pt]   [align=left] {{\huge \textbf{Underrepresentation Bias}}};
\draw (201,352) node [anchor=north west][inner sep=0.75pt]   [align=left] {{\huge \textbf{Reweighting Transformation}}};
\draw (136,188) node [anchor=north west][inner sep=0.75pt]   [align=left] {{\large \textbf{Estimate }$\displaystyle p$}};
\draw (299.04,178.74) node [anchor=north west][inner sep=0.75pt]   [align=left] {{\large \textbf{Estimate }$\displaystyle p_{\beta }$}};
\draw (240,277) node [anchor=north west][inner sep=0.75pt]   [align=left] {{\large \textbf{Estimate }$\displaystyle w$\textbf{ }}};
\draw (15,285) node [anchor=north west][inner sep=0.75pt]   [align=left] {{\large \textbf{Model Training}}\\{\large \textbf{Dataset}}};
\draw (7,139) node [anchor=north west][inner sep=0.75pt]   [align=left] {{\large \textbf{Small Unbiased}}\\{\large \textbf{Dataset}}};
\draw (437,143) node [anchor=north west][inner sep=0.75pt]   [align=left] {{\huge \textbf{Large Biased Dataset}}};
\end{tikzpicture}
}
\caption{Reweighting to Approximate $\mathcal{D}$}
\end{figure}
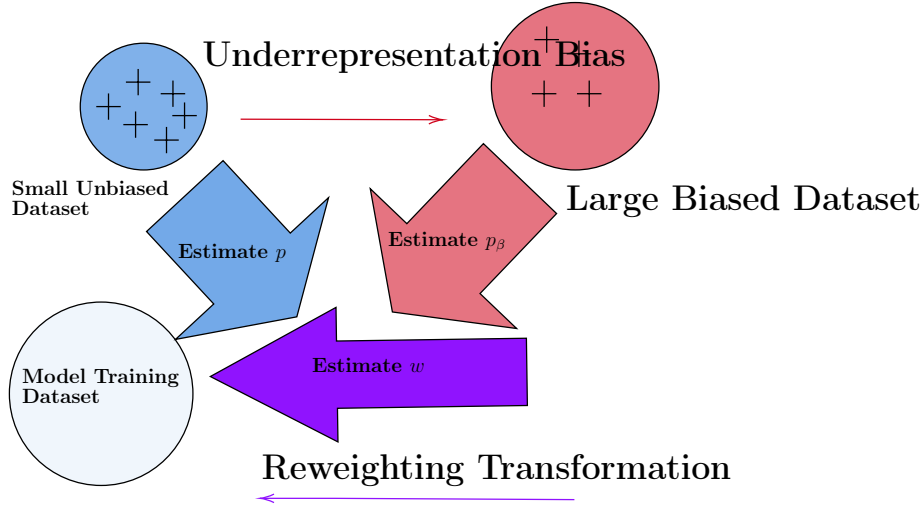

\paragraph{Proof Sketch}
The proof of the main theorem consists of three parts:

\textbf{Part A}: This part focuses on normalizing the weighted empirical loss on the biased sample. The goal is to bound the difference between the sum of weights in the biased sample and the reciprocal of the expected weight. By achieving this normalization, we can account for the bias introduced by the weights and ensure that the algorithm's performance is affected only minimally. See Lemmas~\ref{lemma:A2} and ~\ref{lemma:A3}. 

\textbf{Part B}: Part B of the proof is concerned with estimating the reweighting function. It involves first obtaining reliable estimates of the probabilities of positive examples in each group of the unbiased data and biased data, respectively, which is used to estimate $\beta$ and, finally, the maximum value the reweighting function can take on. The key lemma established here specifies the sample size requirements to achieve reliable estimation with high probability, showing that we can estimate the probabilities of positive examples in each group with reasonable accuracy by having only a small number of unbiased samples in addition to our larger set of biased samples. See Lemmas~\ref{lemma:estimating_p} - \ref{lemma:beta_0_inverse} for details.
 
\textbf{Part C}: Part C combines the previous results to establish the sample complexity guarantee for Algorithm~\ref{algo:biased_learning}. It utilizes the bounds obtained from the lemmas established in Parts A and B, which cover both the intersectional case where the groups in the biased sample have overlapping instances and the computationally simpler case where they are disjoint. By combining these results, the proof demonstrates that running Algorithm~\ref{algo:biased_learning} with appropriate sample sizes outputs a hypothesis $h$ with a low error on the true distribution $\cD$ with high probability. 
See Lemmas~\ref{lemma:C1}~ - \ref{lemma:C2} for details.

\textbf{Part D}: Part D, Lemma~\ref{lemma:D1} and Theorem~\ref{theorem:mainTheorem}, tie everything together and establish the algorithm's sample complexity, providing the desired guarantee for its performance. This is novel in that a small amount of unbiased data can take us from ``impossible" to ``nearly optimal," at least in the context of underrepresentation and intersectional bias. The core observation and benefit is that only a small fraction of your data needs to be unbiased. The unbiased data can be used to estimate the base rate. We then can use a large quantity of biased data to learn the model.

The intermediate lemmas are as follows:
\begin{enumerate}

   \item[] \textbf{Lemma \ref{lemma:A2}}: The absolute difference between the estimate of the normalizing factor and the true normalizing factor is bounded with probability $1-\frac{\delta}{2k+2}$. The proof uses the Hoeffding bound.
   \item[] \textbf{Lemma \ref{lemma:A3}}: The reweighted loss $L_{S_\beta}(h)$ normalized by the empirical estimate is always less than or equal to 1. The proof uses the sum of weighted indicator functions.
   \item[] \textbf{Lemma \ref{lemma:estimating_p}}: The sample size required to estimate $p$ within a certain level of precision is given by the multiplicative Chernoff Bound. The proof uses the concept of Bernoulli trials and the multiplicative Chernoff Bound.
   \item[] \textbf{Lemma \ref{lemma:estimating_p_beta}}: The sample size required to estimate $p_{\beta_i}$ within a certain level of precision is given by the multiplicative Chernoff Bound. The proof uses the concept of indicator variables and the multiplicative Chernoff bound. 
    \item[] \textbf{Lemma \ref{lemma:estimating_beta_inverse}}: The sample size required to estimate $\mathbf{\frac{1}{\beta}}$ within a certain level of precision is given by the multiplicative Chernoff Bound. The proof directly uses the results from Lemmas~\ref{lemma:estimating_p} and ~\ref{lemma:estimating_p_beta}.
   \item[] \textbf{Lemma \ref{lemma:B9}}: If the sizes of the samples satisfy certain conditions, then with high probability, the estimated inverse of the product of biases, $\prod_{i=1}^{k} \widehat{\beta_i^{-1}}$, is close to the true inverse of the product of biases, $\prod_{i=1}^{k} \beta_i^{-1}$, within a margin of $\frac{3\epsilon}{5}$. The proof uses the bounds from Lemma B8 and applies these bounds to the product of inverse biases.
   \item[] \textbf{Lemma \ref{lemma:C1}}: The expected loss of hypothesis $h$ on the distribution $\cD$ is equal to the expected loss of $h$ on the weighted distribution $\cD_\beta$ scaled by the inverse of the expected weight $\mathbb{E}_{\cD_\beta}[w(\textbf{x},y)]$ of the weighted distribution. The proof follows from the definition of expected value.
   \item[] \textbf{Lemma \ref{lemma:C2}}: A sample size of at least $m_\beta \geq \frac{11^2}{{2\epsilon^2 \prod_{j=1}^{k} \beta_j ^2}}\ln{\frac{4|H|(k+1)}{\delta}}$ is required to estimate the true expected loss $L_D(h)$ using a reweighting of the biased sample loss $L_{S_\beta}(h)$. The proof uses a multiplicative Hoeffding bound.
   \item[]  \textbf{Lemma \ref{lemma:D1}}: The risk difference between the learned hypothesis $h$ on the biased sample $S_\beta$ and the risk of $h$ on the true distribution $\cD$ is bounded with probability $1-\delta$. The proof uses triangle inequality and standard concentration inequalities.
\end{enumerate}

By relying on these lemmas, Theorem \ref{theorem:mainTheorem} is proven, demonstrating the reliability and accuracy of the proposed method for empirical risk minimization in biased datasets.  

\subsubsection{Part A}
 
\begin{restatable}{lemma}{normalizing} \label{lemma:A2}
With $m_\beta \geq \frac{\left(\beta_0^{k-2} \min_i {\beta_i}^{1-k} - 1\right)^2}{2\epsilon^2\mathbb{E}_{\cD_\beta}[w(\textbf{x},y)]^2}\ln{\frac{4(k+3)}{\delta}}$ samples from $\cD_\beta$: \\
$\Pr \left[ \lvert \frac{m_\beta}{\sum_{i=1}^{m_\beta} w(\textbf{x}_i, y_i)}  - \frac{1}{\mathbb{E}_{\cD_\beta}[w(\textbf{x},y)]} \rvert > \frac{\epsilon m_\beta}{\sum_{i=1}^{k} w(\textbf{x}_i, y_i)} \right] \leq~\frac{\delta}{2(k+3)} $ 
\end{restatable}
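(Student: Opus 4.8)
The plan is to notice that, once denominators are cleared, the event in the statement is just a two-sided deviation of the empirical mean of the i.i.d.\ weights $w(\textbf{x}_i,y_i)$, so that Hoeffding's inequality applies almost verbatim. Throughout I would write $\mu := \mathbb{E}_{\cD_\beta}[w(\textbf{x},y)]$ and $\widehat\mu := \frac{1}{m_\beta}\sum_{i=1}^{m_\beta} w(\textbf{x}_i,y_i)$, so that the normalizing factor is $\frac{m_\beta}{\sum_{i=1}^{m_\beta} w(\textbf{x}_i,y_i)} = 1/\widehat\mu$. (The denominator appearing in the statement is meant to be $\sum_{i=1}^{m_\beta} w(\textbf{x}_i,y_i)$, matching Algorithm~\ref{algo:biased_learning}; this does not affect anything below.)

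First I would record the two-sided bound on a single weight. If $y=0$ then $w=1$; if $y=1$ then $w(\textbf{x},y)$ is the reciprocal of the probability $\Pr[(\textbf{x},y)\in S_\beta\mid \textbf{x}\in\cap_{i\in G(\textbf{x})}G_i,\,y=1]=\beta_0^{1-|G(\textbf{x})|}\prod_{i\in G(\textbf{x})}\beta_i$ from Lemma~\ref{lem:prod_beta}, hence $w\ge 1$. Combined with the maximal value of the reweighting function established in Part~B, this gives $w(\textbf{x},y)\in\bigl[1,\ \beta_0^{k-2}\min_i\beta_i^{1-k}\bigr]$ pointwise. In particular $\mu$ and $\widehat\mu$ are deterministically strictly positive, so the event can be rewritten as a genuine chain of equivalences:
\[
\Bigl|\tfrac{1}{\widehat\mu}-\tfrac{1}{\mu}\Bigr|>\tfrac{\epsilon}{\widehat\mu}
\iff \Bigl|1-\tfrac{\widehat\mu}{\mu}\Bigr|>\epsilon
\iff |\widehat\mu-\mu|>\epsilon\,\mu ,
\]
multiplying through first by $\widehat\mu>0$ and then by $\mu>0$. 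Hence the probability to be bounded equals $\Pr[\,|\widehat\mu-\mu|>\epsilon\mu\,]$.

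Next I would apply Hoeffding's inequality to the $m_\beta$ i.i.d.\ terms $w(\textbf{x}_i,y_i)$, which have common mean $\mu$ and lie in an interval of length at most $R:=\beta_0^{k-2}\min_i\beta_i^{1-k}-1$, giving
\[
\Pr\bigl[\,|\widehat\mu-\mu|>\epsilon\mu\,\bigr]\ \le\ 2\exp\!\Bigl(-\tfrac{2 m_\beta\, \epsilon^2\mu^2}{R^2}\Bigr).
\]
Finally, substituting the hypothesis $m_\beta\ge \frac{R^2}{2\epsilon^2\mu^2}\ln\frac{4(k+3)}{\delta}$ makes the exponent at most $-\ln\frac{4(k+3)}{\delta}$, so the right-hand side is at most $2\cdot\frac{\delta}{4(k+3)}=\frac{\delta}{2(k+3)}$, which is the claimed bound.

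I expect the only genuinely delicate point to be the equivalence chain in the second paragraph: one must check it is a logical equivalence, not merely an implication, so that the target probability is \emph{exactly} $\Pr[|\widehat\mu-\mu|>\epsilon\mu]$ with no slack — this is fine because $\widehat\mu$ and $\mu$ are positive constants on every sample path. Everything else is routine: the boundedness of $w$ (needed both to know $\mu\in(0,\infty)$ and to invoke Hoeffding with range $R$), and the arithmetic substitution of the sample-size bound.
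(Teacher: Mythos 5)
Your proposal is correct and follows essentially the same route as the paper: a Hoeffding bound on the i.i.d.\ sum of weights with range $\beta_0^{k-2}\min_i\beta_i^{1-k}-1$, followed by dividing through to convert the deviation of the sum into the deviation of the reciprocal normalizer. If anything, your version is slightly more careful than the paper's, since you justify the lower endpoint $w\ge 1$ (which is what makes the range $R=\beta_0^{k-2}\min_i\beta_i^{1-k}-1$ legitimate, whereas the paper states $w\in[0,\beta_0^{k-2}\min_i\beta_i^{1-k}]$ yet uses that smaller range) and you make the equivalence of the two deviation events explicit.
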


\begin{proof}
We can use a Hoeffding bound, as $w(\textbf{x},y) \in [0,\beta_0^{k-2} \min_i {\beta_i}^{1-k}]$

\begin{align*}
& \Pr \lvert \sum_{i=1}^{m_\beta} w(\textbf{x}_i, y_i)  - m_\beta \mathbb{E}_{\cD_\beta}[w(\textbf{x},y)] \rvert > \epsilon m_\beta \mathbb{E}_{\cD_\beta}[w(\textbf{x},y)] \leq 2e^{\frac{-2\epsilon^2 m_\beta \mathbb{E}_{\cD_\beta}[w(\textbf{x},y)]^2}{\left(\beta_0^{k-2} \min_i {\beta_i}^{1-k} - 1\right)^2}}  \\
\implies &\Pr \left[\lvert \frac{1}{\mathbb{E}_{\cD_\beta}[w(\textbf{x},y)] }  - \frac{m_\beta}{\sum_{i=1}^{m_\beta} w(\textbf{x}_i, y_i)}\rvert > \frac{\epsilon m_\beta}{\sum_{i=1}^{m_\beta} w(\textbf{x}_i, y_i)} \right] \leq 2e^{\frac{-2\epsilon^2 m_\beta \mathbb{E}_{\cD_\beta}[w(\textbf{x},y)]^2}{\left(\beta_0^{k-2} \min_i {\beta_i}^{1-k} - 1\right)^2}} 
\end{align*}

Setting this to be less than $\delta$, we see that this requires $m_\beta \geq \frac{\left(\beta_0^{k-2} \min_i {\beta_i}^{1-k} - 1\right)^2}{-2\epsilon^2 \mathbb{E}_{\cD_\beta}[w(\textbf{x},y)]^2}\ln{\frac{\delta}{4(k+3))}}$ samples from the biased distribution:

\begin{align*}
&2e^{\frac{-2\epsilon^2 m \mathbb{E}_{\cD_\beta}[w(\textbf{x},y)]^2}{\left(\beta_0^{k-2} \min_i {\beta_i}^{k-1} - 1\right)^2}} \leq \frac{\delta}{2(k+3)} \\
\implies &\frac{-2\epsilon^2 m_\beta \mathbb{E}_{\cD_\beta}[w(\textbf{x},y)]^2}{\left(\beta_0^{k-2} \min_i {\beta_i}^{1-k} - 1\right)^2} \leq \ln{\frac{\delta}{4(k+3)}} \\
\implies &-2\epsilon^2 m_\beta \mathbb{E}_{\cD_\beta}[w(\textbf{x},y)]^2\leq \left(\beta_0^{k-2} \min_i {\beta_i}^{1-k} - 1\right)^2\ln{\frac{\delta}{4(k+3)}} \\
\implies & m_\beta \geq \frac{\left(\beta_0^{k-2} \min_i {\beta_i}^{1-k} - 1\right)^2}{-2\epsilon^2 \mathbb{E}_{\cD_\beta}[w(\textbf{x},y)]^2}\ln{\frac{4(k+3)}{\delta}} \\
\end{align*}

Finally, we can upper bound the right-hand side to say that if we have at least 
$\frac{\left(\beta_0^{k-2} \min_i {\beta_i}^{1-k} - 1\right)^2}{-2\epsilon^2 \mathbb{E}_{\cD_\beta}[w(\textbf{x},y)]^2}\ln{\frac{4(k+3)}{\delta}}$ samples, we achieve the desired bound.
\end{proof}

\begin{restatable}{lemma}{adjustedLoss} \label{lemma:A3}
$\frac{m_\beta}{\sum_{i=1}^{m_\beta} w(\textbf{x}_i, y_i)} L_{{S_\beta}\beta^{-1}} \leq 1$
\end{restatable}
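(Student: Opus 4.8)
The plan is to simply unfold the definition of $L_{S_\beta \beta^{-1}}(h)$ and observe that after multiplying by the normalizing factor we obtain a ratio of the form $\frac{\sum_i a_i}{\sum_i b_i}$ with $0 \le a_i \le b_i$ for every $i$. Concretely, I would first substitute
\[
L_{S_\beta \beta^{-1}}(h) = \frac{1}{m_\beta}\sum_{i=1}^{m_\beta} w(\textbf{x}_i, y_i)\,\mathbb{I}(h(\textbf{x}_i)\neq y_i),
\]
so that
\[
\frac{m_\beta}{\sum_{i=1}^{m_\beta} w(\textbf{x}_i, y_i)}\, L_{S_\beta \beta^{-1}}(h) = \frac{\sum_{i=1}^{m_\beta} w(\textbf{x}_i, y_i)\,\mathbb{I}(h(\textbf{x}_i)\neq y_i)}{\sum_{i=1}^{m_\beta} w(\textbf{x}_i, y_i)}.
\]

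Next I would note that the reweighting function is nonnegative: from its definition $w(\textbf{x},y) = \mathbb{I}(y=0) + \mathbb{I}(y=1)\beta_0^{|G(\textbf{x})|-1}\prod_{i\in G(\textbf{x})}\beta_i^{-1}$, and since every $\beta_i > 0$, we have $w(\textbf{x}_i,y_i) \ge 0$ for each sample. Because $\mathbb{I}(h(\textbf{x}_i)\neq y_i) \in \{0,1\}$, it follows termwise that $0 \le w(\textbf{x}_i,y_i)\,\mathbb{I}(h(\textbf{x}_i)\neq y_i) \le w(\textbf{x}_i,y_i)$. Summing this inequality over $i = 1,\dots,m_\beta$ shows that the numerator above is at most the denominator (assuming the denominator is positive, which holds as soon as $S_\beta$ is nonempty), hence the ratio is at most $1$, which is exactly the claim.

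There is essentially no obstacle here: the only thing to be slightly careful about is the degenerate case where $\sum_{i=1}^{m_\beta} w(\textbf{x}_i,y_i) = 0$, which can only occur if $m_\beta = 0$ (since negative-labeled points contribute weight $1$ and positive-labeled points contribute a strictly positive weight), and in that case the quantity is vacuous. So the proof is a one-line termwise comparison of weighted indicators against weights, followed by summation; I would present it in two or three lines.
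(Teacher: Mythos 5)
Your proposal is correct and follows essentially the same route as the paper's proof: unfold the definition, cancel $m_\beta$, and bound the weighted sum of indicators termwise by the sum of weights. Your added remarks on the nonnegativity of $w$ and the degenerate empty-sample case are sensible points the paper leaves implicit.
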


\begin{proof}
\begin{align*}
&\frac{m_\beta}{\sum_{i=1}^{m_\beta} w(\textbf{x}_i, y_i)} L_{{S_\beta}\beta^{-1}} = \frac{m_\beta}{\sum_{i=1}^{m_\beta} w(\textbf{x}_i, y_i)}\frac{1}{m_\beta}\sum_{i=1}^{m_\beta} w(\textbf{x}_i, y_i) \mathbb{I}(h(\textbf{x}_i) \neq y_i) \leq \frac{\sum_{i=1}^{m_\beta} w(\textbf{x}_i, y_i)}{\sum_{i=1}^{m_\beta} w(\textbf{x}_i, y_i)} = 1 
\end{align*}
\end{proof}

\subsubsection{Part B}

\begin{restatable}{lemma}{lemmaEstimatingP}  \label{lemma:estimating_p}
Given $\epsilon, \delta \in (0,1)$, we require $m_i \geq \frac{3}{\epsilon^2p_i}\ln\frac{4(k+3)}{\delta}$ samples of unbiased data to estimate $\widehat{p_i}$ such that with probability $1-\frac{\delta}{2(k+3)}$,  
\begin{align}
\label{eq:p_error}
    \left| \widehat{p}_i-p_i\right| <p_i\epsilon
\end{align}
\end{restatable}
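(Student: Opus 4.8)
The plan is to recognize this as a direct application of the multiplicative Chernoff bound. Condition on having $m_i$ samples that belong to group $G_i$; by the i.i.d.\ sampling from $\cD$, the labels of these samples are i.i.d.\ Bernoulli random variables $Z_1,\dots,Z_{m_i}$ with $\Pr[Z_j=1]=\Pr[y=1\mid \textbf{x}\in G_i]=p_i$. Then $\widehat{p}_i=\frac{1}{m_i}\sum_{j=1}^{m_i}Z_j$, and the sum $X\stackrel{\text{def}}{=}\sum_{j=1}^{m_i}Z_j$ is a Binomial$(m_i,p_i)$ random variable with mean $\mu=m_i p_i$.

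First I would invoke the two-sided multiplicative Chernoff bound: for $0<\epsilon<1$,
\[
\Pr\!\left[\,\lvert X-\mu\rvert \geq \epsilon\mu\,\right]\leq 2\exp\!\left(-\frac{\mu\epsilon^2}{3}\right).
\]
Dividing the event inside the probability by $m_i$ translates this to $\Pr[\lvert \widehat{p}_i-p_i\rvert \geq \epsilon p_i]\leq 2\exp(-m_i p_i\epsilon^2/3)$. Next I would force the right-hand side to be at most $\frac{\delta}{2(k+3)}$: requiring $2\exp(-m_i p_i\epsilon^2/3)\leq \frac{\delta}{2(k+3)}$ is equivalent, after taking logarithms and rearranging, to
\[
m_i \geq \frac{3}{p_i\epsilon^2}\ln\frac{4(k+3)}{\delta},
\]
which is exactly the claimed sample-size bound. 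Hence with this many samples the event $\lvert \widehat{p}_i-p_i\rvert < p_i\epsilon$ holds with probability at least $1-\frac{\delta}{2(k+3)}$, as desired.

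There is essentially no hard step here; the only points worth stating carefully are (i) that the conditioning on group membership is what makes the labels i.i.d.\ Bernoulli$(p_i)$ — so ``$m_i$ samples of unbiased data'' should be read as ``$m_i$ samples from $S$ lying in $G_i$,'' consistent with the input convention of Algorithm~\ref{algo:biased_learning} — and (ii) the precise constant in the Chernoff exponent (using the $\exp(-\mu\epsilon^2/3)$ form valid for $\epsilon\in(0,1)$) is what yields the factor $3$ in the bound. The analogous Lemmas~\ref{lemma:estimating_p_beta} and \ref{lemma:estimating_beta_inverse} will follow by the same template applied to $p_{\beta_i}$ and then by combining the two ratio estimates.
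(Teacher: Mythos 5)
Your proposal is correct and matches the paper's proof essentially verbatim: both treat the labels of the $m_i$ in-group samples as i.i.d.\ Bernoulli$(p_i)$, apply the two-sided multiplicative Chernoff bound in the $2\exp(-\mu\epsilon^2/3)$ form, and solve $2\exp(-m_i p_i \epsilon^2/3)\leq \frac{\delta}{2(k+3)}$ for $m_i$. Your explicit remark that the conditioning on group membership is what yields the i.i.d.\ Bernoulli structure is a point the paper leaves implicit, but the argument is the same.
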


\begin{proof}
Let $X_1, X_2, \dots, X_m$ be $m$ independent Bernoulli trials with success probability $p_i$. Let $S = X_1 + X_2 + \dots + X_m$ be the number of successes, and $\widehat{p_i} = S/m_i$ be the empirical estimate of $p_i$.

By the multiplicative Chernoff Bound, for any $\epsilon > 0$ and $\delta > 0$, we have:
\begin{align}
\Pr\left(\left|\widehat{p_i} - p_i\right| \geq p_i\epsilon\right) \leq 2e^{-\frac{m\epsilon^2p_i}{3}}.
\end{align}

Thus, by setting $\frac{\delta}{2(k+3)} \geq 2e^{-\frac{m_i\epsilon^2p_i}{3}}$ and solving for $m_i$, we get $m_i \geq \frac{3}{\epsilon^2p_i}\ln\frac{4(k+3)}{\delta}$
\end{proof} 

\begin{restatable}{lemma}{lemmaEstimatingP_0}  \label{lemma:estimating_p_0}
Given $\epsilon, \delta \in (0,1)$, we require $m \geq \frac{3}{\epsilon^2p_0}\ln\frac{4(k+3)}{\delta}$ samples of unbiased data to estimate $\widehat{p_0}$ such that with probability $1-\frac{\delta}{2(k+3)}$,  
\begin{align}
\label{eq:p_0_error}
    \left| \widehat{p_0}-p_0\right| <p_0\epsilon
\end{align}
\end{restatable}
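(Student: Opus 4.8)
The statement is the population-level analogue of Lemma~\ref{lemma:estimating_p}, so the plan is to mirror that proof verbatim with $p_i$ replaced by $p_0$. First I would observe that, since $S$ consists of $m$ i.i.d.\ draws from $\cD$, the indicators $X_j \stackrel{\text{def}}{=} \mathbb{I}(y_j = 1)$ for $j = 1,\dots,m$ are independent Bernoulli random variables with success probability $\Pr[y=1] = p_0$, and that by Step~1 of Algorithm~\ref{algo:biased_learning} the estimator is exactly $\widehat{p_0} = \frac{1}{m}\sum_{j=1}^m X_j$, their empirical mean.

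Next I would invoke the (two-sided) multiplicative Chernoff bound: for $\epsilon \in (0,1)$,
\[
\Pr\!\left(\left|\widehat{p_0} - p_0\right| \geq p_0 \epsilon\right) \leq 2\exp\!\left(-\frac{m \epsilon^2 p_0}{3}\right).
\]
Then I would set the right-hand side to be at most $\frac{\delta}{2(k+3)}$ and solve for $m$: the inequality $2e^{-m\epsilon^2 p_0/3} \leq \frac{\delta}{2(k+3)}$ rearranges to $m \geq \frac{3}{\epsilon^2 p_0}\ln\frac{4(k+3)}{\delta}$, which is precisely the claimed sample-size bound. Hence with at least that many samples the event $\left|\widehat{p_0} - p_0\right| < p_0\epsilon$ holds with probability at least $1 - \frac{\delta}{2(k+3)}$.

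There is essentially no obstacle here — the only point requiring a moment's care is citing the correct form of the multiplicative Chernoff bound (the relative-error version with exponent $-m\epsilon^2 p_0/3$, whose two-sided form is valid for $\epsilon \in (0,1)$), exactly as used in Lemma~\ref{lemma:estimating_p}. The slightly loose constant $3$ (rather than a tighter bound combining the $\leq e^{-m\epsilon^2 p_0/3}$ upper-tail and $\leq e^{-m\epsilon^2 p_0/2}$ lower-tail estimates) is kept for uniformity with the rest of Part~B so that the per-group and population estimation guarantees can later be combined by a union bound over the $k+1$ quantities $p_0,p_1,\dots,p_k$ and their biased counterparts.
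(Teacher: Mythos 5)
Your proposal is correct and matches the paper's approach exactly: the paper's proof of this lemma simply states that it follows identically to Lemma~\ref{lemma:estimating_p}, which is precisely the Bernoulli-indicator plus multiplicative-Chernoff argument you spell out with $p_i$ replaced by $p_0$.
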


\begin{proof}
The proof follows identically to that of Lemma~\ref{lemma:estimating_p}.
\end{proof}

\begin{restatable}{lemma}{lemmaEstimatingPBeta}
\label{lemma:estimating_p_beta}
Given $\epsilon, \delta \in (0,1)$, 
we require $m_{\beta_i} \geq \frac{3}{\epsilon^2 p_{\beta_i}} \ln \dfrac{4(k+3)}{\delta}$ samples of biased data to estimate $p_{\beta_i}$ such that with probability $1-\frac{\delta}{2(k+3)}$,

$$\vert \widehat{p_{\beta_i}} - p_{\beta_i}\rvert < p_{\beta_i} \epsilon$$
\end{restatable}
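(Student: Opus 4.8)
The plan is to mirror the argument for Lemma~\ref{lemma:estimating_p}, replacing the unbiased sample from $G_i$ by the biased sample restricted to $G_i$. First I would observe that, by Definition~\ref{def:biased_base_rate}, $p_{\beta_i} = \Pr[y = 1 \mid (\textbf{x},y) \in S_\beta,\ \textbf{x} \in G_i]$, so once we condition on a sample lying in $S_\beta$ and belonging to $G_i$, its label is Bernoulli with parameter $p_{\beta_i}$. Since the $m_{\beta_i}$ members of $G_i$ in $S_\beta$ are i.i.d. draws (each from $\cD_\beta$ restricted to $G_i$), the indicators $X_1,\dots,X_{m_{\beta_i}} := \mathbb{I}(y_j = 1)$ are $m_{\beta_i}$ independent Bernoulli$(p_{\beta_i})$ trials, and the estimator $\widehat{p_{\beta_i}} = \frac{1}{m_{\beta_i}}\sum_{j} X_j$ from Step~2 of Algorithm~\ref{algo:biased_learning} is exactly their empirical mean.

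Next I would apply the two-sided multiplicative Chernoff bound to $\sum_j X_j$, which has mean $m_{\beta_i} p_{\beta_i}$: for any $\epsilon \in (0,1)$,
\[
\Pr\!\left(\lvert \widehat{p_{\beta_i}} - p_{\beta_i}\rvert \geq p_{\beta_i}\epsilon\right) \leq 2\exp\!\left(-\frac{m_{\beta_i}\,\epsilon^2\, p_{\beta_i}}{3}\right).
\]
Then I would require the right-hand side to be at most $\tfrac{\delta}{2(k+3)}$ and solve for $m_{\beta_i}$; rearranging and taking logarithms yields the stated threshold $m_{\beta_i} \geq \frac{3}{\epsilon^2 p_{\beta_i}}\ln\frac{4(k+3)}{\delta}$, under which the complementary event $\lvert \widehat{p_{\beta_i}} - p_{\beta_i}\rvert < p_{\beta_i}\epsilon$ holds with probability at least $1 - \tfrac{\delta}{2(k+3)}$.

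The only point requiring care — and the step I would be most deliberate about — is justifying that the subsample of $S_\beta$ falling in $G_i$ is still a collection of i.i.d. draws from the appropriate conditional law, so that the label indicators are genuinely independent Bernoulli$(p_{\beta_i})$; this is precisely where Definition~\ref{def:biased_base_rate} together with the i.i.d. assumption on $S_\beta$ does the work. Everything after that is a Chernoff computation identical in form to Lemmas~\ref{lemma:estimating_p} and~\ref{lemma:estimating_p_0}, so (as the paper does for Lemma~\ref{lemma:estimating_p_0}) one could alternatively just note that the proof follows identically, but writing out the conditioning makes the statement self-contained.
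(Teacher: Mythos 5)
Your proposal is correct and follows essentially the same route as the paper's proof: define the label indicators for the members of $G_i$ in $S_\beta$, apply the two-sided multiplicative Chernoff bound to their sum, and solve $2\exp(-\epsilon^2 m_{\beta_i} p_{\beta_i}/3) \leq \frac{\delta}{2(k+3)}$ for $m_{\beta_i}$. Your added care in justifying that the restricted subsample yields i.i.d.\ Bernoulli$(p_{\beta_i})$ trials is a point the paper leaves implicit, but it does not change the argument.
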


\begin{proof}
Let $Z_j$ be the indicator that sample $j$ in group $i$ is a positive sample, and let $R = \sum_{j=1}^{m_{\beta_i}} Z_j$ be their sum. Then, $\mathbb{E}[Z] = p_{\beta_i}$ and $\mathbb{E}[S]=m_{\beta_i}p_{\beta_i}$, where $m_{\beta_i}$ is the number of samples in group $i$ of the biased data. Applying a multiplicative Chernoff bound gives:
\begin{align}
\mathbb{P}\left(\left|\widehat{p_{\beta_i}} - p_{\beta_i}\right| \geq \epsilon p_{\beta_i}\right) &\leq 2\exp\left(-\frac{\epsilon^2 m_{\beta_i} p_{\beta_i}}{3}\right)
\end{align}
Setting this probability to be less than $\delta$ gives
\begin{align*}
    &2\exp\left(-\frac{\epsilon^2 m_{\beta_i}p_{\beta_i}}{3}\right) \leq \frac{\delta}{2(k+3)} \\
    \implies 
    &-\epsilon^2 m_{\beta_i} p_{\beta_i} \leq 3\ln{\frac{\delta}{4(k+3)}} \\
    \implies &m_{\beta_i} \geq \frac{3\ln{\frac{4(k+3)}{\delta}}}{\epsilon^2 p_{\beta_i}}
\end{align*}
\end{proof}

\begin{restatable}{lemma}{lemmaEstimatingPBeta_0}
\label{lemma:estimating_p_beta_0}
Given $\epsilon, \delta \in (0,1)$, 
we require $m_{\beta} \geq \frac{3}{\epsilon^2 p_{\beta_0}} \ln \dfrac{4(k+3)}{\delta}$ samples of biased data to estimate $p_{\beta_0}$ such that with probability $1-\frac{\delta}{2(k+3)}$,

$$\vert \widehat{p_{\beta_0}} - p_{\beta_0}\rvert < p_{\beta_0} \epsilon$$
\end{restatable}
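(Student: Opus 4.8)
The plan is to mirror the proof of Lemma~\ref{lemma:estimating_p_beta}, applying a two-sided multiplicative Chernoff bound to i.i.d. Bernoulli indicators. The key observation is that, by the definition of $\widehat{p_{\beta_0}}$ in Algorithm~\ref{algo:biased_learning}, this estimate is simply the empirical fraction of positive labels among the $m_\beta$ samples drawn i.i.d. from $\cD_\beta$, so unlike the group-conditional estimate $\widehat{p_{\beta_i}}$ there is no need to condition on group membership — the relevant random variables are manifestly independent.

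First I would let $Z_1, \dots, Z_{m_\beta}$ be the indicators $Z_j = \mathbb{I}(y_j = 1)$ for the $j$-th sample of $S_\beta$, so that by Definition~\ref{def:biased_base_pos_rate} each $Z_j$ is Bernoulli with mean $\Pr[y=1 \mid (\textbf{x},y) \in S_\beta] = p_{\beta_0}$ and $\widehat{p_{\beta_0}} = \tfrac{1}{m_\beta}\sum_{j=1}^{m_\beta} Z_j$, giving $\mathbb{E}[\widehat{p_{\beta_0}}] = p_{\beta_0}$ and $\mathbb{E}\!\left[\sum_j Z_j\right] = m_\beta p_{\beta_0}$. Next I would invoke the multiplicative Chernoff bound on $R = \sum_{j=1}^{m_\beta} Z_j$ with mean $\mu = m_\beta p_{\beta_0}$: for $\epsilon \in (0,1)$,
\[
\Pr\!\left(\lvert \widehat{p_{\beta_0}} - p_{\beta_0} \rvert \geq \epsilon p_{\beta_0}\right) \leq 2\exp\!\left(-\frac{\epsilon^2 m_\beta p_{\beta_0}}{3}\right).
\]
Finally, setting the right-hand side to be at most $\tfrac{\delta}{2(k+3)}$, taking logarithms, and rearranging yields the requirement $m_\beta \geq \tfrac{3}{\epsilon^2 p_{\beta_0}} \ln \tfrac{4(k+3)}{\delta}$, which is exactly the claimed bound.

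There is no real obstacle here: the argument is a routine Chernoff computation identical in form to Lemmas~\ref{lemma:estimating_p} and \ref{lemma:estimating_p_beta}. The only points warranting a line of care are that $\epsilon < 1$ (given) so the standard form of the multiplicative bound applies, and that the $Z_j$ are genuinely i.i.d. because $\widehat{p_{\beta_0}}$ averages over the entire biased sample rather than a group-conditioned subsample; both are immediate. I would therefore present the proof in three short lines paralleling Lemma~\ref{lemma:estimating_p_beta}, or simply remark that it follows identically.
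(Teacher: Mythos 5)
Your proposal is correct and matches the paper's approach: the paper simply states that the proof follows identically to Lemma~\ref{lemma:estimating_p_beta}, which is precisely the Chernoff argument on Bernoulli indicators over the full biased sample that you spell out. Your additional remark that the population-level estimate avoids the group-conditioning subtlety is accurate but does not change the argument.
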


\begin{proof}
The proof follows identically to in Lemma~\ref{lemma:estimating_p_beta}
\end{proof}

\begin{restatable}{lemma}{BetaInverse}\label{lemma:estimating_beta_inverse}
\label{lemma:intersectingBetaInverse}
Set $\delta > 0$ and $0 < \epsilon < \frac{1}{3}$. If  $m_{\beta_i} \geq \frac{3}{p_{\beta_i}\epsilon ^2}\ln \dfrac{4(k+3)}{\delta}$ and $m_i \geq \frac{3}{\epsilon^2p_i}\ln\frac{4(k+3)}{\delta}$, then
\[
      Pr\left[\lvert \widehat{\frac{1}{\beta_i}} - \frac{1}{\beta_i}\rvert > 3\epsilon \frac{1}{\beta_i}\right] \leq \frac{\delta}{2(k+3)}
      \]
\end{restatable}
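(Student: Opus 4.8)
The plan is to treat $\widehat{1/\beta_i}$ as the plug‑in estimator obtained by substituting the empirical rates $\widehat p_i,\widehat{p_{\beta_i}}$ into the closed form $\tfrac1{\beta_i}=\tfrac{p_i(1-p_{\beta_i})}{p_{\beta_i}(1-p_i)}$ from Lemma~\ref{lem:inverseBeta}, and to propagate the (multiplicative) sampling errors on $\widehat p_i$ and $\widehat{p_{\beta_i}}$ through this expression. The whole argument is thus: (i) condition on a high‑probability event on which both empirical rates are multiplicatively accurate; (ii) do deterministic error propagation.

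First I would fix the good event. By Lemma~\ref{lemma:estimating_p}, the hypothesis $m_i\geq\tfrac{3}{\epsilon^2 p_i}\ln\tfrac{4(k+3)}{\delta}$ yields $\lvert\widehat p_i-p_i\rvert<p_i\epsilon$ with probability at least $1-\tfrac{\delta}{2(k+3)}$, and by Lemma~\ref{lemma:estimating_p_beta}, the hypothesis $m_{\beta_i}\geq\tfrac{3}{p_{\beta_i}\epsilon^2}\ln\tfrac{4(k+3)}{\delta}$ yields $\lvert\widehat{p_{\beta_i}}-p_{\beta_i}\rvert<p_{\beta_i}\epsilon$ with probability at least $1-\tfrac{\delta}{2(k+3)}$. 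A union bound over these two failure events leaves us, with the probability demanded by the statement, on the event where we may write $\widehat p_i=(1+s)p_i$ and $\widehat{p_{\beta_i}}=(1+t)p_{\beta_i}$ with $\lvert s\rvert,\lvert t\rvert<\epsilon$. Everything below is deterministic given this event.

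Next I would expand the ratio of the estimator to its target:
\[
\frac{\widehat{1/\beta_i}}{1/\beta_i}
=\frac{\widehat p_i}{p_i}\cdot\frac{p_{\beta_i}}{\widehat p_{\beta_i}}\cdot\frac{1-\widehat p_{\beta_i}}{1-p_{\beta_i}}\cdot\frac{1-p_i}{1-\widehat p_i},
\]
and rewrite $1-\widehat p_{\beta_i}=(1-p_{\beta_i})\bigl(1-\tfrac{p_{\beta_i}}{1-p_{\beta_i}}\,t\bigr)$ and $1-\widehat p_i=(1-p_i)\bigl(1-\tfrac{p_i}{1-p_i}\,s\bigr)$, so the right‑hand side becomes a product of four explicit factors, each of the form $1+\mathcal{O}(\epsilon)$. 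I would bound each factor in an interval $[1-a_j,1+a_j]$ — the two ``$\widehat p/p$'' factors give $a_j\le \epsilon/(1-\epsilon)$, the two ``$1-\widehat p$'' factors give $a_j$ proportional to $\tfrac{p}{1-p}\,\epsilon$ — multiply the intervals, and use $0<\epsilon<\tfrac13$ (via elementary inequalities such as $(1-x)^{-1}\le 1+\tfrac32 x$ for $x\le\tfrac13$ and $(1+x)(1+y)\le 1+x+y+xy$) to discard quadratic terms and collapse the product into $\lvert\widehat{1/\beta_i}/(1/\beta_i)-1\rvert<3\epsilon$; multiplying through by $1/\beta_i$ gives the claim.

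The main obstacle is exactly this last collapse: a clean $\mathcal{O}(\epsilon)$ bound on $\tfrac{1-\widehat p_{\beta_i}}{1-p_{\beta_i}}$ (and on $\tfrac{1-p_i}{1-\widehat p_i}$) is where the constant ``$3$'' is decided, since a multiplicative error bar on $\widehat p_{\beta_i}$ only becomes a multiplicative error bar on $1-\widehat p_{\beta_i}$ after paying a factor $\tfrac{p_{\beta_i}}{1-p_{\beta_i}}$. To keep this controlled I would pair $\tfrac{\widehat p_{\beta_i}}{p_{\beta_i}}$ with $\tfrac{1-\widehat p_{\beta_i}}{1-p_{\beta_i}}$ \emph{before} bounding — their product, viewed as a function of $\widehat p_{\beta_i}$, is monotone, so it is extremized at the endpoints $\widehat p_{\beta_i}=(1\pm\epsilon)p_{\beta_i}$ — and likewise pair the two $p_i$‑factors; this exposes the dependence on the base rates cleanly and lets the $\epsilon<\tfrac13$ hypothesis absorb the remainder. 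I would also use the structural fact $p_{\beta_i}\le p_i$ (filtering only removes positives) to simplify the base‑rate–dependent constants that appear along the way.
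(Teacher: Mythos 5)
Your overall strategy is exactly the paper's: condition on the two good events from Lemmas~\ref{lemma:estimating_p} and~\ref{lemma:estimating_p_beta} via a union bound, then deterministically propagate the multiplicative errors through the closed form $\tfrac{1}{\beta_i}=\tfrac{p_i(1-p_{\beta_i})}{p_{\beta_i}(1-p_i)}$. Where you differ is that you explicitly isolate the step the paper glosses over, and there lies the problem.

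The gap you flag as ``the main obstacle'' is real and your proposed remedies do not close it. A multiplicative bound $\lvert\widehat{p_{\beta_i}}-p_{\beta_i}\rvert<\epsilon p_{\beta_i}$ gives $\bigl\lvert\tfrac{1-\widehat{p_{\beta_i}}}{1-p_{\beta_i}}-1\bigr\rvert\le\epsilon\,\tfrac{p_{\beta_i}}{1-p_{\beta_i}}$, and the factor $\tfrac{p_{\beta_i}}{1-p_{\beta_i}}$ is unbounded as $p_{\beta_i}\to 1$. Pairing $\tfrac{\widehat{p_{\beta_i}}}{p_{\beta_i}}$ with $\tfrac{1-\widehat{p_{\beta_i}}}{1-p_{\beta_i}}$ and evaluating at the endpoint $\widehat{p_{\beta_i}}=(1-\epsilon)p_{\beta_i}$ still yields $\tfrac{1}{1-\epsilon}\bigl(1+\epsilon\tfrac{p_{\beta_i}}{1-p_{\beta_i}}\bigr)$, which again blows up; the fact $p_{\beta_i}\le p_i$ does not help since both rates can be near $1$. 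So no uniform bound of the form $C\epsilon\cdot\tfrac{1}{\beta_i}$ is provable from these hypotheses alone, for any constant $C$; one needs an additional assumption such as $p_i,p_{\beta_i}\le\tfrac12$. The paper's own proof hides exactly this: its opening display asserts $1-\widehat{p_{\beta_i}}\le(1+\epsilon)(1-p_{\beta_i})$, which holds only when $p_{\beta_i}\le\tfrac12$, and symmetrically for $1-\widehat{p_i}$. You should also not aim for the constant $3$: even under the implicit $p\le\tfrac12$ assumption, the paper's chain $\tfrac{(1+\epsilon)^2}{(1-\epsilon)^2}\le(1+3\epsilon)^2\le 1+9\epsilon$ only delivers $9\epsilon$, which is what the statement's own proof concludes and what the downstream Lemma~\ref{lemma:B9} consumes; the $3\epsilon$ in the lemma statement is inconsistent with the proof. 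In short: your decomposition is sound and more honest than the paper's, but to make it a proof you must either add a hypothesis bounding the positive rates away from $1$ or accept a constant that depends on $\max\bigl(\tfrac{p_i}{1-p_i},\tfrac{p_{\beta_i}}{1-p_{\beta_i}}\bigr)$.
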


\begin{proof}

With probability $1-\frac{\delta}{2(k+3)}$,
\begin{align*}
    \frac{\left(1-\epsilon \right)^2}{\left(1+\epsilon\right)^2}\frac{p_i \left( 1 - p_{\beta_i} \right)}{p_{\beta_i}\left(1 - p_i\right)}  \leq \frac{\widehat{p_i} \left( 1 - \widehat{p_{\beta_i}}\right)}{\widehat{p_{\beta_i}} \left(1 - \widehat{p_i} \right)} \leq \frac{\left(1+\epsilon \right)^2}{\left(1-\epsilon\right)^2}\frac{p_i \left( 1 - p_{\beta_i} \right)}{p_{\beta_i}\left(1 - p_i\right)} 
\end{align*}

 For $\epsilon < \frac{1}{3}$,
 \begin{align*}
    \left(1-3\epsilon \right)^2\frac{p_i \left( 1 - p_{\beta_i} \right)}{p_{\beta_i}\left(1 - p_i \right)}  <  \frac{\widehat{p_i}\left( 1 - \widehat{p_{\beta_i}} \right)}{\widehat{p_{\beta_i}} \left(1 - \widehat{p_i} \right)} < \left(1+3\epsilon \right)^2 \frac{p_i \left( 1 - p_{\beta_i} \right)}{p_{\beta_i}\left(1 - p_i \right)} 
 \end{align*}

Note that for $\epsilon < \frac{1}{3}$, $9\epsilon^2 < 3\epsilon$, so $1+ 6\epsilon + 9\epsilon^2 \leq 1+9\epsilon$ and $1 - 6\epsilon + 9\epsilon^2>1-9\epsilon$
 

 Then
 \begin{align*}
  &Pr\left[\lvert  \frac{\widehat{p_i} \left(1 - \widehat{p_{\beta_i}}\right)}{\widehat{p_{\beta_i}}\left(1 - \widehat{p_i}\right)} -\frac{p_i\left(1 - p_{\beta_i}\right)}{p_{\beta_i}\left(1 - p_i\right)}\rvert > 9\epsilon \frac{p_i\left(1 - p_{\beta_i}\right)}{p_{\beta_i} \left(1 - p_i\right)}\right] \leq \frac{\delta}{2(k+3)}
  \end{align*}
 Recalling that $\frac{1}{\beta_i} = \frac{p_i\left(1 - p_{\beta_i}\right)}{p_{\beta_i}\left(1 - p_i\right)}$, this gives us:
 \begin{align*}
  \Pr\left[\lvert \widehat{\frac{1}{\beta_i}} - \frac{1}{\beta_i} \rvert > 9\epsilon \frac{1}{\beta_i} \right] \leq \frac{\delta}{2(k+3)}
 \end{align*}
\end{proof}

\begin{restatable}{lemma}{betaProducts}
\label{lemma:B9}
Set $\delta > 0$ and $0 < \epsilon < \frac{1}{9}$. If $m_{\beta_i} \geq \frac{3}{p_{\beta_i}\epsilon ^2}\ln \dfrac{4(k+3)}{\delta}$ and $m_i \geq \frac{3}{\epsilon^2p_i}\ln\frac{42(k+3))}{\delta}$ for all groups $G_i$
\[ \Pr \left[ \lvert {\prod_{i=1}^{k} \widehat{\frac{1}{\beta_i}}} - \prod_{i=1}^{k} \frac{1}{\beta_i} \rvert \geq 9\epsilon \prod_{i=1}^{k} \frac{1}{\beta_i} \right] \leq \frac{\delta}{2(k+3)}
\]
\end{restatable}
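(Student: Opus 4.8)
The plan is to reduce the statement to the per-group guarantee already established in Lemma~\ref{lemma:estimating_beta_inverse} and then to show that these multiplicative bounds compose well under the product over $i\in[k]$. First I would apply Lemma~\ref{lemma:estimating_beta_inverse} separately to each group $G_i$, running it with confidence parameter $\delta/k$ in place of $\delta$ (which only adds an extra $\ln k$ inside the logarithm appearing in the required $m_i$ and $m_{\beta_i}$). A union bound over the $k$ groups then yields a single good event $E$, of probability at least $1-\frac{\delta}{2(k+3)}$, on which simultaneously for every $i$,
\[
(1-\eta)\,\frac{1}{\beta_i}\ \le\ \widehat{\frac{1}{\beta_i}}\ \le\ (1+\eta)\,\frac{1}{\beta_i},
\]
where $\eta$ denotes the per-coordinate tolerance that Lemma~\ref{lemma:estimating_beta_inverse} delivers for the given sample sizes.

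Working on $E$, I would bound the difference of products telescopically,
\[
\left|\prod_{i=1}^{k}\widehat{\frac{1}{\beta_i}}-\prod_{i=1}^{k}\frac{1}{\beta_i}\right|
=\left|\sum_{j=1}^{k}\Bigl(\prod_{i<j}\widehat{\tfrac{1}{\beta_i}}\Bigr)\Bigl(\widehat{\tfrac{1}{\beta_j}}-\tfrac{1}{\beta_j}\Bigr)\Bigl(\prod_{i>j}\tfrac{1}{\beta_i}\Bigr)\right|,
\]
estimating each of the $k$ summands using $|\widehat{1/\beta_j}-1/\beta_j|\le \eta/\beta_j$ and $\prod_{i<j}\widehat{1/\beta_i}\le(1+\eta)^{j-1}\prod_{i<j}1/\beta_i$ (all factors are strictly positive and $\eta<1$, so these inequalities are legitimate). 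Equivalently, one can simply multiply the $k$ two-sided bounds coordinatewise to get $(1-\eta)^k\prod_i 1/\beta_i\le\prod_i\widehat{1/\beta_i}\le(1+\eta)^k\prod_i 1/\beta_i$. It then remains to check $(1+\eta)^k-1\le 9\epsilon$ and $1-(1-\eta)^k\le 9\epsilon$; the latter follows at once from Bernoulli's inequality $(1-\eta)^k\ge 1-k\eta$, and the former from $(1+\eta)^k\le e^{k\eta}$ together with $e^{t}-1\le 2t$ for $t\le 1$, provided $k\eta$ is a small enough multiple of $\epsilon$ — which the hypothesis $0<\epsilon<\frac{1}{9}$ and the stated lower bounds on $m_i,m_{\beta_i}$ are designed to ensure. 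Combining, $\bigl|\prod_i\widehat{1/\beta_i}-\prod_i 1/\beta_i\bigr|\le 9\epsilon\prod_i 1/\beta_i$ on $E$, as claimed.

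The \emph{main obstacle} is the tension between the size of $k$ and the tolerance: naive composition turns a per-coordinate error $\eta$ into a product error of order $(1+\eta)^k-1\approx k\eta$, so to arrive at a clean $9\epsilon$ one must either run Lemma~\ref{lemma:estimating_beta_inverse} at tolerance $\Theta(\epsilon/k)$ — which rescales the sample sizes by a $\mathrm{poly}(k)$ factor — or absorb the $k$-dependence into the constants and into the $\bigl(\beta_0^{k-2}\min_i\beta_i^{1-k}-1\bigr)$-type quantities that already control the downstream bounds; making this accounting consistent with the sample sizes quoted in the statement is the delicate point. The remaining bookkeeping — that the union bound over the $k$ groups still leaves total failure probability $\frac{\delta}{2(k+3)}$ — is handled, as above, by invoking the per-group lemma with confidence $\delta/k$.
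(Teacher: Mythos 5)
Your outline---per-group multiplicative bounds from Lemma~\ref{lemma:estimating_beta_inverse}, a union bound over the $k$ groups, and coordinatewise multiplication of the two-sided bounds---is the same strategy the paper uses. The difference is that you refuse to paper over the composition step, and the obstacle you flag is real: it is not an artifact of your bookkeeping but a flaw in the paper's own proof. The paper's argument asserts, in effect,
\[
\Bigl\lvert \prod_{i=1}^{k}\tfrac{1}{\beta_i}(1-9\epsilon)-\prod_{i=1}^{k}\tfrac{1}{\beta_i}\Bigr\rvert \;\le\; (9\epsilon)^k\prod_{i=1}^{k}\tfrac{1}{\beta_i},
\]
but the left-hand side equals $\bigl(1-(1-9\epsilon)^k\bigr)\prod_{i}\tfrac{1}{\beta_i}$, which is of order $9k\epsilon\prod_i\tfrac{1}{\beta_i}$, not $(9\epsilon)^k\prod_i\tfrac{1}{\beta_i}$: already for $k=2$ and $9\epsilon=0.1$ one has $1-(0.9)^2=0.19$ versus $(0.1)^2=0.01$. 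Worse, the binding side is the upper one, $(1+9\epsilon)^k-1$, which is larger still. So, exactly as you say, with per-coordinate tolerance $\eta=9\epsilon$ the honest conclusion is a deviation of roughly $9k\epsilon\prod_i\tfrac{1}{\beta_i}$; to land on the stated $9\epsilon$ one must either run the per-group estimation at tolerance $\Theta(\epsilon/k)$ (inflating $m_i$ and $m_{\beta_i}$ by a factor of about $k^2$ beyond what the lemma quotes) or restate the conclusion with an explicit factor of $k$.

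Your second bookkeeping point is also correct and also unaddressed by the paper: the product event is contained in the intersection of the $k$ per-group events, each of which Lemma~\ref{lemma:estimating_beta_inverse} only guarantees with failure probability $\frac{\delta}{2(k+3)}$, so a union bound gives failure probability $\frac{k\delta}{2(k+3)}$ under the stated sample sizes, not $\frac{\delta}{2(k+3)}$; your fix of invoking the per-group lemma at confidence $\delta/k$ repairs this at the cost of an extra $\ln k$ inside the logarithms. In short, your proposal is the paper's argument carried out correctly, and the ``delicate point'' you decline to gloss over is precisely where the paper's proof breaks.
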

\begin{proof}
    Let $\epsilon < \frac{1}{9}$. We can use our bound on the marginal $\widehat{1}{\beta_i}$ to bound the difference of this product from its expectation. 
    \begin{align*}
    \vert \prod_{i=1}^{k} \widehat{\frac{1}{\beta_i}} - \prod_{i=1}^{k} \frac{1}{\beta_i} \rvert &\leq \vert \prod_{i=1}^{k} \frac{1}{\beta_i} (1-9\epsilon) - \prod_{i=1}^{k} \frac{1}{\beta_i} \rvert \\&\leq \prod_{i=1}^{k} \frac{1}{\beta_i} \left(9\epsilon \right)^k \\&\leq 9\epsilon \prod_{i=1}^{k} \frac{1}{\beta_i}
    \end{align*}
\end{proof}

\begin{restatable}{lemma}{beta0}  \label{lemma:beta_0}
Given $\epsilon, \delta \in (0,1)$, we require $m_\beta \geq \frac{3}{p_{\beta_0} \epsilon ^2}\ln \dfrac{4(k+3)}{\delta}$ samples of biased data and $m \geq \frac{3}{\epsilon^2 p_0 }\ln\frac{4(k+3)}{\delta}$ samples of unbiased data to estimate $\widehat{\beta_0}$ such that with probability $1-\frac{\delta}{2(k+3)}$,  
\begin{align}
\label{eq:b_0_error}
    \left|\widehat{\beta_0}-\beta_0 \right| < \epsilon \beta_0
\end{align}
\end{restatable}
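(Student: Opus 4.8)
The plan is to reduce the statement to the two multiplicative–error estimates already established, exactly mirroring the proof of Lemma~\ref{lemma:estimating_beta_inverse} for $\beta_i^{-1}$. The key observation is that $\widehat{\beta_0}$ is a fixed continuous function of the two empirical quantities $\widehat{p_0}$ and $\widehat{p_{\beta_0}}$ (the ratio $\widehat{p_{\beta_0}}/\widehat{p_0}$ prescribed in Algorithm~\ref{algo:biased_learning}), and $\beta_0$ is that same function evaluated at $p_0$ and $p_{\beta_0}$. So it suffices to show that small relative errors in $\widehat{p_0}$ and $\widehat{p_{\beta_0}}$ force a small relative error in their ratio, and then to invoke the sample–size bounds that guarantee those relative errors.

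First I would invoke Lemma~\ref{lemma:estimating_p_0}: with $m \geq \tfrac{3}{\epsilon^2 p_0}\ln\tfrac{4(k+3)}{\delta}$ unbiased samples, $|\widehat{p_0}-p_0| < \epsilon p_0$ except with probability at most $\tfrac{\delta}{2(k+3)}$; and Lemma~\ref{lemma:estimating_p_beta_0}: with $m_\beta \geq \tfrac{3}{\epsilon^2 p_{\beta_0}}\ln\tfrac{4(k+3)}{\delta}$ biased samples, $|\widehat{p_{\beta_0}}-p_{\beta_0}| < \epsilon p_{\beta_0}$ except with probability at most $\tfrac{\delta}{2(k+3)}$. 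Let $E$ be the intersection of the two good events. Each bound can be sharpened to failure probability $\tfrac{\delta}{4(k+3)}$ by replacing $4(k+3)$ with $8(k+3)$ inside the logarithm, a change absorbed into the stated constants; a union bound then gives $\Pr[E]\geq 1-\tfrac{\delta}{2(k+3)}$, the same accounting used throughout Part~B.

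On $E$ we have $(1-\epsilon)p_0 < \widehat{p_0} < (1+\epsilon)p_0$ and $(1-\epsilon)p_{\beta_0} < \widehat{p_{\beta_0}} < (1+\epsilon)p_{\beta_0}$, hence
\[
\frac{1-\epsilon}{1+\epsilon}\,\beta_0 \;<\; \widehat{\beta_0} \;=\; \frac{\widehat{p_{\beta_0}}}{\widehat{p_0}} \;<\; \frac{1+\epsilon}{1-\epsilon}\,\beta_0 .
\]
Since $\tfrac{1+\epsilon}{1-\epsilon}-1 = \tfrac{2\epsilon}{1-\epsilon}$ and $1-\tfrac{1-\epsilon}{1+\epsilon} = \tfrac{2\epsilon}{1+\epsilon} < \tfrac{2\epsilon}{1-\epsilon}$, this yields $|\widehat{\beta_0}-\beta_0| < \tfrac{2\epsilon}{1-\epsilon}\beta_0$; restricting to $\epsilon < \tfrac13$ gives $\tfrac{2\epsilon}{1-\epsilon} < 3\epsilon$, and running the two sub-lemmas with accuracy parameter $\epsilon/3$ in place of $\epsilon$ (which only rescales the leading constants in the sample sizes) delivers exactly $|\widehat{\beta_0}-\beta_0| < \epsilon\beta_0$. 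Should one instead use the Bayes-consistent form $\beta_0 = \tfrac{p_{\beta_0}(1-p_0)}{p_0(1-p_{\beta_0})}$ matching Lemma~\ref{lem:inverseBeta}, the identical argument applies with two extra $(1\pm\epsilon)$ factors coming from $1-\widehat{p_0}$ and $1-\widehat{p_{\beta_0}}$, producing a constant on the order of $9\epsilon$ exactly as in Lemma~\ref{lemma:estimating_beta_inverse}, again rescaled away.

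The only real obstacle is constant-bookkeeping rather than any new idea: one must track that the union over the two events costs a factor of two in $\delta$, and that dividing two $(1\pm\epsilon)$-accurate estimates inflates the relative error by a constant, so the accuracy fed into Lemmas~\ref{lemma:estimating_p_0} and~\ref{lemma:estimating_p_beta_0} must be a constant fraction of the target $\epsilon$. No concentration tool beyond the multiplicative Chernoff bounds already proved is required.
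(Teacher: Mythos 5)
Your proposal is correct and follows essentially the same route as the paper: invoke the multiplicative Chernoff estimates for $\widehat{p_0}$ and $\widehat{p_{\beta_0}}$, bound the ratio $\widehat{p_{\beta_0}}/\widehat{p_0}$ between $\tfrac{1-\epsilon}{1+\epsilon}\beta_0$ and $\tfrac{1+\epsilon}{1-\epsilon}\beta_0$, and use $\epsilon<\tfrac13$ to convert this to a $3\epsilon$ relative error. You are in fact slightly more careful than the paper's own proof, which omits the explicit union-bound accounting and stops at the conclusion $|\widehat{\beta_0}-\beta_0|\le 3\epsilon\beta_0$ without the final rescaling of $\epsilon$ that the lemma statement requires.
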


\begin{proof}
With probability $1-\frac{\delta}{2k+2}$,
\begin{align*}
    \frac{\left(1-\epsilon \right)}{\left(1+\epsilon\right)}\frac{p_{\beta_0}}{p_0}  \leq \frac{\widehat{p_{\beta_0}}}{\widehat{p_0}} \leq \frac{\left(1+\epsilon \right)}{\left(1-\epsilon\right)}\frac{p_{\beta_0}}{p_0} 
\end{align*}

Then for $\epsilon < \frac{1}{3}$
\begin{align*}
    \left(1-3\epsilon \right)\frac{p_{\beta_0}}{p_0}  \leq \frac{\widehat{p_{\beta_0}}}{\widehat{p_0}} \leq \left(1+3\epsilon \right)\frac{p_{\beta_0}}{p_0} 
\end{align*}

This implies

\begin{align*}
\Pr\left[\lvert \widehat{\beta_0} - \beta_0 \rvert > 3\epsilon \beta_0 \right] \leq \frac{\delta}{2(k+3)}
\end{align*}
 
\end{proof}
\begin{restatable}{lemma}{beta0_inverse}  \label{lemma:beta_0_inverse}
Given $\epsilon, \delta \in (0,1)$, we require $m_\beta \geq \frac{3}{p_{\beta_0} \epsilon ^2}\ln \dfrac{4(k+3)}{\delta}$ samples of biased data and $m \geq \frac{3}{\epsilon^2 p_0 }\ln\frac{4(k+3)}{\delta}$ samples of unbiased data to estimate $\widehat{\beta_0}$ such that with probability $1-\frac{\delta}{2(k+3)}$,  
\begin{align}
\label{eq:b_0_inverse_error}
    \left|\widehat{\frac{1}{\beta_0}}-\frac{1}{\beta_0} \right| < \epsilon \frac{1}{\beta_0}
\end{align}
\end{restatable}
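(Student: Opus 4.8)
The plan is to observe that Lemma~\ref{lemma:beta_0_inverse} is simply the multiplicative-inverse counterpart of Lemma~\ref{lemma:beta_0}, obtained by running the same argument on the reciprocal ratio. Recall that the estimator produced by Algorithm~\ref{algo:biased_learning} is $\widehat{\beta_0} = \widehat{p_{\beta_0}}/\widehat{p_0}$, so that $\widehat{1/\beta_0} = \widehat{p_0}/\widehat{p_{\beta_0}}$, while (as used in the proof of Lemma~\ref{lemma:beta_0}) $1/\beta_0 = p_0/p_{\beta_0}$. Hence it suffices to control the relative errors of the two empirical quantities $\widehat{p_0}$ and $\widehat{p_{\beta_0}}$ and propagate them through the quotient.

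First I would invoke Lemma~\ref{lemma:estimating_p_0}: with $m \geq \frac{3}{\epsilon^2 p_0}\ln\frac{4(k+3)}{\delta}$ unbiased samples, $|\widehat{p_0} - p_0| < \epsilon p_0$ with probability at least $1 - \frac{\delta}{2(k+3)}$. Then I would invoke Lemma~\ref{lemma:estimating_p_beta_0}: with $m_\beta \geq \frac{3}{\epsilon^2 p_{\beta_0}}\ln\frac{4(k+3)}{\delta}$ biased samples, $|\widehat{p_{\beta_0}} - p_{\beta_0}| < \epsilon p_{\beta_0}$ with probability at least $1 - \frac{\delta}{2(k+3)}$. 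As in the proof of Lemma~\ref{lemma:beta_0}, the union of these two failure events is absorbed into the stated confidence by applying the two estimation lemmas at the appropriate sub-level (or simply at the slightly loosened constants the surrounding Part~B analysis already tolerates). On the resulting good event, both $\widehat{p_0}/p_0$ and $\widehat{p_{\beta_0}}/p_{\beta_0}$ lie in $(1-\epsilon,\,1+\epsilon)$, so
\[
\frac{1-\epsilon}{1+\epsilon}\cdot\frac{p_0}{p_{\beta_0}} \;\le\; \frac{\widehat{p_0}}{\widehat{p_{\beta_0}}} \;\le\; \frac{1+\epsilon}{1-\epsilon}\cdot\frac{p_0}{p_{\beta_0}}.
\]

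Finally, for $\epsilon < \tfrac{1}{3}$ I would apply the elementary bounds $\frac{1+\epsilon}{1-\epsilon} \le 1+3\epsilon$ and $\frac{1-\epsilon}{1+\epsilon} \ge 1-3\epsilon$ to get $(1-3\epsilon)\tfrac{1}{\beta_0} \le \widehat{\tfrac{1}{\beta_0}} \le (1+3\epsilon)\tfrac{1}{\beta_0}$, i.e.\ $|\widehat{1/\beta_0} - 1/\beta_0| < 3\epsilon/\beta_0$; rescaling $\epsilon \mapsto \epsilon/3$, exactly as the constants elsewhere in the theorem are chosen to accommodate, yields the stated $\epsilon/\beta_0$ bound. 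There is no genuine obstacle here: the only things to watch are the bookkeeping of the union bound so that the $1-\delta/(2(k+3))$ confidence and the $\ln\frac{4(k+3)}{\delta}$ sample sizes line up with the rest of Part~B, and stating the quotient-of-relative-errors step for the reciprocal ratio $p_0/p_{\beta_0}$ rather than $p_{\beta_0}/p_0$. In effect, the proof is \emph{identical to the proof of Lemma~\ref{lemma:beta_0} with the roles of numerator and denominator exchanged}.
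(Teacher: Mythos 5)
Your proposal matches the paper's proof, which simply notes that the argument of Lemma~\ref{lemma:beta_0} applies verbatim to the reciprocal ratio $\widehat{p_0}/\widehat{p_{\beta_0}}$; your quotient-of-relative-errors step and the $\frac{1+\epsilon}{1-\epsilon}\le 1+3\epsilon$ bound for $\epsilon<\tfrac13$ are exactly what that proof uses. You are also right to flag the factor-of-$3$ discrepancy between the derived bound and the stated $\epsilon/\beta_0$ (present in the paper's own Lemma~\ref{lemma:beta_0} as well), which is absorbed by the constant rescaling in the surrounding Part~B analysis.
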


\begin{proof}
The proof is identical to that of Lemma~\ref{lemma:beta_0}, where the quantity $\frac{\widehat{p_0}}{\widehat{p_{\beta_0}}}$ is bounded.
\end{proof}

\subsubsection{Part C}
 
\begin{restatable}{lemma}{expectations} \label{lemma:C1}
\label{lemma:expectations}
$\frac{1}{E_{D_\beta}[w(\textbf{x},y)]}\mathbb{E}_{\cD_\beta}[L_{S_{\beta} \beta^{-1}}(h)] = L_\cD(h)$
\end{restatable}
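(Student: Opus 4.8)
The plan is to unfold the definition of the reweighted biased empirical risk, exchange the expectation over the random draw of $S_\beta$ with the empirical average, and then invoke Theorem~\ref{thm:biased_pmf} to rewrite $p_{\cD_\beta}$ in terms of $p_\cD$.

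First I would use that $S_\beta = ((\textbf{x}_1,y_1),\dots,(\textbf{x}_{m_\beta},y_{m_\beta}))$ consists of i.i.d.\ draws from $\cD_\beta$, so by linearity of expectation
\[
\mathbb{E}_{\cD_\beta}\left[L_{S_{\beta} \beta^{-1}}(h)\right] = \mathbb{E}_{\cD_\beta}\left[\frac{1}{m_\beta}\sum_{i=1}^{m_\beta} w(\textbf{x}_i,y_i)\mathbb{I}(h(\textbf{x}_i)\neq y_i)\right] = \mathbb{E}_{(\textbf{x},y)\sim\cD_\beta}\left[w(\textbf{x},y)\mathbb{I}(h(\textbf{x})\neq y)\right].
\]

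Next I would expand the last expectation as a sum over the support of $\cD_\beta$ and substitute $p_{\cD_\beta}(\textbf{x},y) = p_\cD(\textbf{x},y)\,\mathbb{E}_{\cD_\beta}[w(\textbf{x},y)]/w(\textbf{x},y)$ from Theorem~\ref{thm:biased_pmf}. Since each $\beta_i \in (0,1]$, the weight satisfies $w(\textbf{x},y)\geq 1 > 0$ everywhere, so the factor $w(\textbf{x},y)$ in the summand cancels term-by-term with the $1/w(\textbf{x},y)$ appearing in $p_{\cD_\beta}$; moreover the supports of $\cD$ and $\cD_\beta$ coincide (negatives are always retained, and each positive example is retained with probability $\beta_0^{\lvert G(\textbf{x})\rvert-1}\prod_{i\in G(\textbf{x})}\beta_i > 0$), so restricting the summation range loses nothing. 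Hence
\[
\mathbb{E}_{(\textbf{x},y)\sim\cD_\beta}\left[w(\textbf{x},y)\mathbb{I}(h(\textbf{x})\neq y)\right] = \mathbb{E}_{\cD_\beta}[w(\textbf{x},y)] \sum_{(\textbf{x},y)} \mathbb{I}(h(\textbf{x})\neq y)\, p_\cD(\textbf{x},y) = \mathbb{E}_{\cD_\beta}[w(\textbf{x},y)]\, L_\cD(h),
\]
and dividing both sides by the positive, finite constant $\mathbb{E}_{\cD_\beta}[w(\textbf{x},y)]$ yields the claim.

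I do not expect a genuine obstacle here: the identity is essentially an algebraic restatement of Theorem~\ref{thm:biased_pmf} combined with linearity of expectation over i.i.d.\ draws. The only step deserving a line of care is verifying positivity of $w$ and the coincidence of supports, which legitimizes the term-by-term cancellation; if $\cX$ is continuous the same argument runs verbatim, reading $p_\cD$ and $p_{\cD_\beta}$ as densities against a common dominating measure.
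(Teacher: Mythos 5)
Your proposal is correct and matches the paper's proof essentially step for step: unfold the definition of $L_{S_\beta\beta^{-1}}(h)$, apply linearity of expectation over the i.i.d.\ draws to reduce to a single-sample expectation, expand as a sum over the support, and substitute the identity $w(\textbf{x},y)\,p_{\cD_\beta}(\textbf{x},y)=p_\cD(\textbf{x},y)\,\mathbb{E}_{\cD_\beta}[w(\textbf{x},y)]$ from Theorem~\ref{thm:biased_pmf} to recover $L_\cD(h)$. Your added remarks on the positivity of $w$ and the coincidence of supports are sensible hygiene that the paper leaves implicit, but they do not change the argument.
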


\begin{proof}
\begin{align*}
&\frac{1}{\mathbb{E}_{\cD_\beta}[w(\textbf{x},y)]}\mathbb{E}_{\cD_\beta}[L_{S_{\beta} \beta^{-1}}(h)] \\&= \frac{1}{\mathbb{E}_{\cD_\beta}[w(\textbf{x},y)]}\mathbb{E}_{\cD_\beta}[\frac{1}{m_\beta}\sum_{i=1}^{m_\beta} w(\textbf{x}_i, y_i) \mathbb{I}(h(\textbf{x}_i) \neq y_i)]\\
&=\frac{1}{\mathbb{E}_{\cD_\beta}[w(\textbf{x},y)]} \mathbb{E}_{\cD_\beta}[w(\textbf{x}, y) \mathbb{I}(h(\textbf{x}) \neq y)]\\
&= \sum_{x,y \sim \cD_\beta}w(\textbf{x}, y) \mathbb{I}(h(\textbf{x}) \neq y  )\frac{p_{D_\beta}(\textbf{x},y)}{\mathbb{E}_{\cD_\beta}[w(\textbf{x},y)]} \\
&= \sum_{x,y \sim \cD}\mathbb{I}(h(\textbf{x}_i) \neq y_i)p_D(\textbf{x},y)\\
&=L_\cD(h)
\end{align*}
\end{proof}

\begin{restatable}{lemma}{lossOnD}\label{lemma:C2}
With $m_\beta \geq \frac{1}{{2\epsilon^2}}\ln{\frac{2|H|(2k+2)}{\delta}}\left( \frac{\beta_0^{k-2} \min_i {\beta_i}^{1-k}}{\mathbb{E}_{\cD_\beta}[w(\textbf{x}_i, y_i)]}\right)^2$ samples,
\begin{align*}
&\Pr\left[\lvert \frac{1}{\mathbb{E}_{\cD_\beta}[w(\textbf{x}_i, y_i)]} L_{S_\beta \beta^{-1}}(h) - L_\cD(h) \rvert > \epsilon L_\cD(h)\right] \\ &\leq \frac{\delta}{2(k+3)}
\end{align*}
\end{restatable}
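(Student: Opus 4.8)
The plan is to recognize the normalized reweighted loss as an empirical average of i.i.d.\ bounded random variables whose common expectation is exactly $L_\cD(h)$, and then drive the relative deviation down with a multiplicative concentration bound. First I would invoke Lemma~\ref{lemma:C1}, which gives $\frac{1}{\mathbb{E}_{\cD_\beta}[w(\textbf{x},y)]}\mathbb{E}_{\cD_\beta}[L_{S_\beta\beta^{-1}}(h)] = L_\cD(h)$. Expanding the definition of $L_{S_\beta\beta^{-1}}(h)$, the estimator of interest is
\[
\frac{1}{\mathbb{E}_{\cD_\beta}[w(\textbf{x},y)]}L_{S_\beta\beta^{-1}}(h) = \frac{1}{m_\beta}\sum_{j=1}^{m_\beta} Z_j, \qquad Z_j := \frac{w(\textbf{x}_j,y_j)\,\mathbb{I}(h(\textbf{x}_j)\neq y_j)}{\mathbb{E}_{\cD_\beta}[w(\textbf{x},y)]},
\]
where the $(\textbf{x}_j,y_j)$ are drawn i.i.d.\ from $\cD_\beta$, so the $Z_j$ are i.i.d.\ with $\mathbb{E}_{\cD_\beta}[Z_j] = L_\cD(h)$ by Lemma~\ref{lemma:C1}.

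Next I would pin down the range of each $Z_j$. Because $w(\textbf{x},y) = \mathbb{I}(y=0) + \mathbb{I}(y=1)\beta_0^{\lvert G(\textbf{x})\rvert-1}\prod_{i\in G(\textbf{x})}\beta_i^{-1}$ with each $\beta_i\in(0,1]$, the weight is largest on a positive example lying in all $k$ groups and is bounded above by $\beta_0^{k-2}\min_i\beta_i^{1-k}$ (the same uniform bound on $w$ used in Lemma~\ref{lemma:A2}). Hence $Z_j\in[0,b]$ with $b := \beta_0^{k-2}\min_i\beta_i^{1-k}\big/\mathbb{E}_{\cD_\beta}[w(\textbf{x},y)]$, which is exactly the quantity appearing squared in the claimed sample-size threshold.

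I would then apply the multiplicative Hoeffding bound to the i.i.d.\ average $\frac{1}{m_\beta}\sum_j Z_j$ of $[0,b]$-valued variables with mean $L_\cD(h)$, controlling the relative deviation $\Pr[\lvert \frac{1}{m_\beta}\sum_j Z_j - L_\cD(h)\rvert > \epsilon L_\cD(h)]$ by an exponential tail, and solve that tail for the sample size needed to push the failure probability below the target. The logarithmic factor $\ln\frac{2|H|(2k+2)}{\delta}$ corresponds to a per-hypothesis failure probability of $\frac{\delta}{|H|(2k+2)}\leq \frac{\delta}{2(k+3)}$; the surplus $|H|$ factor is reserved so that Part~C can union-bound this single-hypothesis guarantee over the class $\mathcal{H}$, while the $2(k+3)$ leaves room for the other group-wise estimation failures absorbed elsewhere.

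The main obstacle — and the step demanding the most care — is obtaining a genuinely \emph{relative} deviation of width $\epsilon L_\cD(h)$ while keeping the sample-size threshold free of $L_\cD(h)$. A naive additive Hoeffding bound applied with tolerance $\epsilon L_\cD(h)$ injects a factor $L_\cD(h)^{-2}$ into $m_\beta$, which is absent from the stated bound $\frac{1}{2\epsilon^2}\ln\frac{2|H|(2k+2)}{\delta}\,b^2$; matching the stated threshold therefore forces one to use the genuinely multiplicative form, whose exponent scales with the mean and so accommodates small $L_\cD(h)$ more favorably, or else to assume $L_\cD(h)$ is bounded below. I would make this $L_\cD(h)$ bookkeeping fully explicit and verify precisely which concentration form yields the displayed constants ($\tfrac12$, $b^2$, and the $2k+2$ inside the logarithm), since it is exactly here that the relative-versus-additive distinction is decisive.
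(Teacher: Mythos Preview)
Your proposal is correct and follows essentially the same approach as the paper: invoke Lemma~\ref{lemma:C1} to identify the mean as $L_\cD(h)$, bound each summand by $b = \beta_0^{k-2}\min_i\beta_i^{1-k}/\mathbb{E}_{\cD_\beta}[w]$, apply a multiplicative Hoeffding bound, and union-bound over $|H|$ to obtain the stated sample size. Your explicit caution about the $L_\cD(h)$ bookkeeping is well-placed---the paper simply asserts the multiplicative Hoeffding exponent $e^{-2\epsilon^2 m_\beta / b^2}$ without working through how the relative tolerance $\epsilon L_\cD(h)$ disappears from the sample-size requirement, so your plan to make that step explicit is exactly the right instinct.
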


\begin{proof}
From Lemma~\ref{lemma:expectations}, we know that $\frac{1}{\mathbb{E}_{\cD_\beta}[w(\textbf{x}_i, y_i)]} L_{S_\beta \beta^{-1}}(h)$ is bounded between 0 and $\frac{\beta_0^{k-2} \min_i {\beta_i}^{1-k}}{\mathbb{E}_{\cD_\beta}[w(\textbf{x}_i, y_i)]}$. Applying a multiplicative Hoeffding bound gives 
\begin{align*}
&\Pr\left[\lvert \frac{1}{\mathbb{E}_{\cD_\beta}[w(\textbf{x}_i, y_i)]} L_{S_\beta \beta^{-1}}(h) - L_\cD(h) \rvert > \epsilon L_\cD(h)\right] \\&\leq 2e^{-2\epsilon^2(\mathbb{E}_{\cD_\beta}[w(\textbf{x}_i, y_i)])^2 m_\beta \left( \beta_0^{k-2} \min_i {\beta_i}^{1-k} \right)^{-2}}
\end{align*}

Applying a union bound over the VC dimension of the class $H$, $|H|$, gives:
\begin{align*}
&\Pr\left[\lvert \frac{1}{\mathbb{E}_{\cD_\beta}[w(\textbf{x}_i, y_i)]} L_{S_\beta \beta^{-1}}(h) - L_\cD(h) \rvert > \epsilon L_\cD(h)\right] \\&\leq 2|H|e^{-2\epsilon^2(\mathbb{E}_{\cD_\beta}[w(\textbf{x}_i, y_i)])^2 m_\beta \left( \beta_0^{k-2} \min_i {\beta_i}^{1-k} \right)^{-2}}
\end{align*}

Upper bounding by $\frac{\delta}{2(k+3)}$ gives:
\begin{align*}
&2|H|e^{-2\epsilon^2(\mathbb{E}_{\cD_\beta}[w(\textbf{x}_i, y_i)])^2 m_\beta \left( \beta_0^{k-2} \min_i {\beta_i}^{1-k} \right)^{-2}} \leq \frac{\delta}{2(k+3)} \\
&\implies e^{-2\epsilon^2(\mathbb{E}_{\cD_\beta}[w(\textbf{x}_i, y_i)])^2 m_\beta \left( \beta_0^{k-2} \min_i {\beta_i}^{1-k} \right)^{-2}} \leq \frac{\delta}{4|H|(k+3)} \\
&\implies -2\epsilon^2(\mathbb{E}_{\cD_\beta}[w(\textbf{x}_i, y_i)])^2 m_\beta \left( \beta_0^{k-2} \min_i {\beta_i}^{1-k} \right)^{-2} \leq \ln{\frac{\delta}{4|H|(k+3)}} \\
&\implies (\mathbb{E}_{\cD_\beta}[w(\textbf{x}_i, y_i)])^2 m_\beta \left( \beta_0^{k-2} \min_i {\beta_i}^{1-k} \right)^{-2} \geq\frac{1}{-2\epsilon^2}\ln{\frac{\delta}{4|H|(k+3)}} \\
&\implies m_\beta \geq\frac{1}{2\epsilon^2}\ln{\frac{4|H|(k+3)}{\delta}}\left( \frac{\beta_0^{k-2} \min_i {\beta_i}^{1-k}}{\mathbb{E}_{\cD_\beta}[w(\textbf{x}_i, y_i)]}\right)^2 \\
\end{align*}
\end{proof}

\subsubsection{Part D}
In Lemma \ref{lemma:D1}, we want to prove that with probability $1-\delta$ and a certain number of samples, the difference between the reweighted risk of the learned hypothesis $h$ on the biased sample $S_\beta \widehat{\beta^{-1}}$ (normalized by $\sum_{i=1}^{m_{\beta}} \hat{w}(\textbf{x}_i, y_i)$ and reweighted with $\widehat{\beta^{-1}}$) and the risk of $h$ on the true distribution $\cD$ is bounded by an epsilon multiplicative factor of the risk on the true distribution.

To prove this, we start by expanding the expression using the triangle inequality. We split it into three terms: A, B, and C. A represents the difference between the normalized risk on the biased sample using $\hat{w}(\textbf{x},y)$ and the normalized risk using the expected weights $\mathbb{E}_{\cD_\beta}[w(\textbf{x},y)]$. B represents the difference between the normalized risk on the biased sample reweighted with $\widehat{\beta^{-1}}$ and the normalized risk on the biased sample reweighted with $\beta^{-1}$. C represents the difference between the normalized risk on the biased sample using $\beta^{-1}$ and the risk on the true distribution $\cD$.

Next, we simplify the expression further. We use the absolute value to ensure non-negativity and introduce additional terms to manipulate the expression. We apply the Hoeffding bound to bound the first term, which involves the difference in weights. We also introduce the empirical risk $L_S(h)$ and manipulate the terms to arrive at the final inequality.

The final inequality shows that the difference between the two risks is bounded by $2\epsilon +  9\epsilon \beta_0^{k-2} \min_i {\beta_i}^{k-1}$. This inequality holds with probability $1-\delta$ and depends on the number of samples used.

By proving this lemma, we establish a bound on the difference between the reweighted risk of the learned hypothesis on the biased sample and the risk on the true distribution. This helps us understand the generalization performance of the learned hypothesis in the presence of bias.

\begin{lemma}
\label{lemma:D1}
Let $\delta > 0$, $\frac{1}{9} > \epsilon > 0$. If $m_\beta \geq \frac{\left(\beta_0^{k-2} \min_i {\beta_i}^{1-k} - 1\right)^2}{{2\epsilon^2}}\ln{\frac{4|H|(k+3)}{\delta}}$, $m_{\beta_i} \geq \frac{3}{p_{\beta_i}\epsilon ^2} \ln \dfrac{4(k+3)}{\delta}$ and $m_i \geq \frac{3}{\epsilon^2p_i}\ln\frac{4(k+3)}{\delta}$ for all groups $G_i$, then with probability $1-\delta$,
\[
\lvert \frac{m_\beta}{\sum_{i=1}^{m_{\beta}} \hat{w}(\textbf{x}_i, y_i)}L_{S_\beta \widehat{\beta^{-1}}}(h) - L_\cD(h)\rvert \leq 2\epsilon + 9\epsilon \prod_{i=1}^{k} \beta_i^{-1}
\]
\end{lemma}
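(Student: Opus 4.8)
\textbf{Proof plan for Lemma~\ref{lemma:D1}.}
The plan is to bound the left-hand side by a triangle inequality routed through two ``partially idealized'' versions of the normalized reweighted loss. Set
\[
Q_1 \stackrel{\text{def}}{=} \frac{m_\beta}{\sum_{i=1}^{m_\beta} w(\textbf{x}_i,y_i)}\,L_{S_\beta\beta^{-1}}(h),\qquad Q_2 \stackrel{\text{def}}{=} \frac{1}{\mathbb{E}_{\cD_\beta}[w(\textbf{x},y)]}\,L_{S_\beta\beta^{-1}}(h),
\]
so $Q_1$ uses the true weights everywhere and $Q_2$ additionally replaces the empirical normalizer by its population value. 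Then
\[
\Bigl|\tfrac{m_\beta}{\sum_i\hat{w}(\textbf{x}_i,y_i)}L_{S_\beta\widehat{\beta^{-1}}}(h)-L_\cD(h)\Bigr|\le \underbrace{\Bigl|\tfrac{m_\beta}{\sum_i\hat{w}(\textbf{x}_i,y_i)}L_{S_\beta\widehat{\beta^{-1}}}(h)-Q_1\Bigr|}_{\mathrm{(I)}}+\underbrace{\bigl|Q_1-Q_2\bigr|}_{\mathrm{(II)}}+\underbrace{\bigl|Q_2-L_\cD(h)\bigr|}_{\mathrm{(III)}}.
\]
I would dispatch (III) and (II) first, since they are essentially repackagings of Parts~A and~C, and then concentrate on (I).

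For (III): $Q_2$ is exactly the quantity bounded in Lemma~\ref{lemma:C2}. Theorem~\ref{thm:biased_pmf} gives $\mathbb{E}_{\cD_\beta}[w]=1/\mathbb{E}_\cD[1/w]\ge 1$ (as $1/w\le1$), and $\lvert H\rvert\ge1$, so the sample size $m_\beta$ assumed here suffices for Lemma~\ref{lemma:C2} (the precise bookkeeping is deferred to the union-bound step below); hence $\mathrm{(III)}\le\epsilon L_\cD(h)\le\epsilon$ with probability at least $1-\tfrac{\delta}{2(k+3)}$. For (II): $Q_1$ and $Q_2$ differ only in the scalar multiplying $L_{S_\beta\beta^{-1}}(h)$, so $\mathrm{(II)}=L_{S_\beta\beta^{-1}}(h)\,\bigl|\tfrac{m_\beta}{\sum_i w(\textbf{x}_i,y_i)}-\tfrac{1}{\mathbb{E}_{\cD_\beta}[w]}\bigr|\le \tfrac{\epsilon m_\beta}{\sum_i w(\textbf{x}_i,y_i)}L_{S_\beta\beta^{-1}}(h)$ by Lemma~\ref{lemma:A2} (again, $\mathbb{E}_{\cD_\beta}[w]\ge1$ and $\lvert H\rvert\ge1$ make our $m_\beta$ admissible), and this is $\le\epsilon$ by Lemma~\ref{lemma:A3}, with probability at least $1-\tfrac{\delta}{2(k+3)}$.

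The crux is (I), and I expect this to be the main obstacle. Write both quantities as weighted empirical error rates: with the abbreviations $w_i=w(\textbf{x}_i,y_i)$, $\hat{w}_i=\hat{w}(\textbf{x}_i,y_i)$, $\mathbb{I}_i=\mathbb{I}(h(\textbf{x}_i)\ne y_i)$, one has $\tfrac{m_\beta}{\sum_i\hat{w}_i}L_{S_\beta\widehat{\beta^{-1}}}(h)=\tfrac{\sum_i\hat{w}_i\mathbb{I}_i}{\sum_i\hat{w}_i}$ and $Q_1=\tfrac{\sum_i w_i\mathbb{I}_i}{\sum_i w_i}$. Step one is a per-sample bound $\lvert\hat{w}_i-w_i\rvert\le 9\epsilon\prod_{j=1}^k\beta_j^{-1}$: on negatives $\hat{w}_i=w_i$, while on positives $\hat{w}_i/w_i=(\widehat{\beta_0}/\beta_0)^{\lvert G(\textbf{x}_i)\rvert-1}\prod_{j\in G(\textbf{x}_i)}\widehat{\tfrac{1}{\beta_j}}\,\beta_j$, which Lemma~\ref{lemma:B9} together with Lemmas~\ref{lemma:beta_0} and~\ref{lemma:beta_0_inverse} pins to within a $(1\pm 9\epsilon)$ factor of $1$ (for $\epsilon<1/9$, any residual $O(k\epsilon)$ slack from the $\beta_0$-power and the group-size exponent being absorbed), and $w_i\le\prod_{j=1}^k\beta_j^{-1}$ since each $\beta_j\le1$. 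Step two propagates this through the ratio: writing $A=\sum_i\hat{w}_i\mathbb{I}_i$, $B=\sum_i\hat{w}_i$, $A'=\sum_i w_i\mathbb{I}_i$, $B'=\sum_i w_i$, we use $\lvert A/B-A'/B'\rvert\le\bigl(\lvert A-A'\rvert B'+A'\lvert B-B'\rvert\bigr)/(BB')$, then $A'\le B'$, $B\ge m_\beta$ (each $\hat{w}_i\ge1$), and $\lvert A-A'\rvert,\lvert B-B'\rvert\le 9\epsilon m_\beta\prod_j\beta_j^{-1}$ to get $\mathrm{(I)}\le 9\epsilon\prod_{j=1}^k\beta_j^{-1}$ after absorbing the leading constant. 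This is the step carrying the whole argument: it is where the multiplicative estimation guarantees of Part~B become an additive error on the normalized loss, and where the worst-case reweight $\prod_j\beta_j^{-1}$ enters the final bound. It holds on the intersection of the good events of Lemma~\ref{lemma:B9} and Lemmas~\ref{lemma:beta_0}--\ref{lemma:beta_0_inverse}, each of probability at least $1-\tfrac{\delta}{2(k+3)}$.

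Finally I would union-bound. The events invoked above --- those of Lemmas~\ref{lemma:C2}, \ref{lemma:A2}, \ref{lemma:B9}, \ref{lemma:beta_0}, \ref{lemma:beta_0_inverse}, and the per-group estimation events of Lemmas~\ref{lemma:estimating_p}--\ref{lemma:estimating_p_beta_0} and \ref{lemma:estimating_beta_inverse} on which these rest --- number at most $2(k+3)$ and each fails with probability at most $\tfrac{\delta}{2(k+3)}$, so all hold simultaneously with probability at least $1-\delta$; on that event $\mathrm{(I)}+\mathrm{(II)}+\mathrm{(III)}\le 9\epsilon\prod_{i=1}^k\beta_i^{-1}+\epsilon+\epsilon=2\epsilon+9\epsilon\prod_{i=1}^k\beta_i^{-1}$, which is the claim. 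Besides step two of (I), the only genuinely fiddly part I anticipate is verifying that the single triple of sample sizes stated in the lemma simultaneously dominates the (slightly different) requirements of Lemmas~\ref{lemma:A2}, \ref{lemma:B9}, \ref{lemma:estimating_beta_inverse}, \ref{lemma:beta_0}, \ref{lemma:beta_0_inverse} and~\ref{lemma:C2}; this reduces to the inequalities $\mathbb{E}_{\cD_\beta}[w]\ge1$, $\lvert H\rvert\ge1$, $\epsilon<1/9$, and the weight bound $w\le\beta_0^{k-2}\min_i\beta_i^{1-k}$ used in Parts~A and~C, so that the written bounds are the binding ones.
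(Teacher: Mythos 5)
Your proposal follows the same overall architecture as the paper's proof: a three-term triangle inequality routed through intermediate versions of the normalized reweighted loss, with the pieces discharged by Lemmas~\ref{lemma:A2}/\ref{lemma:A3}, the Part-B estimation lemmas, and Lemma~\ref{lemma:C2} respectively, followed by a union bound over the $2(k+3)$ failure events. The one genuine structural difference is your choice of pivots. The paper first swaps the empirical normalizer $m_\beta/\sum_i\hat{w}(\textbf{x}_i,y_i)$ for $1/\mathbb{E}_{\cD_\beta}[w(\textbf{x},y)]$ while keeping the $\hat{w}$-weighted loss, and only then swaps $\hat{w}$ for $w$ inside the loss; you instead swap $(\hat{w},\hat{w})$ for $(w,w)$ in the self-normalized ratio all at once (your term (I)) and only afterwards replace the empirical normalizer by its population value. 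Your version is arguably cleaner on one point: the paper's first term needs a bound on $\lvert m_\beta/\sum_i\hat{w}(\textbf{x}_i,y_i) - 1/\mathbb{E}_{\cD_\beta}[w(\textbf{x},y)]\rvert$, but Lemma~\ref{lemma:A2} is stated for the true-weight sum $\sum_i w(\textbf{x}_i,y_i)$, a mismatch the paper silently elides and which your decomposition avoids. The price is a constant: in your term (I) the weight-estimation error enters twice (numerator and denominator of $A/B$), so the ratio-perturbation inequality yields roughly $18\epsilon\prod_{j}\beta_j^{-1}$ rather than $9\epsilon\prod_{j}\beta_j^{-1}$, whereas the paper's separation incurs the error only once; your phrase ``absorbing the leading constant'' is doing real work there, and carried out literally your route proves the lemma with $2\epsilon+18\epsilon\prod_{i=1}^{k}\beta_i^{-1}$ on the right-hand side (still sufficient downstream after adjusting the constant $11$). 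The residual looseness you flag in pinning $\hat{w}_i/w_i$ --- the $O(k\epsilon)$ slack from the $\widehat{\beta_0}$-power and from Lemma~\ref{lemma:B9} --- is present to the same or greater degree in the paper's own argument, so it is not a gap relative to the reference proof.
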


\begin{proof}
We can use the triangle inequality to expand this expression into three terms:
\begin{align*}
\lvert \frac{m_\beta}{\sum_{i=1}^{m_\beta} \hat{w}(\textbf{x}_i, y_i)}L_{S_\beta \widehat{\beta^{-1}}}(h) - L_\cD(h)\rvert &\leq \lvert \frac{m_\beta}{\sum_{i=1}^{m_\beta} \hat{w}(\textbf{x}_i, y_i)}L_{S_\beta \widehat{\beta^{-1}}}(h) - \frac{1}{\mathbb{E}_{\cD_\beta}[w(\textbf{x},y)]}L_{S_\beta \widehat{\beta^{-1}}}(h)\rvert \\& + \lvert \frac{1}{\mathbb{E}_{\cD_\beta}[w(\textbf{x},y)]}L_{S_\beta \widehat{\beta^{-1}}}(h) - \frac{1}{\mathbb{E}_{\cD_\beta}[w(\textbf{x},y)]}L_{S_\beta \beta^{-1}}(h)\rvert + \lvert \frac{1}{\mathbb{E}_{\cD_\beta}[w(\textbf{x}_i, y_i)]} L_{S_\beta \beta^{-1}}(h) - L_\cD(h) \rvert
\end{align*}  

Simplifying the expression above, we have (with probability $1-\delta$)
\begin{align*}
&\lvert \frac{m_\beta}{\sum_{i=1}^{m_\beta} \hat{w}(\textbf{x}_i, y_i)}L_{S_\beta \widehat{\beta^{-1}}}(h) - L_\cD(h)\rvert 
\\&\leq \lvert \frac{m_\beta}{\sum_{i=1}^{m_\beta} \hat{w}(\textbf{x}_i, y_i)} - \frac{1}{\mathbb{E}_{\cD_\beta}[w(\textbf{x},y)]} \rvert L_{S_\beta \widehat{\beta^{-1}}}(h)
+ \lvert L_{S_\beta \widehat{\beta^{-1}}}(h) - L_{S_\beta \beta^{-1}}(h)\rvert \frac{1}{\mathbb{E}_{\cD_\beta}[w(\textbf{x},y)]} 
+ \lvert \frac{1}{\mathbb{E}_{\cD_\beta}[w(\textbf{x}, y)]} L_{S_\beta \beta^{-1}}(h) - L_\cD(h) \rvert \\
&\leq \frac{\epsilon m_\beta}{\sum_{i=1}^{m_\beta} w(\textbf{x}_i, y_i)} L_{S_\beta \widehat{\beta^{-1}}}(h)
+ \lvert \frac{1}{m_\beta} \sum_{i=1}^{m_\beta} \mathbb{I}[h(\textbf{x}_i)\neq y_i] (\hat{w}(\textbf{x}_i, y_i) - w(\textbf{x}_i, y_i)\rvert \frac{1}{\mathbb{E}_{\cD_\beta}[w(\textbf{x},y)]} + \lvert \frac{1}{\mathbb{E}_{\cD_\beta}[w(\textbf{x}, y)]} L_{S_\beta \beta^{-1}}(h) - L_\cD(h) \rvert
\end{align*}

Now we need to bound each term in this sum, which is done using the results from lemmas~\ref{lemma:A2}-\ref{lemma:C2}.
\begin{align*}
&\leq 2\epsilon + 9\epsilon \beta_0^{k-2} \min_i {\beta_i}^{k-1}
\end{align*}  

\end{proof}

\mainTheorem*

\begin{proof}
To prove Theorem~\ref{theorem:mainTheorem}, we employ Lemmas~\ref{lemma:A2}-\ref{lemma:D1}. The result follows from choosing $\epsilon:=\frac{\epsilon'}{11 \beta_0^{k-2} \min_i {\beta_i}^{1-k} }$. To see this, note that $\frac{\epsilon'}{11 \beta_0^{k-2} \min_i {\beta_i}^{1-k} } \leq \frac{\epsilon'}{2 + 9  \beta_0^{k-2} \min_i {\beta_i}^{1-k} {\prod_{i=1}^{k} \beta_i}}$. We can then parameterize Lemma~\ref{lemma:D1} with $\epsilon'$ instead of $\epsilon$ and then replace $\epsilon$ with $\frac{\epsilon'}{11 \beta_0^{k-2} \min_i {\beta_i}^{1-k} }$ in our sample size bounds from the remaining lemmas, giving us:

\begin{align*}
\lvert \frac{m_\beta}{\sum_{i=1}^{m_\beta} \hat{w}(\textbf{x}_i, y_i)}L_{S_\beta \widehat{\beta^{-1}}}(h) - L_\cD(h)\rvert \leq \epsilon
\end{align*}
\end{proof}

\subsection{Proof of Theorem~\ref{theorem:secondTheorem}}
\secondTheorem*

\begin{proof}
We aim to demonstrate that the Intersectional Bias Learning Algorithm adheres to the Agnostic PAC learning definition. Let's begin by revisiting the definition and the theorem's guarantees.

\textbf{Agnostic PAC Learning Definition (Informal):}
A hypothesis class \( \mathcal{H} \) is agnostic PAC learnable if for every distribution over the features and labels (where labels are binary), and for \( \epsilon, \delta > 0 \), there exists a sample size \( m \) such that for any sample \( S \) of size at least \( m \), the algorithm produces a hypothesis \( h \) satisfying:
\[ \mathbb{P}_{(x,y) \sim \mathcal{D}}[h(x) \neq y] \leq \min_{h' \in \mathcal{H}} \mathbb{P}_{(x,y) \sim \mathcal{D}}[h'(x) \neq y] + \epsilon \]
with probability at least \( 1-\delta \).

The left side of the definition represents the error of the hypothesis \( h \) on the entire distribution \( \mathcal{D} \), while the right side represents the error of the best hypothesis in the class \( \mathcal{H} \) on the distribution \( \mathcal{D} \).

From Theorem~\ref{theorem:mainTheorem}, we see that by using the following sample sizes:
\begin{align*}
    &m_\beta \geq \frac{11^2}{{2\epsilon^2 \prod_{j=1}^{k} \beta_j ^2}}\ln{\frac{4|H|(k+3)}{\delta}}\\ 
    &m_{\beta_i} \geq \frac{3 \cdot 11^2}{p_{\beta_i}\epsilon^2 \prod_{j=1}^{k} \beta_j ^2} \ln \dfrac{4(k+3)}{\delta}\\
    &m_i \geq \frac{3 \cdot 11^2}{(\epsilon/2)^2 \prod_{j=1}^{k} \beta_j ^2 p_i}\ln\frac{4(k+3)}{\delta}
\end{align*}
we can assure that with probability $1-\delta$, our estimate of the true loss errs by at most $\epsilon$.

\[ \lvert \frac{m_\beta}{\sum_{i=1}^{m_{\beta}} \hat{w}(\textbf{x}_i, y_i)}L_{S_\beta \widehat{\beta^{-1}}}(h) - L_\cD(h)\rvert \leq \epsilon/2 \]

Let $h^* = \text{argmin}_h \frac{m_\beta}{\sum_{i=1}^{m_{\beta}} \hat{w}(\textbf{x}_i, y_i)}L_{S_\beta \widehat{\beta^{-1}}}(h)$. We will now break our analysis into two cases.

\textit{Case 1:}

If $h^*$ satisfies
\[\frac{m_\beta}{\sum_{i=1}^{m_{\beta}} \hat{w}(\textbf{x}_i, y_i)}L_{S_\beta \widehat{\beta^{-1}}}(h^*) \leq \min_{h' \in \mathcal{H}} L_{\mathcal{D}}\left(h'\right) + \epsilon
\] the proof is complete.

\textit{Case 2:} Assume for contradiction that 
\[\frac{m_\beta}{\sum_{i=1}^{m_{\beta}} \hat{w}(\textbf{x}_i, y_i)}L_{S_\beta \widehat{\beta^{-1}}}(h^*) > \min_{h' \in \mathcal{H}} L_{\mathcal{D}}\left(h'\right) + \epsilon
\]

However, due to the bounds established in Theorem ~\ref{theorem:mainTheorem}, there must exist another $h' \in \mathcal{H}$ such that

\[ 
\lvert \frac{m_\beta}{\sum_{i=1}^{m_{\beta}} \hat{w}(\textbf{x}_i, y_i)}L_{S_\beta \widehat{\beta^{-1}}}(h') - L_\cD(h')\rvert \leq \epsilon 
\]

implying $h'$ would have a lower weighted empirical risk on the biased distribution than $h^*$ contradicting the definition of $h^*$. Therefore, this can only happen with probability $\delta$, completing the proof. Thus under the conditions stated in Theorem ~\ref{theorem:mainTheorem} the Intersectional Bias Learning Algorithm produces a hypothesis $h$ that satisfies the requirements of agnostic PAC learnability with underrepresentation and intersectional bias, establishing the class $\mathcal{H}$ as agnostically PAC learnable under these conditions.
 
\end{proof}

\section{Additional Experiments}
Here, we show the performance of models trained with our reweighting approach in terms of F1 Score, Precision, and Recall. 

\begin{figure}[h]
\centering
\includegraphics[width=0.24\textwidth]{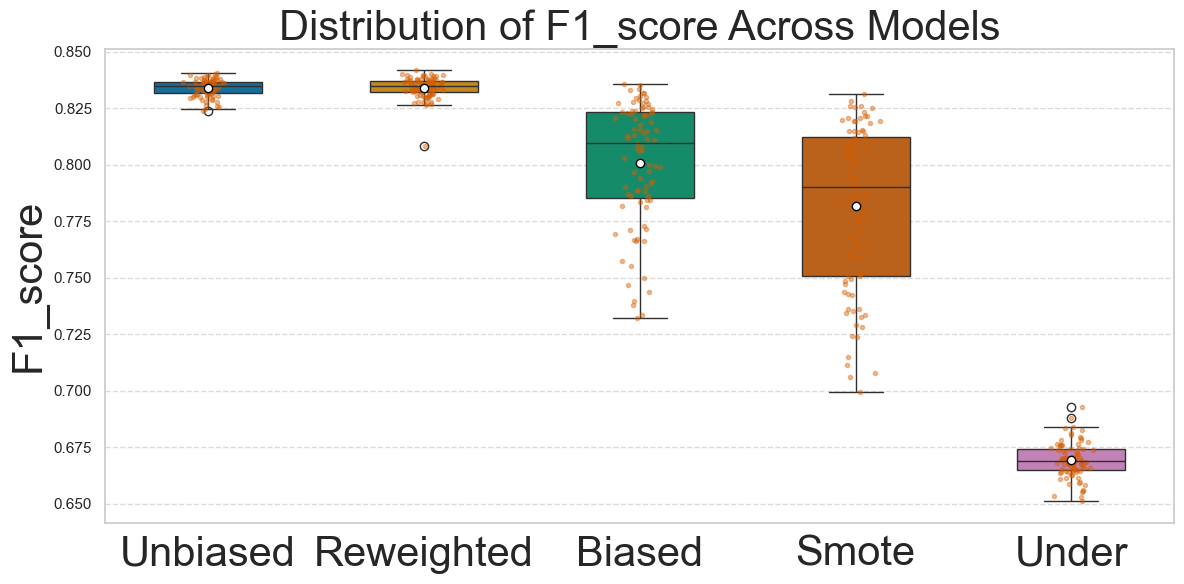}
\includegraphics[width=0.24\textwidth]{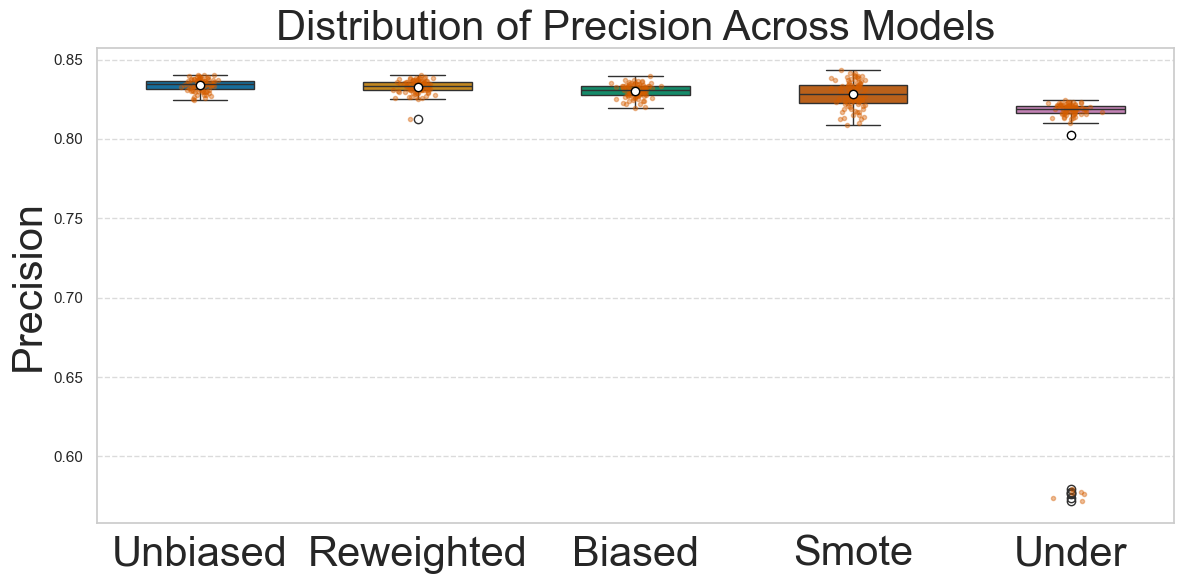}
\includegraphics[width=0.24\textwidth]{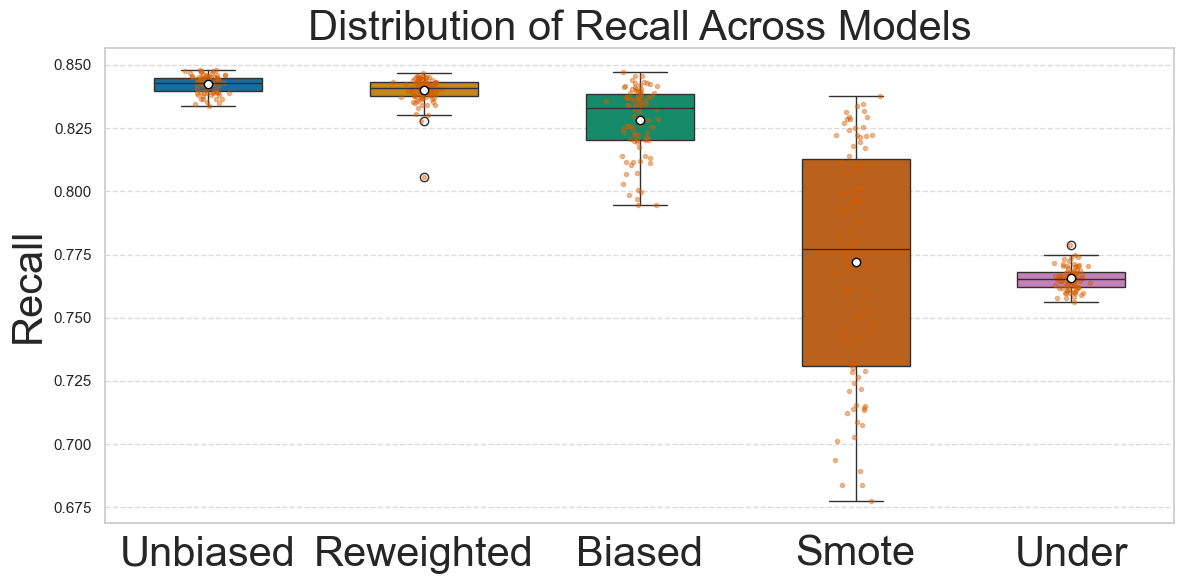}
\caption{Results on Adult Dataset}
\end{figure}

\begin{figure}[h]
\centering
\includegraphics[width=0.24\textwidth]{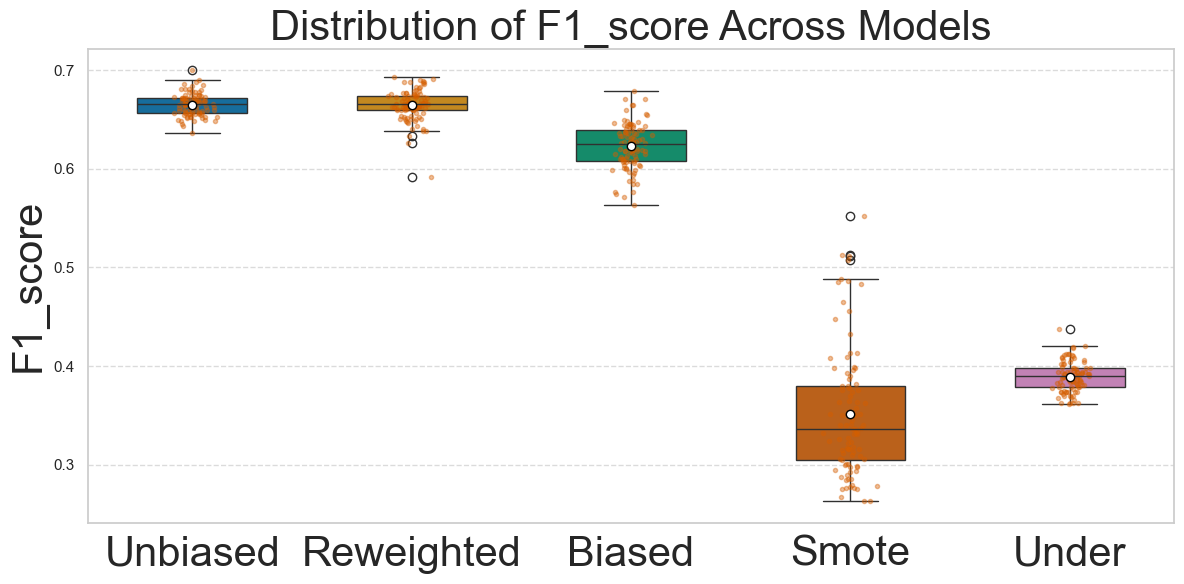}
\includegraphics[width=0.24\textwidth]{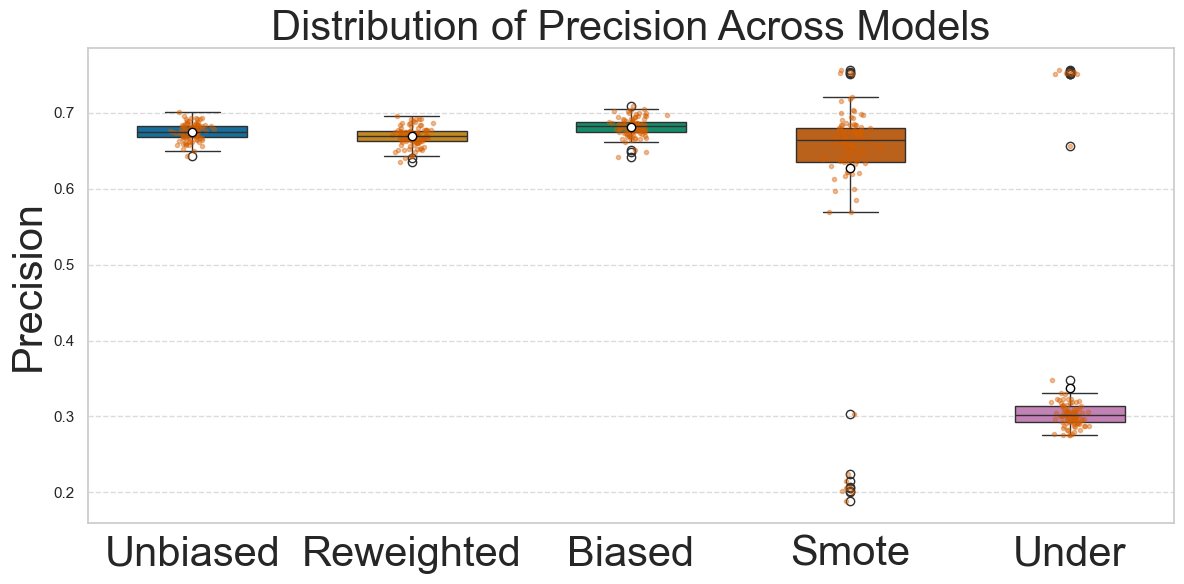}
\includegraphics[width=0.24\textwidth]{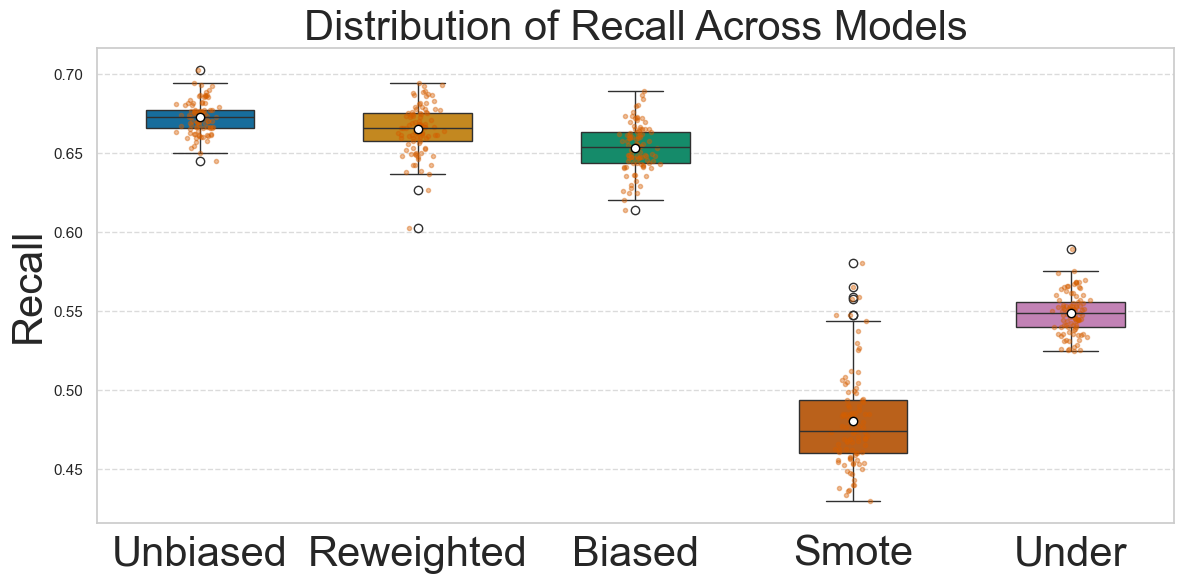}
\caption{Results on COMPAS Dataset}
\end{figure}

\begin{figure}[h]
\centering
\includegraphics[width=0.24\textwidth]{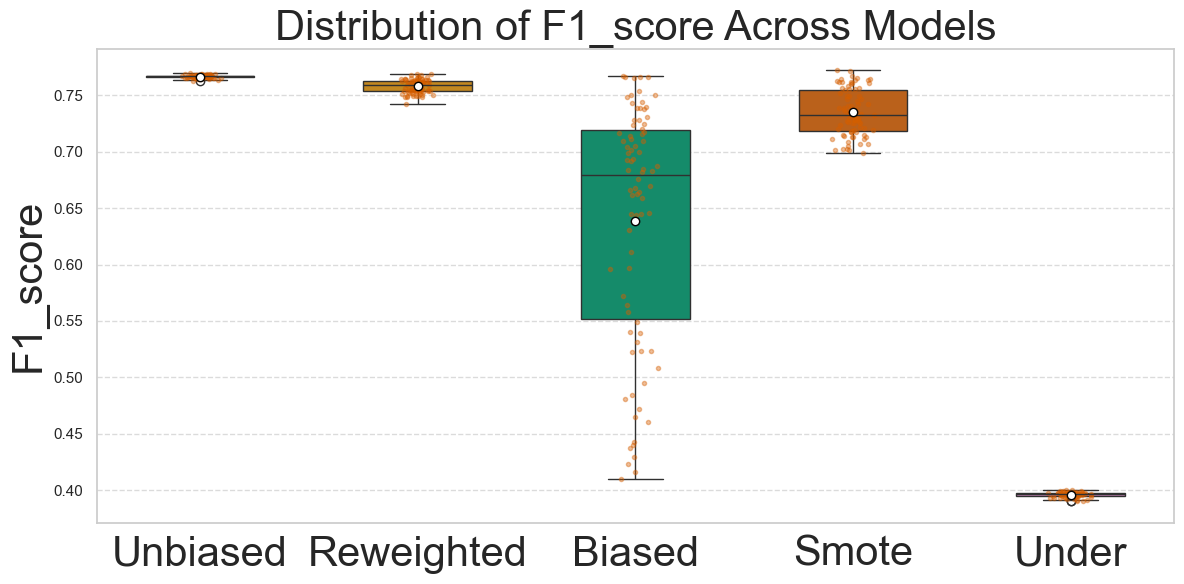}
\includegraphics[width=0.24\textwidth]{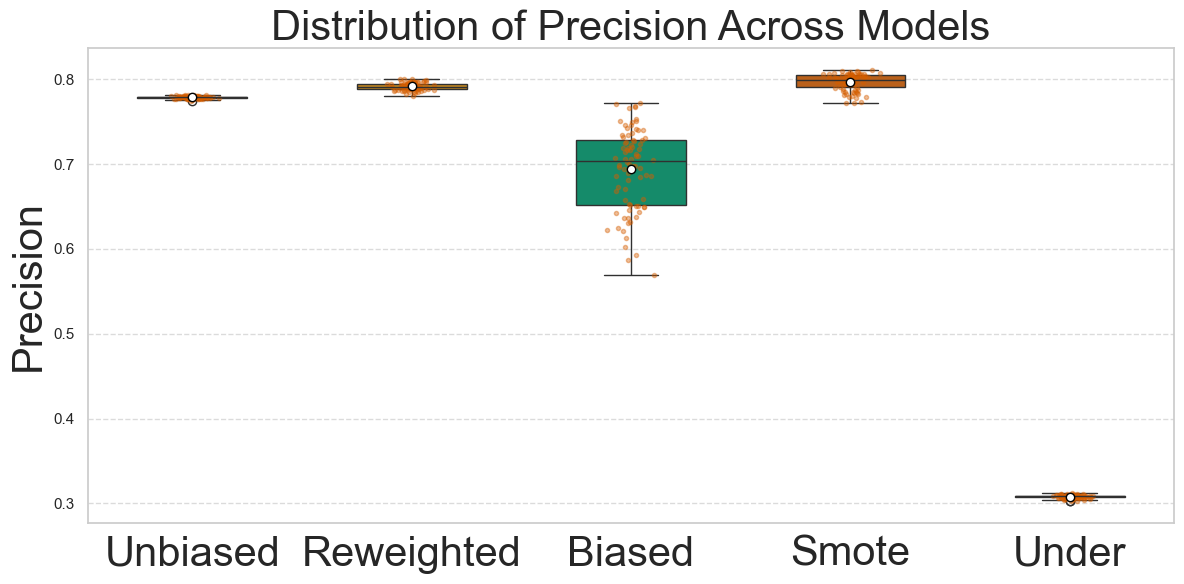}
\includegraphics[width=0.24\textwidth]{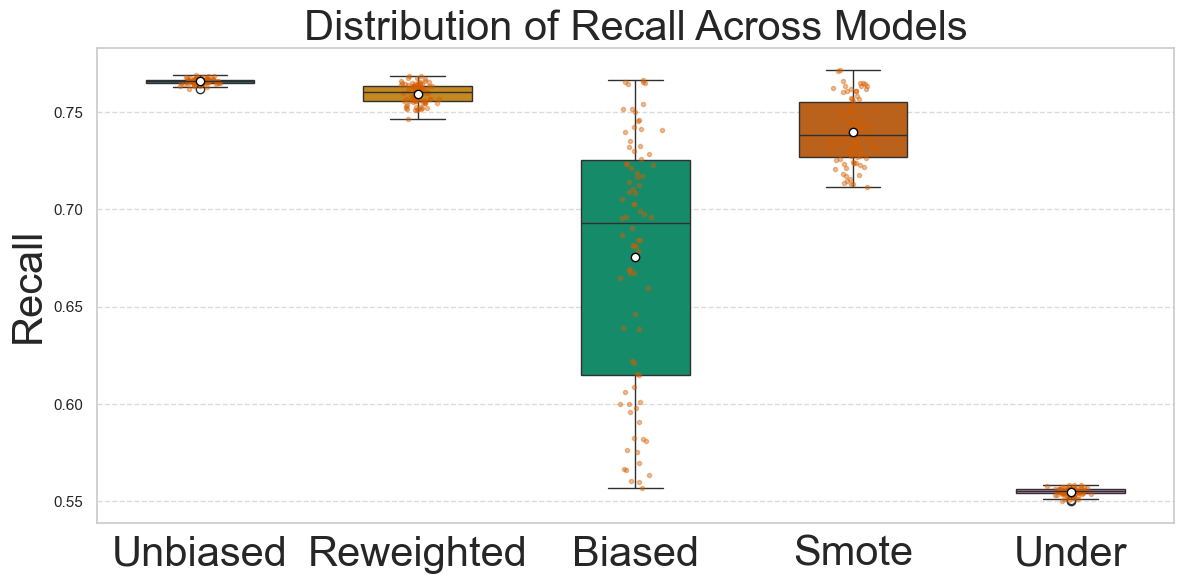}
\caption{Results on ACS Employment Dataset}
\end{figure}

\begin{figure}[h]
\centering
\includegraphics[width=0.24\textwidth]{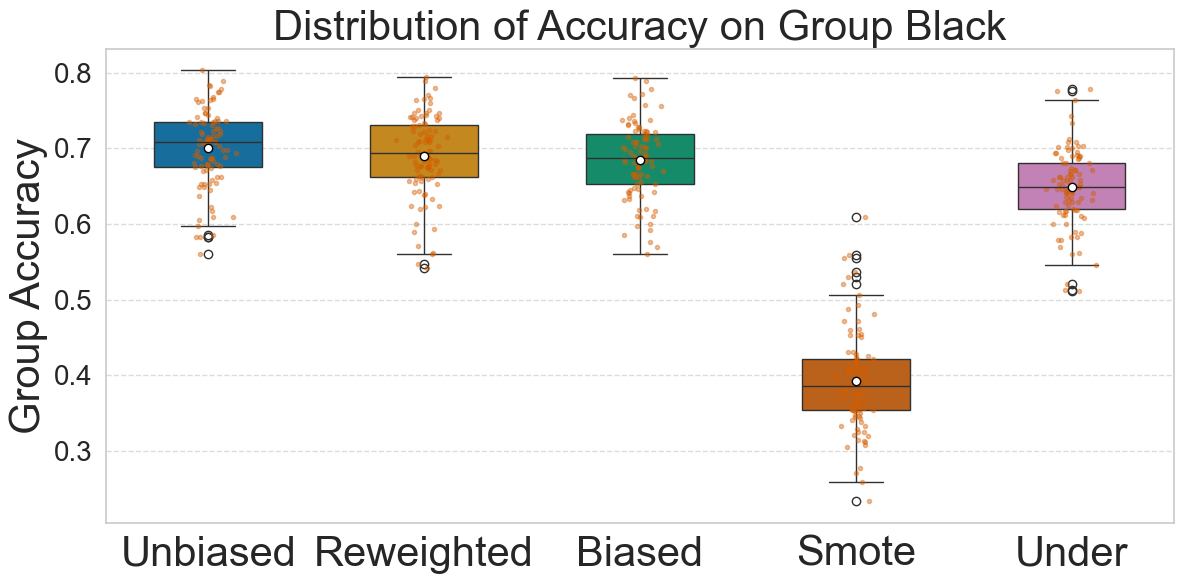}
\includegraphics[width=0.24\textwidth]{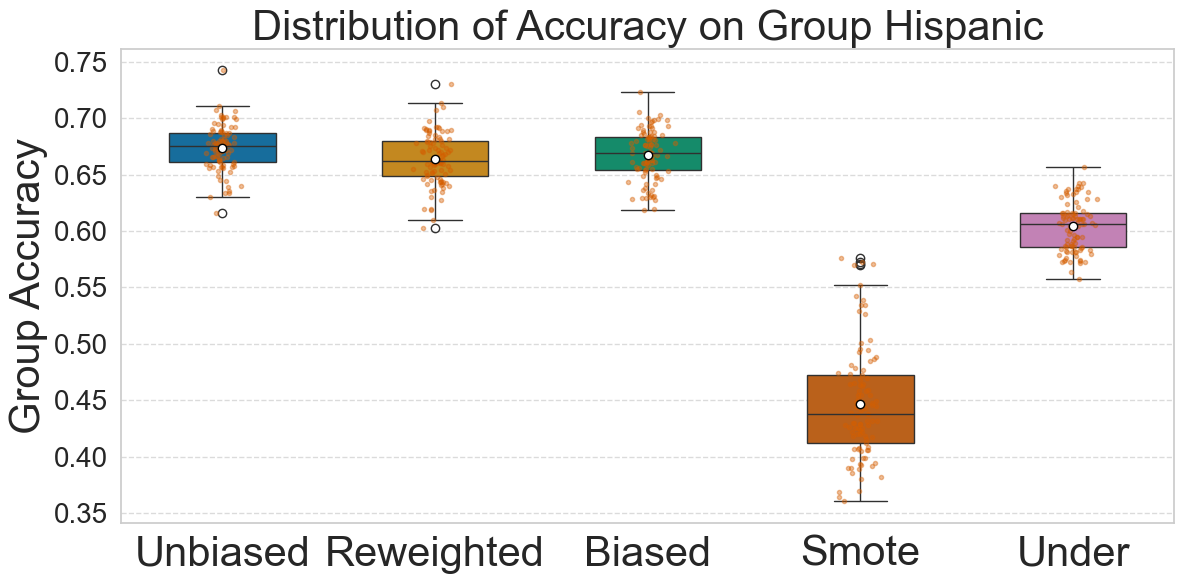}
\includegraphics[width=0.24\textwidth]{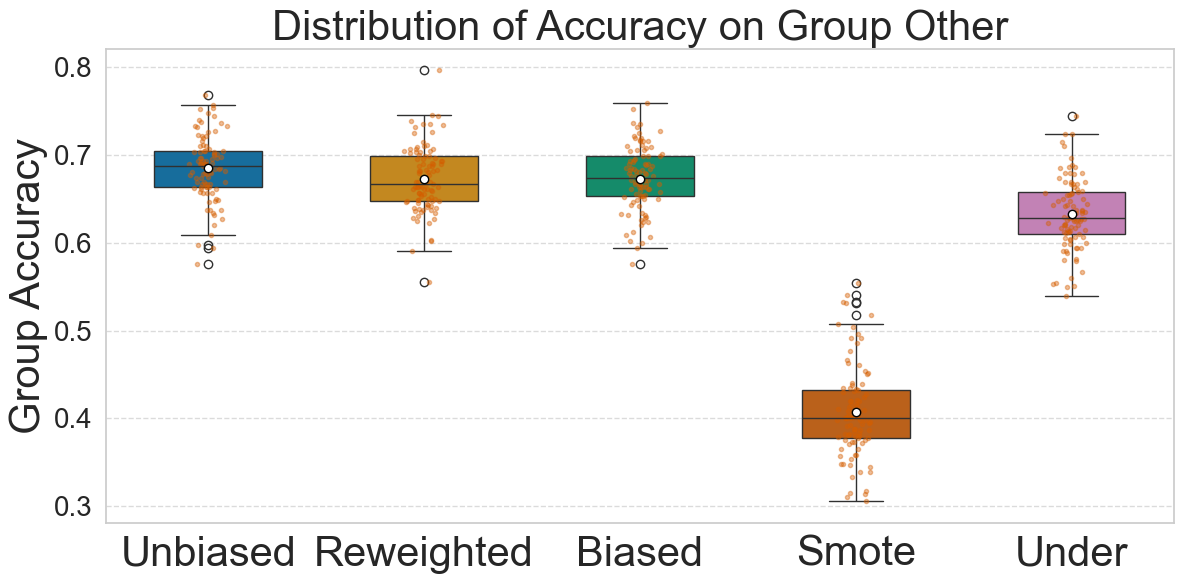}
\includegraphics[width=0.24\textwidth]{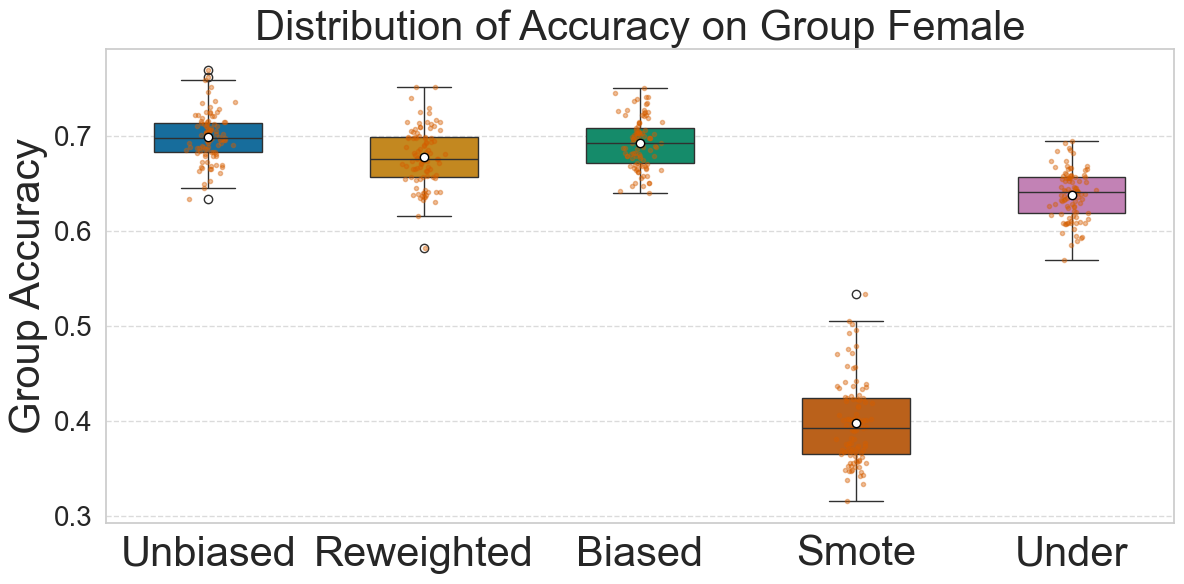}
\caption{Group Accuracies on COMPAS dataset}
\end{figure}

\subsection{Further Discussion of Figure~\ref{fig:groups_adult}}

\paragraph{\textbf{Pacific Islander (a)}}
For the Pacific Islander group, the Reweighted model demonstrates a median accuracy that closely aligns with the Unbiased model, suggesting that reweighting effectively corrects for the biases in this group. The Biased model shows a considerable decrease in median accuracy, pointing to the presence of bias in the original data. The SMOTE approach appears to have a slightly improved median accuracy over the Biased model but does not reach the level of the Reweighted model. The Under-sampling method exhibits a broad range of accuracies, indicated by the extensive spread of the interquartile range, implying inconsistency in the model's performance.

\paragraph{\textbf{Amer. Indian / Eskimo (b)}}
The Amer. Indian / Eskimo group's accuracy distribution reveals that the Reweighted model maintains a high median accuracy, very close to that of the Unbiased model, indicating the method's reliability in this group. The Biased model's performance is visibly lower, underscoring the impact of bias. Both SMOTE and Under-sampling show lower median accuracies compared to the Reweighted model, with SMOTE displaying a wide range of outcomes as indicated by the presence of several outliers.

\paragraph{\textbf{Other Race (c)}}
For individuals classified as Other Race, the Reweighted model again shows a median accuracy that competes closely with the Unbiased model, supporting the effectiveness of the reweighting technique. The Biased model falls short in median accuracy, and the SMOTE method, while somewhat effective, does not provide consistency, as evidenced by outliers. The Under-sampling method has the widest interquartile range, suggesting high variability in model performance.

\paragraph{\textbf{Black (d)}}
In the Black group, the Reweighted model approximates the Unbiased model's median accuracy effectively, suggesting that reweighting is a robust method to correct biases affecting this group. The Biased model underperforms in comparison, and the SMOTE method, despite a few outliers, shows an overall improvement over the Biased model. The Under-sampling method's accuracy distribution is wide, indicating variable outcomes.

\paragraph{\textbf{Female (e)}}
For the Female group, the Reweighted model's median accuracy is on par with the Unbiased model, highlighting the reweighting technique's capability to mitigate gender bias in predictions. The Biased model has a significantly lower median accuracy. The SMOTE method's median accuracy is slightly better than the Biased model but does not achieve the level of the Reweighted model. The Under-sampling method shows the least favorable median accuracy and the most considerable spread, which could indicate a failure to address gender biases effectively.

\subsection{Further Discussion of Figure~\ref{fig:groups_acs}}

\paragraph{\textbf{Black (a)}}
For the Black group, the Reweighted model achieves a median accuracy competitive with the Unbiased model, suggesting effective bias mitigation. The Biased model exhibits reduced accuracy, highlighting the impact of initial data bias. SMOTE and Under-sampling yield lower accuracies, with SMOTE showing a considerable spread, indicating variable performance.

\paragraph{\textbf{Amer. Indian (b)}}
In the Amer. Indian group, the Reweighted model closely approximates the accuracy of the Unbiased model, implying that the reweighting process is proficient in addressing biases in this group. The Biased model’s accuracy is notably lower, while SMOTE and Under-sampling again demonstrate variability and do not reach the performance levels of the Reweighted model.

\paragraph{\textbf{Alaska Native (c)}}
The Alaska Native group's accuracy distribution suggests that the Reweighted model closely mirrors the Unbiased model's performance, with a high median accuracy. The Biased model shows a significant decrease in accuracy. SMOTE and Under-sampling approaches do not achieve the consistency or accuracy of the Reweighted model, as indicated by their wider interquartile ranges.

\paragraph{\textbf{Other Native (d)}}
For the Other Native group, the Reweighted model's median accuracy is very close to that of the Unbiased model. The Biased model underperforms relative to the Reweighted model. Both SMOTE and Under-sampling approaches fail to match the Reweighted model's accuracy level, with SMOTE, in particular, displaying a broad range of outcomes.

\paragraph{\textbf{Asian (e)}}
In the Asian group, the Reweighted model again closely matches the Unbiased model in median accuracy, suggesting that the reweighting technique effectively corrects biases within this demographic. The Biased model lags in performance, and while SMOTE improves upon the Biased model, it does not achieve the consistency of the Reweighted model. Under-sampling shows the most considerable variance in outcomes.

\paragraph{\textbf{Native Hawaiian (f)}}
For Native Hawaiian individuals, the Reweighted model's performance is very close to the Unbiased model, indicating successful bias correction. The Biased model's performance is lower, while SMOTE and Under-sampling show lower and more variable accuracies.

\paragraph{\textbf{Female (g)}}
In the Female group, the Reweighted model's median accuracy is comparable to that of the Unbiased model, illustrating the reweighting technique's ability to mitigate gender bias. The Biased model's lower accuracy highlights initial biases, and neither SMOTE nor Under-sampling consistently reaches the Reweighted model's accuracy level.

\end{document}